\theoremstyle{definition}
\newtheorem{example}{\protect\examplename}
\theoremstyle{definition}
\newtheorem{defn}{\protect\definitionname}
\theoremstyle{plain}
\newtheorem{lem}{\protect\lemmaname}
\theoremstyle{plain}
\newtheorem{thm}{\protect\theoremname}
\theoremstyle{plain}
\theoremstyle{plain}
\newtheorem{claim}{Claim}
\providecommand{\definitionname}{Definition}
\providecommand{\examplename}{Example}
\providecommand{\lemmaname}{Lemma}
\providecommand{\propositionname}{Proposition}
\providecommand{\theoremname}{Theorem}
\providecommand{\tabularnewline}{\\}
\global\long\def\mat#1{\boldsymbol{#1}}
\global\long\def\optr{\mbox{tr}}
\global\long\def\opdiag{\mbox{diag}}
\global\long\def\t#1{\widetilde{#1}}
\global\long\def\h#1{\widehat{#1}}
\global\long\def\abs#1{\left\lvert #1\right\rvert }
\global\long\def\norm#1{\lVert#1\rVert}
\global\long\def\set#1{\left\{  #1\right\}  }
\global\long\def\bydef{\overset{\text{def}}{=}}
\global\long\def\EE{\mathbb{E}\,}
\global\long\def\EEk{\mathbb{E}_{k}\,}
\global\long\def\R{\mathbb{R}}
\global\long\def\E{\mathbb{E}}
\global\long\def\va{\boldsymbol{a}}
\global\long\def\vc{\boldsymbol{c}}
\global\long\def\vg{\boldsymbol{g}}
\global\long\def\vq{\boldsymbol{q}}
\global\long\def\vr{\boldsymbol{r}}
\global\long\def\vs{\boldsymbol{s}}
\global\long\def\vu{\boldsymbol{u}}
\global\long\def\vv{\boldsymbol{v}}
\global\long\def\vw{\boldsymbol{w}}
\global\long\def\vx{\boldsymbol{x}}
\global\long\def\vy{\boldsymbol{y}}
\global\long\def\vz{\boldsymbol{z}}
\global\long\def\mA{\boldsymbol{A}}
\global\long\def\mI{\boldsymbol{I}}
\global\long\def\mJ{\boldsymbol{J}}
\global\long\def\mL{\boldsymbol{L}}
\global\long\def\mM{\boldsymbol{M}}
\global\long\def\mP{\boldsymbol{P}}
\global\long\def\mR{\boldsymbol{R}}
\global\long\def\mS{\boldsymbol{S}}
\global\long\def\mU{\boldsymbol{U}}
\global\long\def\mV{\boldsymbol{V}}
\global\long\def\mX{\boldsymbol{X}}
\global\long\def\calG{\mathcal{G}}
\global\long\def\calD{\mathcal{D}}
\global\long\def\a{\alpha}
\global\long\def\e{\epsilon}
\global\long\def\T{\top}
\global\long\def\nt{\left\lfloor nt\right\rfloor }
\global\long\def\tvy{\t{\boldsymbol{y}}}
\global\long\def\ttau{\t{\tau}}
\global\long\def\tf{\t f}
\global\long\def\tva{\t{\boldsymbol{a}}}
\global\long\def\tvc{\t{\boldsymbol{c}}}
\global\long\def\hw{\widehat{w}}
\global\long\def\hvu{\widehat{\vu}}
\global\long\def\hvv{\widehat{\vv}}
\title{A Solvable High-Dimensional Model of GAN}
\author{
  Chuang Wang$^{1,2}$\\
     \texttt{ wangchuang@ia.ac.cn}
  \And
   Hong Hu$^2$ \\
    \texttt{ honghu@g.harvard.edu}
   \And
    Yue M. Lu$^2$
   \\
\texttt{ yuelu@seas.harvard.edu} 
   \AND
   \\
     1. State Key Laboratory of Pattern Recognition, Institute of Automation,\\
     Chinese Academy of Science,
  95 Zhong Guan Cun Dong Lu, Beijing 100190, China\\
  2. John A. Paulson School of Engineering and Applied Sciences, Harvard University\\
  33 Oxford Street, Cambridge, MA 02138, USA 
}
\begin{document}

\maketitle

\begin{abstract}
We present a theoretical analysis of the training process for a single-layer GAN fed by high-dimensional input data. The training dynamics of the proposed model at both microscopic and macroscopic scales can be exactly analyzed in the high-dimensional limit. In particular, we prove that the macroscopic quantities measuring the quality of the training process converge to a deterministic process characterized by an ordinary differential equation (ODE), whereas the microscopic states containing all the detailed weights remain stochastic, whose dynamics can be described by a stochastic differential equation (SDE). This analysis provides a new perspective different from recent analyses in the limit of small learning rate, where the microscopic state is always considered deterministic, and the contribution of noise is ignored. From our analysis, we show that the level of the background noise is essential to the convergence of the training process: setting the noise level too strong leads to failure of feature recovery, whereas setting the noise too weak causes oscillation.  Although this work focuses on a simple copy model of GAN, we believe the analysis methods and insights developed here would prove useful in the theoretical understanding of other variants of GANs with more advanced training algorithms.
\end{abstract}

\section{Introduction}

A generative adversarial network (GAN) \cite{Goodfellow2014a} seeks to learn a high-dimensional probability distribution from samples. 
 While there have been numerous advances on the application front \cite{Arjovsky2017,Lucic2017,Ledig2016,Isola2016,Reed2016}, considerably less is known about the underlying theory and conditions that can explain or guarantee the successful trainings of GANs.

Recently, it has been a very active area of research to study either the equilibrium properties \cite{Arora2017b,Arjovsky2017a,Feizi2017}
or the training dynamics \cite{Li2017b,Mescheder2018}. Specifically, there is a line of  works studying the dynamics of the gradient-based training algorithms \emph{e.g.},  \cite{Mescheder2018,Mescheder2017,Nagarajan2017a,Roth2017,Heusel2017,Mazumdar2019}.
The basic idea is the following. The evolution of the learnable parameters in the training dynamics can be considered as a discrete-time process.  
With a proper time scaling, this discrete-time process converges to
a deterministic continuous-time process as the learning rates tend to 0, which is characterized by an ordinary differential equation (ODE). 
By studying local stability of the ODE's fixed points, \cite{Mescheder2017} shows that oscillation in the training algorithm is due to the eigenvalues of the Jacobian of the gradient vector field with zero real part and large imaginary part. Due to this fact, various stabilization approaches are proposed, for example adding additional regularizers \cite{Nagarajan2017a,Roth2017}, and using two timescale \cite{Heusel2017} training. Very recently, \cite{Mazumdar2019} argues that those stabilization techniques may encourage the algorithms to converge non-Nash stationary points.  
All above works consider a small-learning-rates limit, where the limiting process is always deterministic. The stochasticity and the effect of the noise is essentially ignored, which may not reflect practical situations.
Thus, a new analysis paradigm to study the dynamics with the consideration of the intrinsic stochasticity is needed.

In this paper, we present a \emph{high-dimensional} and \emph{exactly solvable} model of GAN. Its dynamics can be precisely characterized at both macroscopic and  microscopic scales, where the former is deterministic and the latter remains stochastic.  
Interestingly, our theoretical analysis shows that injecting additional noise can stabilize the training.
Specifically, our main technical contributions are twofold:
\begin{itemize}
\item We present an asymptotically exact analysis of the training process of the proposed GAN model. Our analysis is carried out on both the \emph{macroscopic} and the \emph{microscopic} levels.  The macroscopic state measures the overall performance of the training process, whereas the microscopic state contains all the detailed weights information.
In the high-dimensional limit ($n \to \infty$), we show that the former converges to a deterministic process governed by an ordinary differential equation (ODE), whereas the latter
stays stochastic described by a stochastic differential equation (SDE).
\item 
We show that depending on the
choice of the learning rates and the strength of noise, the training process can reach either a successful, a failed,
 an oscillating, or a mode-collapsing phase. By studying the stabilities of the fixed points of the limiting ODEs, we precisely characterize when each phase takes place. The analysis reveals a condition on the learning rates and the noise strength for successful training.
We show that the level of the background noise is essential to the convergence of the training process: setting the noise level too strong (small signal-to-noise ratio) leads to failure of feature recovery, whereas setting the noise too weak (large signal-to-noise ratio) causes oscillation.
\end{itemize}

Our work builds upon a general analysis framework \cite{Wang2017c} for studying the scaling limits of high-dimensional exchangeable stochastic processes with applications to nonlinear regression problems. Similar techniques have also been used in the literature to study Monte Carlo methods \cite{Roberts1997a}, online perceptron learning \cite{Saad1995,Biehl1995}, online sparse PCA \cite{Wang2016}, subspace estimation \cite{Wang2018}, online ICA \cite{Wang2017} and more recently, the supervised learning of two-layer neural networks \cite{Mei2018a}, but to our best knowledge, 
this technique has not yet been used in analyzing GANs.

The rest of the paper is organized as follows. We present the proposed GAN model and the associated training algorithm in Section \ref{sec:settings}. 
Our main results are presented in Section \ref{sec:dyn}, where we show that the macroscopic and microscopic dynamics of the training process converge to their respective limiting processes that are characterized by an ODE and SDE, respectively. In Section \ref{sec:phase}, we analyze the stationary solutions of the limiting ODEs and precisely characterizes the long-term behaviors of the training process. We conclude in Section \ref{sec:con}.

\section{Formulations} \label{sec:settings}

In this section, we introduce the proposed  GAN model and specify the associated training algorithm.

\paragraph{Model for the real data.}
In order to establish the theoretical analysis,  we first impose a model for the probability distribution from which we draw our real data samples.
We assume that the
real data $ \vy_{k} \in \R^n $, $k=0,1,\ldots$ are drawn according to the following generative model:
\begin{equation}
\vy_{k}=\calG(\vc_{k,} \va_{k}; \mU, \eta_\text{T})\bydef\mU \vc_{k}+ \sqrt{\eta_\text{T}}\va_{k},
 \label{eq:real-G}
\end{equation}
where $ \mU \in \R^{n\times d}$ is a deterministic unknown feature matrix with $d$ features; $\vc_{k}\in \R^{d} $ is a random vector drawn from an unknown distribution $\mathcal{P}_{\vc}$;
 $ \va_{k} $  is an $n$-dimensional  random vector acting as the background noise; and $\eta_\text{T}$ is a parameter to control the strength of noise. 
Without loss of generality \footnote{If $\mU$  is not orthogonal, we can rewrite $\mU \vc$ in \eqref{eq:real-G} as $(\mU \mR)(\mR^{-1} \vc)$, where $\mR$ is a matrix that orthogonalizes and normalizes the columns of $\mU$. We can then study an equivalent system where the new feature vector is $\mR^{-1} \vc$.},
we assume $ \mU^\T\mU=\mI_d$, where $\mI_d$ is the $d\times d$ identity matrix.

This generative model, referred to as the spiked covariance model \cite{Johnstone2009} in the literature, is commonly used in the theoretical study of principal component analysis (PCA). We note that this model is not a trivial task for PCA even when $d=1$ if the variance of the noise $\va_k$ is a non-zero constant. As proved in \cite{Johnstone2009},  the best estimator can not perfectly recover the signal $\mU$ given an $\mathcal{O}(n)$ number of samples ${\vy_k}$. Thus, it is of sufficient interest to investigate whether a GAN can retrieve
informative results for the principal components in the same scaling limit. 
\paragraph{The GAN model} The GAN we are going to analyze is defined as follows. We assume that the generator  $\mathcal{G}$ has the same linear structure as the real data model \eqref{eq:real-G} given above:
\begin{equation}
\tvy_{k}=\calG( \tvc_{k}, \tva_{k}; \mV,\eta_\text{G})
 \label{eq:gen-model}
\end{equation}
but the parameters are different. Here, $ \tvy_k$ denotes a fake sample produced by the generator; $\tva_k$ is an $n$-dimensional random noise vector; the random variable $ \tvc_{k}$ is drawn from a fixed distribution $\mathcal{P}_{ \tvc}$; $\eta_\text{G}$ is the noise strength; and the matrix $ \mV\in \R^{n\times d}$  represents the parameters of the generator. (In an ideal case in which the generator learns the underlying true probability distribution perfectly, we have $\mV = \mU$.) Throughout the paper, we follow the notational convention that all the symbols that are decorated with a tilde (\emph{e.g.}, $ \tvy_{k}$, $ \tvc_{k}$,
$ \tva_{k}$) denote quantities associated with the generator.

We define the discriminator $\mathcal{D}$ of our GAN model as
\[
\calD( \vy; \vw)\bydef \h{D}( \vy^{ \T} \vw).
\]
Here, $ \vy$ is an input vector, which can be either the real data $ \vy_{k}$ from (\ref{eq:real-G})
or the fake one $ \tvy_{k}$ from \eqref{eq:gen-model}; $ \h{D}: \R \mapsto \R$ can be any function; and the vector $ \vw \in \R^{n}$
represents the parameters associated with the discriminator.  
Later, we will show that the generator can learn multiple features even though the discriminator only has one feature vector $\vw$. Discriminators with multiple features can also be analyzed in a similar way, but in this paper we consider the single-feature discriminator for simplicity.

\paragraph{The training algorithm.} The proposed GAN model has two set of parameters $\mV$ and $\vw$  to be learned from the data. The training process is formulated as the following MinMax problem 
\begin{equation}
\begin{aligned}
 \min_{ \mV} \max_{ \vw}\,
 \E_{ \vy \sim \mathcal{P}( \vy; \mU)} \E_{\t \vy \sim \t{\mathcal{P}}(\tvy,\mV)}\ 
 \mathcal{L}(\vy, \tvy; \vw),
 \end{aligned}
\label{eq:minmax}
\end{equation}
where the two probability distributions $\mathcal{P}( \vy; \mU)$ and $\t{\mathcal{P}}( \tvy; \mV)$ represent the distributions of the real data $ \vy$ and the fake data $\tvy$
as specified by \eqref{eq:real-G} and $\eqref{eq:gen-model}$ respectively,
and 
\begin{equation}\label{eq:def-obj}
\begin{aligned}
 \mathcal{L}(\vy, \tvy; \vw)
 \bydef 
 &
 F( \h{D}(  \vy^\T \vw))
 -  \t F(  \h D( \tvy^\T \vw)) 
  - \tfrac{ \lambda}{2} H( \vw^\T \vw)+ \tfrac{ \lambda}{2}  \optr \big ( H( \mV^\T \mV ) \big)
\end{aligned}
\end{equation}
with $F(\cdot)$ and $ \t F(\cdot)$ being two functions that quantify the performance of the discriminator and $\lambda > 0$ being a constant. The function $H(\cdot)$  acts as a regularization term introduced to control the magnitude of the parameters $\vw$ and $\mV$. It can be an arbitrary real-valued function, which is applied element-wisely if the input is a matrix.

We consider a standard training algorithm that uses the vanilla stochastic
gradient descent/ascent (SGDA) to seek a solution of \eqref{eq:minmax}. 
To simplify the theoretical analysis, we consider an online (\emph{i.e.}, streaming) setting where each data sample $\vy_k$
is used only once. 
At step $k$, the model parameters $\vw_k$ and $\mV_k$ are updated using 
a new real sample $\vy_k$ and two fake samples $\tvy_{2k}$ and $\tvy_{2k+1}$, according to
\begin{equation} \label{eq:sgd}
\begin{aligned}
\vw_{k+1}  &= \vw_{k}+ \tfrac{\tau}{n}  \nabla_{\vw_k} \mathcal{L}(\vy_k, \tvy_{2k};\vw_k)
\\
\mV_{k+1}  &=  \mV_{k}-\tfrac{\t \tau}{n} \nabla_{\mV_k} \mathcal{L}\big(\vy_k,\mathcal{G}( \tvc_{2k+1},\tva_{2k+1};\mV_k;\eta_\text{G}); \vw_k\big),
\end{aligned}
\end{equation}
where $\tvc_{2k+1}, \t \va_{2k+1}$ are random variables that generates the fake sample
 $\t \vy_{2k+1}$ according to \eqref{eq:gen-model}.
The two parameters $ \tau$ and $ \t{ \tau}$ in the above expressions control
the learning rates of the discriminator and the generator, respectively. In \eqref{eq:sgd},
we only consider a single-step update for $ \vw_{k}$. This is a special
case of Algorithm 1 in \cite{Goodfellow2014a} with the batch-size $m$ set to 1. We note that the analysis presented in this paper can be naturally extended to the mini-batch case where $m$ is a finite number.


\begin{example} \label{ex:PCA}
We define $F( \h{D}(x))= \t F( \h D(x))= x^{2}/2$, and the regularizer function $H(\mat{\mA})=  \log \cosh(\mA-\mI)$, where $\mI$ is the identity matrix with the same dimension of $\mA$, and the function $\log \cosh(\cdot)$ transforms the input matrix element-wisely. We use this specific regularizer to control the magnitude of the model parameters $\mV$ and $\vw$. In practice, any convex function with its minimum reached at zero would be fine. Our choice $\log \cosh(\mA-\mI)$ here is 
is just a convenient special case since its derivative $H^\prime (x) = \tanh(x)$ is smooth and bounded.
Furthermore, we set the regularization parameter $\lambda \to \infty$, the original problem \eqref{eq:minmax} becomes a constrained MinMax problem
\[
\min_{\opdiag(\mV^\T \mV)=\mI_d} \max_{\norm{\vw}=1} \E_{ \vy \sim \mathcal{P}} \E_{\t \vy \sim \t{\mathcal{P}}} \left[(\vy^\T \vw)^2 - (\t \vy^\T \vw)^2\right],
\] 
in which the diagonal operation $\opdiag(\mA)$ returns a matrix where the diagonal entries are the same as $\mA$ and the off-diagonal entries are all zero. The condition $\opdiag(\mV^\T \mV)=\mI_d$  ensures  that each column vector of $\mV$ is normalized. \end{example}

\section{Dynamics of the GAN} \label{sec:dyn}
\begin{defn} 
Let $ \mX_{k} \bydef [ \mU, \mV_{k}, \vw_{k}] \in \R^{n \times (2d+1)}$. We call $ \mX_{k}$ the \emph{microscopic
state} of the training process at iteration step $k$.
\end{defn}
The microscopic state $ \mX_{k}$ contains all the information about
the training process. In fact, the sequence $ \{ \mX_{k} \}_{k=0,1,2, \ldots}$ forms a Markov
chain on $ \R^{n \times (2d+1)}$. This can be easily verified from the update rule of $ \mX_{k}$ as defined in \eqref{eq:sgd}, in which the real data
$ \vy_{k}$ and  fake data $ \tvy_{k}$  are drawn according to   \eqref{eq:real-G} and \eqref{eq:gen-model}
respectively. The Markov chain is driven by the initial state $ \mX_{0}$
and the sequence of random variables $ \{(\vc_{k}, \va_{k}, \tvc_{2k}, \tva_{2k}, \tvc_{2k+1}, \tva_{2k+1}) \}_{k=0,1,2,\ldots}$. 
\begin{defn}
Let $\mP_k\bydef\mU^\T \mV_k$, $\vq_k\bydef\mU^\T\vw_k$, $\vr_k\bydef\mV_k^\T\vw_k$, $\mS_k\bydef\mV_k^\T \mV_k$, and $z_k\bydef\vw_k^\T \vw_k$. We call the tuple $\{ \mP_k, \vq_k, \vr_k, \mS_k, z_k \}$
the\emph{ macroscopic
state} of the Markov chain $ \mX_{k} $ at step $k$. 
\end{defn}
Those macroscopic quantities measure the cosine similarities among the feature vectors of the true model $\mU$, the generator $\mV_k$ and the discriminator $\vw_k$. For example,  the cosine of the angle between the $i$th true feature ({\em i.e.}, the $i$th column of $\mU$) and the $j$th feature estimated in the generator ({\em i.e.},  the $j$th column of $\mV_k$) is 
$[\mP_k]_{i,j}/\sqrt{[\mS_k]_{j,j}}$, where $[\mP_k]_{i,j}$ is the inner product between the two feature vectors and  $\sqrt{[\mS_k]_{j,j}}$ is the norm of the $j$th column of $\mV_k$. (The columns of $\mU$ are unit vectors and need not be normalized here.) For simplicity, we introduce a compact notation for the macroscopic state:  \begin{equation} \label{eq:M}
\mM_{k}\bydef \mX_k^\T \mX_k = \begin{bmatrix}\mI & \mP_{k} & \vq_{k} \\
\mP_{k}^\T & \mS_{k} & \vr_{k} \\
\vq_{k}^\T & \vr^\T_{k} & z_{k}
\end{bmatrix}.
\end{equation}

%
%

In what follows, we  investigate  the dynamics of the training algorithm
 \eqref{eq:sgd} at both the macroscopic and the microscopic levels.
At the macroscopic level, by examining the cosines of the angles, we study how closely the model parameters
$ \mV_{k}$, $ \vw_{k}$ associated with the generator and discriminator
 can align with the ground truth feature vectors, {\em i.e.,} the columns of $ \mU$. At
the microscopic level, we study how the elements in the matrix $ \mV_{k}$ and the vector
$ \vw_{k}$ evolve as a stochastic process. As our analysis will reveal, the mechanisms behind
the two levels are different: the macroscopic dynamics is asymptotically
deterministic whereas the microscopic dynamics stays stochastic even as
$n \to \infty$.

\subsection{Macroscopic dynamics}

We first study the asymptotic dynamics of the macroscopic state $ \mM_{k}$. 
Our theoretical analysis is carried out under the following assumptions.
\begin{enumerate}[label={(A.\arabic*)}]
\item \label{ass:c} 
The sequences of  $\vc_{k}\sim \mathcal{P}_{\vc}$ and $\tvc_{k}\sim \mathcal{P}_{\tvc}$ for $k=0,1,\ldots$ are i.i.d. random variables with  bounded moments of all orders, and  $\{\vc_{k}\}$ is independent of
$\{\tvc_k\}$.
\item \label{ass:a} 
The sequences $\set{\va_{k}}$ and $\set{\tva_{k}}$ for $k=0,1,\ldots$ are both independent Gaussian vectors with zero mean and the covariance matrix $ \mI_n$. Moreover, $\{\va_{k}\}$, $\{\tva_{k}\}$ are independent of  $\{\vc_{k}\}$ and $\{\tvc_k\}$.
\item  \label{ass:dif}
The first-order derivative of $H(\cdot)$ and the derivatives up to fourth order of the functions  $F(\h D(\cdot))$ and $\t F(\h D(\cdot))$  exist and they are also uniformly bounded.
\item  \label{ass:m4}
Let  $ [\mU, \mV_{0}, \vw_{0}]$ be the initial microscopic state. For $i=1,2,\ldots,n$, we have $ \EE [ \sum_{\ell=1}^d ( [\mU]_{i,\ell}^4+[\mV_0]_{i,\ell}^4+[\vw_0]_i^4] ) \leq C/n^2$, where $C$ is a constant not depending on $n$.
\item \label{ass:init}
The initial macroscopic state $\mM_0$ satisfies $\EE \norm{\mM_0 - \mM_0^\ast}\leq C/\sqrt{n}$, where $\mM_0^\ast$ is a deterministic matrix and $C$ is a constant not depending on $n$.
\end{enumerate}

We provide a few remarks on the above assumptions. In Assumption \ref{ass:c}, $\mathcal{P}_{\vc}$ and $\mathcal{P}_{\tvc}$ can be different. For example, $\vc$ is Gaussian, and $\tvc$ is uniform on $[-1, 1]^d$. The assumption \ref{ass:a} can be relaxed to non-Gaussian cases as long as all moments of ${\va_k}$ and $\tva_k$ are bounded, but we use Gaussian assumption here to simplify the proof.
The assumption \ref{ass:m4} requires that the elements in the parameter matrix of real data $\mU$ and initial microscopic state $\mX_0$ are $\mathcal{O}(1/\sqrt{n})$ numbers. Intuitively, this assumption ensures that $\mU$ and $\mX_0$ are generic matrices with $\mathcal{O}(1)$ Frobenius norms ({\em i.e.,} not the matrices that most elements are zeros and only few elements are large numbers). 
The assumption \ref{ass:init} ensures that the initial macroscopic states converges to a deterministic value as the system size $n$ goes to infinity. The following theorem proves that if the initial state is convergent, then the whole training process converges to a deterministic process as $n\to\infty$, which is characterized by an ODE.
\begin{thm}
\label{thm:ODE} Fix $T > 0$. It holds under Assumptions \ref{ass:c}--\ref{ass:init} that
\begin{equation} \label{eq:conv}
\max_{0 \le k \le nT} \EE \big\|\mM_k - \mM \big(\tfrac{k}{n}\big) \big\| \leq \tfrac{C(T)}{ \sqrt{n}},
\end{equation}
where $C(T)$ is a constant that depends on $T$ but not on $n$, and 
$\mM(t)=\begin{bmatrix}\mI & \mP_{t} & \vq_{t} \\
\mP_{t}^\T & \mS_{t} & \vr_{t} \\
\vq_{t}^\T & \vr_{t}^\T & z_{t}
\end{bmatrix} \in\R^{(2d+1)\times(2d+1)}
$ is a deterministic function. Moreover, $\mM(t)$ is the unique solution of the following ODE:
\begin{equation} \label{eq:ODE}
\begin{aligned}
\tfrac{\dif}{\dif t} \mP_t &=
\t\tau \big( \vq_t \t\vg_t^\T + \mP_t \mL_t  \big)
\\
\tfrac{\dif}{\dif t} \vq_t &= \tau \big( \vg_t - \mP_t \t\vg_t+ \vq_t h_t \big)
\\
\tfrac{\dif}{\dif t} \vr_t &=\tau \big( \mP_t^T \vg_t - \mS_t \t\vg_t + \vr_t h_t \big)
 + \t\tau \big( z_t \t\vg_t + \mL_t\vr_t  \big)
\\
\tfrac{\dif}{\dif t} \mS_t &= \t\tau \big( \vr_t \t\vg^\T_t + \t\vg_t \vr_t^\T 
 + \mS_t \mL_t + \mL_t\mS_t\big)
\\
\tfrac{\dif}{\dif t} z_t &= 2\tau ( \vq_t^\T \vg_t - \vr_t^\T \t\vg_t  + z_t h_t ) + \tau^2 b_t
\end{aligned}
\end{equation}
 with the initial condition $ \mM(0)=\mM_0^\ast$, where
\begin{equation}\label{eq:def-ggb}
\begin{aligned}
\vg_t&=\big\langle \vc f(\vc^\T \vq_t + e \sqrt{z_t\eta_{\text{T}}} )\big\rangle_{\vc,e}
,\;
\t\vg_t=\big\langle \tvc \t f(\tvc^\T \vr_t +  e \sqrt{z_t\eta_{\text{G}}})\big\rangle_{\tvc, e}
,\;
\mL_t= -\lambda \opdiag (H^\prime(\mS_t))
\\
h_t&=\big\langle  f^\prime(\vc^\T \vq_t + e \sqrt{z_t\eta_{\text{T}}} )\big\rangle_{\vc,e}
-\big\langle \t f^\prime(\tvc^\T \vr_t +  e \sqrt{z_t\eta_{\text{G}}})\big\rangle_{\tvc, e}-\lambda H^\prime(z_t),
\\
b_t&=\eta_{\text{T}}\big\langle  f^2(\vc^\T \vq_t + e \sqrt{z_t\eta_{\text{T}}} )\big\rangle_{\vc,e}
+\eta_{\text{G}}\big\langle \t f^2(\tvc^\T \vr_t + e \sqrt{z_t\eta_{\text{G}}})\big\rangle_{\tvc,e}.
\end{aligned}
\end{equation}
The two functions $f$, $\tf$ stand for $f(x)= \frac{\dif}{\dif x}F( \h D(x))$
and $\tf(x)= \frac{\dif}{\dif x} \t F( \h D(x))$, and $f^{ \prime}$, $\tf^\prime$ and $H^\prime$ are derivatives
of $f$, $\tf$ and $H$ respectively. The two constants $\eta_{\text{T}}$ and $\eta_{\text{G}}$ are the strength of the noise in the true data model and the generator, respectively.
The brackets $ \left \langle \cdot \right \rangle _{\vc,e}$
and $ \left \langle \cdot \right \rangle _{ \tvc,e}$ denote the averages
over the random variables $\vc \sim \mathcal{P}_{\vc}$, $ \tvc \sim \mathcal{P}_{ \tvc}$, and $e \sim \mathcal{N}(0,1)$,
where $\mathcal{P}_{\vc}$ and $\mathcal{P}_{ \tvc}$ are the distributions involved
in defining the generative model (1) and the generator (2). 
\end{thm}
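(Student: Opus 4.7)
The proof follows a standard scaling-limit template for exchangeable high-dimensional stochastic processes in the spirit of \cite{Wang2017c}. The plan is to decompose the one-step increment of $\mM_k = \mX_k^\T \mX_k$ into a conditional-mean (drift) term that matches $(1/n)$ times the right-hand side of \eqref{eq:ODE} up to an $\mathcal{O}(n^{-2})$ error, plus a mean-zero martingale fluctuation whose conditional second moment is $\mathcal{O}(n^{-2})$; a discrete $L^2$ Gronwall inequality then closes the loop and produces the claimed $\mathcal{O}(n^{-1/2})$ rate via Jensen. Writing $\mat{\Delta}_k \bydef \mX_{k+1} - \mX_k$, which by \eqref{eq:sgd} has zero entries in the $\mU$ columns, $-(\t\tau/n)\nabla_{\mV_k}\mathcal{L}$ in the $\mV$ columns, and $(\tau/n)\nabla_{\vw_k}\mathcal{L}$ in the $\vw$ column, I expand
\begin{equation*}
\mM_{k+1}-\mM_k = \mX_k^\T\mat{\Delta}_k + \mat{\Delta}_k^\T\mX_k + \mat{\Delta}_k^\T\mat{\Delta}_k.
\end{equation*}
Because $\|\vy_k\|^2, \|\tvy_k\|^2 = \Theta(n)$ while $\|\nabla_{\mV_k}\mathcal{L}\|_F = \mathcal{O}(1)$, the only entry of the quadratic remainder that survives at order $1/n$ is the $(\vw,\vw)$ component; it will produce the $\tau^2 b_t$ term in the $z_t$ equation.

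The drift computation is the main content. Conditional on $\mX_k$, the step-$k$ randomness is independent of $\mX_k$, and the Gaussianity of $\va_k, \tva_k$ (Assumption \ref{ass:a}) gives the reduction
\begin{equation*}
\vy_k^\T\vw_k = \vc_k^\T\vq_k + \sqrt{\eta_\text{T}}\,\va_k^\T\vw_k \overset{d}{=} \vc_k^\T\vq_k + \sqrt{\eta_\text{T} z_k}\, e,
\end{equation*}
with $e \sim \mathcal{N}(0,1)$, and analogously for $\tvy_k^\T\vw_k$. Substituting into $\EEk[\nabla_{\vw_k}\mathcal{L}]$ and $\EEk[\nabla_{\mV_k}\mathcal{L}]$ produces the one-dimensional Gaussian integrals $\vg_t, \t\vg_t, h_t$ and the diagonal matrix $\mL_t$ defined in \eqref{eq:def-ggb}. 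Block-by-block matching then yields $\EEk[\mM_{k+1}-\mM_k] = \tfrac{1}{n}\,\Psi(\mM_k) + \mathcal{O}(n^{-2})$, where $\Psi$ is the ODE vector field: e.g.\ the $(\mU,\mV)$ block yields the $\mP_t$ equation, the $(\mU,\vw)$ column yields the $\vq_t$ equation, and so on. The $b_t$ contribution arises from $(\tau^2/n^2)\EEk\|\nabla_{\vw_k}\mathcal{L}\|^2 = (\tau^2/n)(\eta_\text{T}\langle f^2 \rangle + \eta_\text{G}\langle \t f^2 \rangle) + \mathcal{O}(n^{-3/2})$ after using $\EE\|\va_k\|^2/n \to 1$ together with the approximate independence of $\|\vy_k\|^2$ from $\vy_k^\T\vw_k$.

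For the fluctuation, let $\boldsymbol{N}_k \bydef \mM_{k+1}-\EEk[\mM_{k+1}]$. Each entry is a mean-zero quantity of size $\mathcal{O}(1/n)$, built from the bounded functions $f, \t f, H'$ (Assumption \ref{ass:dif}) and polynomial multiples of the Gaussian vectors $\va_k, \tva_k$, so standard Gaussian moment bounds give $\EEk\|\boldsymbol{N}_k\|^2 \leq C/n^2$, provided an a priori bound on $\EE\|\mM_k\|$ holds. Such a bound is propagated inductively using the boundedness of $f, \t f, H'$, with Assumption \ref{ass:m4} controlling the initialization. Standard ODE theory gives a unique solution $\mM(t)$ of \eqref{eq:ODE} on $[0,T]$, since $\Psi$ is locally Lipschitz by Assumption \ref{ass:dif}. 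Setting $u_k \bydef \EE\|\mM_k - \mM(k/n)\|^2$ and expanding the square exploits $\EEk[\boldsymbol{N}_k] = 0$ to eliminate the linear cross term, yielding the $L^2$ recursion $u_{k+1} \leq (1+C/n)\,u_k + C/n^2$. Iterating with $u_0 = \mathcal{O}(1/n)$ (from Assumption \ref{ass:init}, possibly in its $L^2$ form) gives $u_k \leq C(T)/n$ for $k \leq nT$, and \eqref{eq:conv} follows by Jensen.

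The main technical obstacle is the accounting in the drift step: verifying that no off-diagonal entry of $\mat{\Delta}_k^\T\mat{\Delta}_k$ contributes at order $1/n$, and that higher-order fluctuations of $\|\vy_k\|^2$ around $\eta_\text{T} n$ do not spoil the identification of $b_t$. Because $\nabla_{\vw_k}\mathcal{L}$ contains the $\mathcal{O}(\sqrt{n})$ Gaussian vector $\va_k$, crude size bounds are insufficient and the Gaussianity of $\va_k, \tva_k$ must be exploited to extract the correct $n$-scaling of each cross term, in particular distinguishing the ``linear in $\va_k$'' pieces (which vanish in expectation) from the ``quadratic in $\va_k$'' pieces (which contribute $b_t$). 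A related bookkeeping difficulty is propagating the a priori moment bound on $\EE\|\mX_k\|^p$ uniformly over the $\Theta(n)$ steps, which relies on the regularizer $H$ and the boundedness of $f, \t f, H'$ to prevent $\mV_k, \vw_k$ from blowing up.
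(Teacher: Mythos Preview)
Your proposal is correct and follows essentially the same martingale-decomposition-plus-Gronwall route as the paper: one-step increment split into drift $\tfrac{1}{n}\Psi(\mM_k)$, martingale fluctuation, and higher-order remainder, followed by a discrete Gronwall closing. The paper packages the final step as a black-box lemma (Lemma~2 of \cite{Wang2018}) with abstract conditions (C.1)--(C.5), and establishes the a priori moment bound you flag at the end via entry-wise fourth-moment estimates, inductively propagating $\EE\big[\sum_{\ell}[\mV_k]_{i,\ell}^4 + [\vw_k]_i^4\big]\le C(T)/n^2$; this is the concrete mechanism behind what you call the ``bookkeeping difficulty.''
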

This theorem implies that for each $k = \lfloor t n \rfloor$ for some $t \in [0, T]$,  the macroscopic state $ \mM_{k}$  converges  to a deterministic number  $\mM(t)$, and the convergence rate is $\mathcal{O}(1/\sqrt{n})$. The limiting ODE \eqref{eq:ODE} for the macroscopic states involves  $\mathcal{O}(d^2)$ variables, where $d$ is the number of internal features often assumed to be a finite number that is much less than $n$. This ODE is essentially different from the ODE derived in the small-learning-rate limit \cite{Mescheder2018,Mescheder2017,Nagarajan2017a,Roth2017,Heusel2017,Mazumdar2019}, in which the number of variables is $\mathcal{O}(n)$. 

The complete proof can be found in the Supplementary Materials. We briefly sketch the proof here.
First, we note that $\mM_k$ is a discrete-time stochastic process driven by the Markov chain $\mX_k$.
Then, we apply the martingale decomposition for $\mM_k$ and get
\begin{equation*}
\mM_{k+1}-\mM_{k}=\tfrac{1}{n}\phi(\mM_{k})+(\mM_{k+1}-\EEk\mM_{k+1})+[\EEk\mM_{k+1}-\mM_{k}-\tfrac{1}{n}\phi(\mM_{k})],
\end{equation*}
where the matrix-valued function $\phi(\mM)$ represents the functions on the right hand sides of the ODE \eqref{eq:ODE},
and $\EEk$ denotes the conditional expectation given the state of
the Markov chain $\mX_{k}$.
Finally, we show the martingale $\sum_{k^{\prime}=0}^{k}(\mM_{k^{\prime}+1}-\EE_{k^{\prime}}\mM_{k^{\prime}})$ and the higher-order term $\EEk\mM_{k+1}-\mM_{k}-\tfrac{1}{n}\phi(\mM_{k})$ have no contribution when $n$ goes to infinity.

 Due to the limitation of our current proof,  the constant $C(T)$ in \eqref{eq:conv} grows exponentially as $T$ increases. This is not a problem for any finite $T$, but may cause some problem to study the long time behavior when $T\to\infty$. 
However, if we impose a sufficient large regularizer parameter $\lambda$ to limit the norms of the microscopic weights $\mV_k$ and $\vw_k$,  then the macroscopic state $\mM_k$ is bounded as $[\mM_k]_{i,j}^2 \leq [\mM_k]_{i,i}  [\mM_k]_{j,j}$. In our experiments, $\lambda > 1$ is sufficient. In this case, the constant $C(T)$ is bounded not depending on $T$. In Example 1, when $\lambda \to \infty$, $[\mM_k]_{i,i}=1$, and therefore $[\mM_k]_{i,j}^2 \leq 1$ and $C(T)\leq (2d+1)^2$, where the number of features $d$ is considered a constant not growing with $n$. This justifies the fixed points analysis of the ODE as discussed in Section~\ref{sec:phase}, which reflects the long-time training behavior. A better proof strategy to get rid of this dependence of $T$ is also possible, {\em e.g.}, \cite{Jourdain2014}.


\paragraph{Numerical verification.}
 We verify the theoretical prediction given by the ODE (\ref{eq:ODE}) via numerical simulations under the settings stated in Example \ref{ex:PCA}. The results are shown in Figure \ref{fig:macro-dyn-d2}. The number of features is $d=2$, and $\vc_k$ and $\tvc_k$ are both Gaussian with zero mean and covariance $\opdiag([5,3])$. The dimension is
 $n=5,000$, and the learning rates of the generator and discriminator are $\ttau=0.04$ and $\tau=0.2$ respectively. After testing different noise strength $\eta_{\text{T}}=\eta_{\text{G}}=2,1,4$, we have observed at least three nontrivial dynamical patterns: success, oscillating or mode collapsing.
 In all these experiments, our theoretical predictions match the actual trajectories of the macroscopic states pretty well. 
 
Let us take a closer look at the successful case as shown in the left figure in Figure \ref{fig:macro-dyn-d2}. The dynamics can be split into 4 stages. At the first stage, the discriminator learns the first feature of the true model. At this state, $[\vq_t]_1$ quickly increases. At the second stage, the generator starts to learn the first feature and the discriminator is deceived. At this stage, $[\mP_t]_{1,1}^2$ increases and $[\vq_t]_1^2$ decreases. Once the discriminator completely forgets the first feature as $[\vq_t]_1\approx0$, the third state begins. The discriminator starts to learn the second feature as $[\vq_t]_2^2$ increases. Then, at the last stage, the generator learns the second feature and the discriminator is fooled again. In this region, $[\mP_t]^2_{2,2}$ increases and $[\vq_t]_2^2$ decreases down to 0. Eventually, the generators learns both features and the discriminator is completely fooled. 
It ends up at a stationary state that  $\vq_t=\mat{0}$ and $\mP_t$ is nearly an identity matrix.
Interestingly, this experiment shows that the generator learn features sequentially given a single-feature discriminator. This may be a reason why in practice, the discriminator's structure can be much simpler than the generator's. 
 
\begin{figure*}[t]
\center 
\includegraphics[scale=0.582]{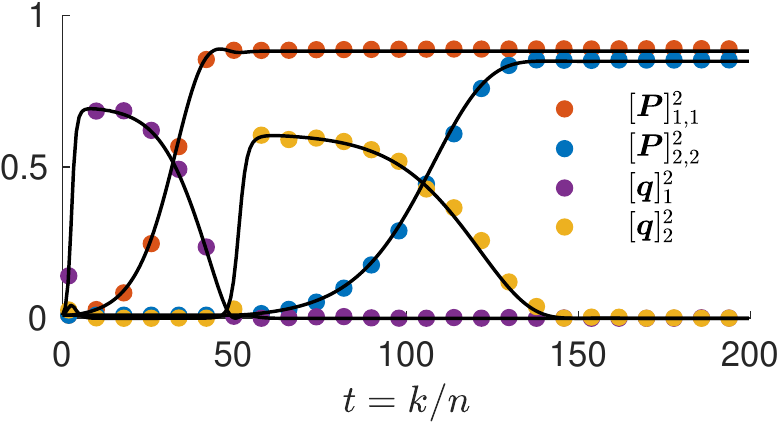}
\includegraphics[scale=0.582]{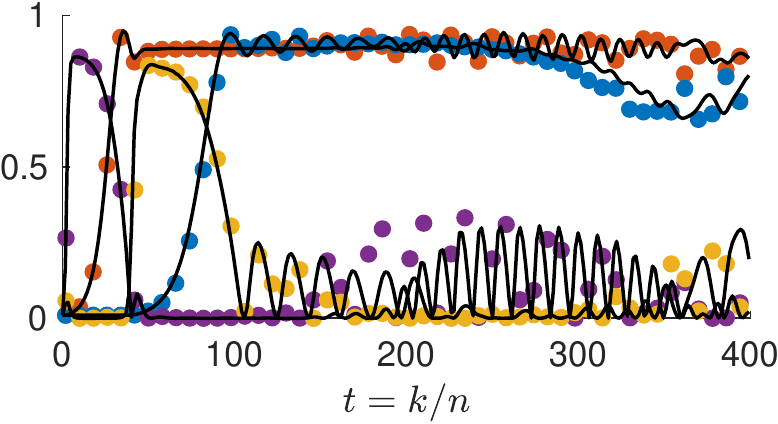}
\includegraphics[scale=0.582]{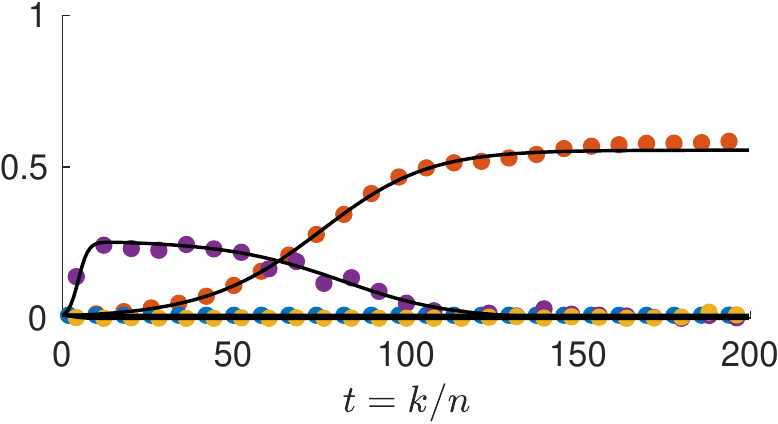}
\caption{\label{fig:macro-dyn-d2} Macroscopic dynamics of the GAN with $d=2$ features: $[\mP_k]_{i,j}$ is the cosine of the angle between $i$'th column vector of the real feature matrix $\mU_k$ and $j$'th column vector of the generator's weight matrix $\mV_k$. Similarly, $[\vq_k]_{i}$ is the cosine of angle between $i$'th column vector of $\mU_k$ and the discriminator's weight vector $\vw_k$. Colored dots are results from experiments, and the curves tracing these dots are our theoretical prediction by the ODE \eqref{eq:ODE}. 
From the left to right, the variance of background noise is $\eta_{\text{T}}=\eta_{\text{G}}=2,1,4$ respectively, and other parameters are the same. The left figure is an example of successful training, where two features (red and blue dots) are retrieved by the generator. The center figure shows an oscillating training. It happens when noise are weak. The right figures shows a mode collapsing state, in which only the first feature  are  estimated by the generator.  
}
\end{figure*}

\subsection{Microscopic dynamics}

\label{sec:micro}

In this section, we study how the elements in $ \mX_{k}=[ \mU, \mV_{k}, \vw_{k}]$
evolve during the training process. 
Instead of studying the trajectory of $\mX_k$, we study the evolution of the \emph{empirical
measure} of the microscopic states, which is defined as
\begin{equation*}
\begin{aligned}
\mu_{k}(\hvu,& \hvv, \hw) 
\bydef 
\tfrac{1}{n} { \textstyle\sum_{i=1}^{n}} 
\delta\big(
\big[ \hvu^\T, \hvv^\T, \hw \big] 
- \sqrt{n} \big[ [\mU]_{i,:}, [\mV_k]_{i,:}, [\vw]_i  \big]
\big)
\\
\end{aligned}
\end{equation*}
where  $ \delta( \cdot)$
is a Dirac  measure on $\R^{2d+1}$ and $[\mU]_{i,:}, [\mV_k]_{i,:}$ are $i$th row of $\mU$ and $\mV_k$ respectively.  The scaling factor $\sqrt{n}$ in the Dirac measures is introduced because $[\mU]_{i,\ell}$, $[\mV_k]_{i,\ell}$ and $[w_{k,}]_i$ are $ \mathcal{O}(1/\sqrt{n})$ quantities. 

We next embed the discrete-time measure-valued
stochastic process $ \mu_{k}$ into a continuous-time process by defining
$
\mu_{t}^{(n)} \bydef \mu_{k}(\hvu, \hvv,\hw)   \; \text{with }k= \nt.
$
 Following the general technical approach presented in \cite{Wang2017c}, we can show
that under the same assumptions as Theorem \ref{thm:ODE}, given $T>0$,
the sequence of measure-valued process $ \{ \{ \mu_{t}^{(n)} \}_{t \in[0,T]} \}_{n}$
converges weakly to a deterministic process $ \{ \mu_{t} \}_{t \in[0,T]}$.
In addition, $\mu_t$ is the measure of the solution to the stochastic differential equation
\begin{equation} \label{eq:SDE}
\begin{aligned}
\dif \hvu_t &= 0 \\
\dif \hvv_t &=\t\tau\big( \hw_t \t\vg_t + \mL_t\hvv_t \big)\dif t\\
\dif \hw_t &= \tau \big( \hvu_t^\T\vg_t +  \hvv_t^\T \t\vg_t \big.
 +\big. \hw_t h_t \big) \dif t 
+ \tau \sqrt{b_t} \dif B_t
\end{aligned}
\end{equation}
where $(\hvu_0,\hvv_0,\hw_0)\sim\mu_0$; $B_t$ is the standard Brownian motion.
The functions $\vg_t$, $\t{\vg}_t$, $\mL_t$, $h_t$ and $b_t$ are defined in \eqref{eq:def-ggb}, in which the macroscopic quantities $\mP_t,\;\mS_t,\;\vq_t,\;z_t,\;\vr_t$ are computed as follows
\begin{equation} \label{eq:coef}
\begin{aligned}
\mP_t&= \langle \mu_t, \hvu \hvv^\T \rangle,  \quad 
 \mS_t=\langle \mu_t, \hvv \hvv^\T \rangle, 
\vq_t&=\langle \mu_t, \hvu \hw \rangle, \quad 
z_t=\langle \mu_t, \hw^2 \rangle,\quad
 \vr_t=\langle \mu_t, \hvv \hw \rangle,
\end{aligned}
\end{equation}
where $\langle \mu_t, \cdot \rangle$ denotes the expectation with respect to the measure $\mu_t$.  

The SDE \eqref{eq:SDE} shows the intuitive meaning of the functions defined in \eqref{eq:def-ggb}: $\vg_t$, $\t{\vg}_t$, $\mL_t$, $h_t$ are drift coefficients of the SDE and $b_t$ is the diffusion coefficient of the SDE. 
We also note that if one follows the analysis in the small-learning-rate limit
\cite{Mescheder2018,Mescheder2017,Nagarajan2017a,Roth2017,Heusel2017,Mazumdar2019},
one will get an ODE for the microscopic states. Compared to our SDE formula, the diffusion term $\tau \sqrt{b_t} dB_t$ is missing in those works, and therefore the effect of the noise can not be analyzed.
 

Moreover, the deterministic measure $\mu_t$ is unique solution of the following PDE (given in its weak form):  for
any bounded smooth test function $  \varphi(\hvu, \hvv,\hw)$, 
\begin{equation}  \label{eq:weak}
\begin{aligned}
&\od{}{t} \big \langle \mu_t, \varphi(\hvu, \hvv,\hw) \big \rangle   =
\\
 &
\ttau\big \langle \mu_{t}, \big(
  \hw \t\vg^\T_t+ \hvv^\T\mL_t   \big)
\nabla_{\hvv} \varphi 
 \big \rangle 
 +\tau\big \langle \mu_t, 
   \big(\hvu^\T\vg_t- \hvv^\T\t\vg_t +h_t\hw \big)
   \pd{}{\hw}  \varphi \big \rangle
+\tfrac{\tau^2}{2}b_t 
\big \langle \mu_t,  \pd[2]{}{\hw} \varphi\big \rangle 
\end{aligned}
\end{equation}
where $\vq_{t}$, $\vr_t$, $\mS_t$, and  $z_{t}$ are defined in \eqref{eq:coef}, and the functions 
$\vg_t$, $\t\vg_t$, $b_t$, $h_t$ and $\mL_t$ are defined in \eqref{eq:def-ggb}.
We refer readers to \cite{Wang2017c} for a general framework for rigorously establishing the above scaling limit.

The connection between the microscopic and macroscopic dynamics  can also be derived from the weak formulation of the PDE. Let $\varphi$ being each element of $\hvu \hvv^\T,\; \hvu\hw,\; \hvv \hw,\; \hvv \hvv^\T,\; \hw^2$, and substituting those $\varphi$  into the PDE \eqref{eq:weak}, we can derive the ODE \eqref{eq:ODE}.
In the setting of this paper, the macroscopic dynamics enjoys a closed ODE: We can predict the macroscopic states without solving the PDE nor SDE at microscopic scale. However, in a more general setting, e.g. when we add a  regularizer other than the L2 type, the ODE itself may not be closed. In that case, one has to solve  the PDE directly.

\paragraph{Numerical verification.} We verify the predictions given by the PDE (\ref{eq:weak}) by setting $d=1$ using a special
choice of the $(n\times 1)$-dimensional target feature matrix $ \mU$ whose elements are all $1/\sqrt{n}$ with $n=10,000$. We also set the
initial condition $\mu_{0}(\h v, \hw |\h u=1)$ to be a Gaussian distribution.
(When $d=1$, the macroscopic quantities $P_t$, $q_t$, $r_t$, $S_t$ reduce to scalars, so we remove their boldface here.)
In this case, the PDE (\ref{eq:weak}) admits a particularly simple analytical solution: at any time $t$, the solution $\mu_{t}(\h v,\hw|\h u=1)$ is a Gaussian distribution whose mean
and covariance matrix are given by
$\E_{ \mu_{t}(\h v, \hw|\h u=1) }
\begin{bmatrix}\h v\\
\hw
\end{bmatrix}  =\begin{bmatrix}P_{t}\\
q_{t}
\end{bmatrix}, 
$
$\E_{\mu_{t}(\h v,\hw |\h u=1)}
\begin{bmatrix}\h v\\
\hw
\end{bmatrix}\begin{bmatrix}\h v & \hw\end{bmatrix}=\begin{bmatrix}S_t & r_{t}\\
r_{t} & z_{t}
\end{bmatrix}.
$
Figure \ref{fig:micro} overlays the contours of the probability distribution
$ \mu_{t}(\h v,\hw|\h u=1)$ at different times $t$ over the point
clouds of the actual experiment data $(\sqrt{n}[\vw_k]_i, \sqrt{n} [\mV_{k}]_{i,1})$. We can see that the theoretical prediction given by (\ref{eq:weak}) has excellent agreement with simulation results.

\begin{figure}
\centering{}
\includegraphics[scale=0.73]{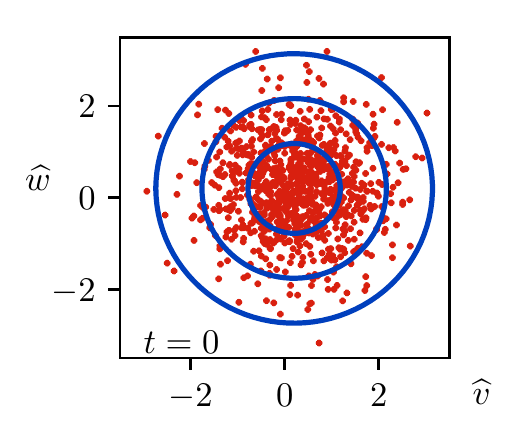}
\hspace{-1.8em}
\includegraphics[scale=0.73]{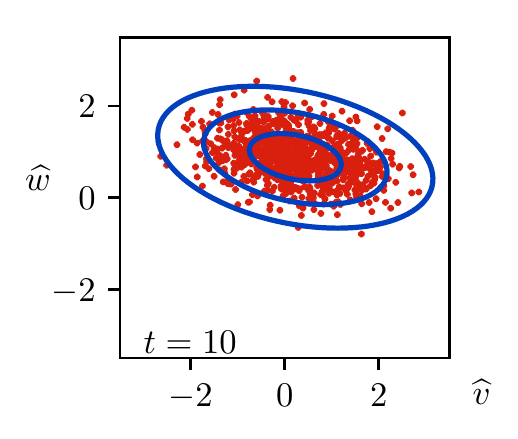} 
\hspace{-1.8em}
\includegraphics[scale=0.73]{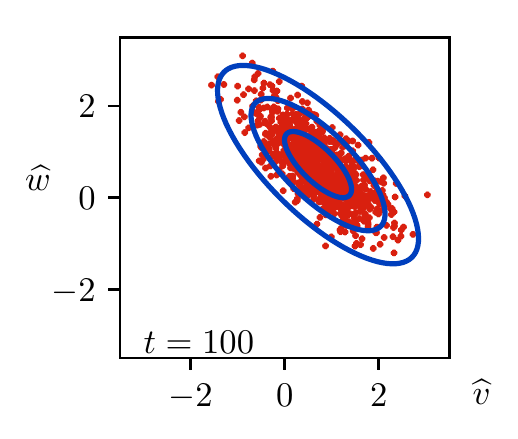} 
\hspace{-1.8em}
\includegraphics[scale=0.73]{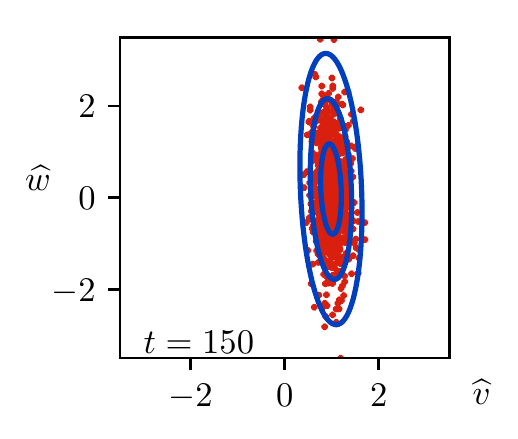}
\caption{\label{fig:micro}The evolution of the microscopic
states at $t=0$, $10$, $100$, and $150$. For each fixed $t$, the red points in the corresponding figure represent the values of $(\h v,\h w)=(\sqrt{n}[\mV_k]_{i,1}, \sqrt{n}[\vw_k]_{i})$ for $i = 1, 2, \ldots, n$, where $k=\nt$. The blue ellipses illustrate the contours corresponding to one, two, and three standard deviations of the 2-D Gaussian distribution predicted by the PDE \eqref{eq:weak}. 
}

\end{figure}

\section{Local Stability Analysis of the ODE for the Macroscopic States}\label{sec:phase}
\vspace{-0.5em}
In this section, we study how the parameters, such as  the learning rates
$ \tau$ and $ \ttau$, noise strength $\eta_{\text{G}}$ and $\eta_{\text{T}}$ affect  the training algorithm. 
We  will focus on the concrete model as described in Example \ref{ex:PCA} so that we can have analytical solutions.

In order to further reduce the degrees of freedom of the ODE \eqref{eq:ODE}, we let the regularization parameter $ \lambda \to \infty$.  In this case, the vector $\vw_{k}$ and all columns vectors of $\mV_k$ are always normalized. Thus $z_k=1$ and $[\mS]_{i,i}=1$.
The macroscopic state is then described by $\mP_k$, $\vq_k$, $\vr_k$ and off-diagonal terms of $\mS_k$.
Correspondingly, the ODE in Theorem \ref{thm:ODE} reduces to 
\begin{equation}
\begin{cases}
\frac{\dif}{\dif t} \mP_t & = \ttau \left( \vq_t \vr_t^\T\mat{\t\Lambda} + \mP_t \mL_t\right)  
 \\
\frac{\dif}{\dif t}\vq_{t} & = \tau \big( \mat{\Lambda}\vq_t - \mP_t \mat{\t\Lambda} \vr_t+ h_t\vq_t  \big) 
\\
\tfrac{\dif}{\dif t} \vr_t &=\tau \big( \mP_t^T \mat{\Lambda}\vq_t   - \mS_t \mat{\t\Lambda} \vr_t + h_t \vr_t \big)
 + \t\tau \big( \mat{\t\Lambda}  + \mL_t  \big)\vr_t
\\
\tfrac{\dif}{\dif t} \mS_t &= \t\tau \big( \vr_t \vr_t^\T\mat{\t\Lambda} + \mat{\t\Lambda} \vr_t\vr_t^\T 
 + \mS_t \mL_t + \mL_t\mS_t\big)
\end{cases} \label{eq:PCA-ODE}
\end{equation}
where 
$ \mat\Lambda$ and $ \t{ \mat\Lambda}$ are the covariance matrices of the distributions
$P_{\vc}$ and $P_{ \tvc}$, respectively; and
\begin{equation} \label{eq:h-mL}
\begin{aligned}
h_t&=  
(1 - \tfrac{\tau\eta_{G}}{2} ) \vr_t^\T \t{ \mat{\Lambda}} \vr_t 
-(1+\tfrac{\tau\eta_{T}}{2}) \vq_t^\T \mat{\Lambda} \vq_t
- \tau\tfrac{\eta_{G}^2 + \eta_{T}^2}{2}
,\quad\quad
\mL_t&=-\opdiag(\vr_t \vr_t^\T \t{\mat\Lambda}),
\end{aligned}
\end{equation}
in which $\eta_{\text{T}}$ and $\eta_{\text{G}}$ are the variance of noise in the true data model and generator, respectively.
The derivation from the ODE (\ref{eq:ODE}) to  \eqref{eq:PCA-ODE} is presented
in the Supplementary Materials.

Next, we discuss under what conditions, the GAN can reach a desirable training state by studying local stability of a particular type of fixed points of the ODE \eqref{eq:PCA-ODE}.  The perfect estimation of the generator corresponds to $\mP_t$ being an identity matrix
(up to a permutation of rows and columns).
A complete fail state relates to $\mP=\mat{0}$. 
Furthermore,  It is easy to verify that if $\vq_t=\vr_t=\mat{0}$, the ODE \eqref{eq:PCA-ODE} will be stable for any $\mP_t=\mP$. 


\begin{claim} \label{claim:stb}
The macroscopic states $\mP_t,\; \vq=\vr=\mat{0}$ for all valid $\mP_t$ are always the fixed points of the ODE  \eqref{eq:PCA-ODE}. Furthermore, a sufficient condition that the perfect estimation state $\mP_t=\mI,\; \vq=\vr=\mat{0}$ is locally stable and  the failed state $\mP_t=\mat{0},\; \vq=\vr=\mat{0}$ is  unstable   if
\begin{equation} \label{eq:con}
\tfrac{1}{2} \max_{\ell} \{ \Lambda_\ell - \t\Lambda_\ell + \a  \t\Lambda_\ell  \} \leq \tau \overline{\eta^2}< \min_\ell{ \Lambda_{\ell} },
\end{equation}
where $\a=\frac{\ttau}{\tau}$, $\overline{\eta^2}=\frac{1}{2}(\eta_{\text{T}}^2 + \eta_{\text{G}}^2) $,
and $\Lambda_\ell = [\mat{\Lambda}]_{\ell,\ell}$, $\t\Lambda_\ell = [\t{ \mat{\Lambda}}]_{\ell,\ell}$.\end{claim}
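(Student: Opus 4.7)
The plan is to treat Claim~1 as a standard local-stability question for a smooth ODE: first verify the fixed points by direct substitution, and then linearize at the two states of interest and apply the Routh--Hurwitz criterion. For the fixed-point statement, I would substitute $\vq=\vr=\mat{0}$ into \eqref{eq:PCA-ODE}. Using \eqref{eq:h-mL}, this collapses $h_t$ to the constant $-\tau\overline{\eta^2}$ and $\mL_t$ to $\mat{0}$, after which every term on each right-hand side is proportional to $\vq$, $\vr$, or $\mL_t$, so every derivative vanishes irrespective of $\mP$ and $\mS$. That yields the first assertion.

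For the perfect state I would write $\mP=\mI+\delta\mP$, $\vq=\delta\vq$, $\vr=\delta\vr$, $\mS=\mI+\delta\mS$ and retain only first-order terms. Since $h_t$ and $\mL_t$ are quadratic in $\vq,\vr$, one has $\delta h_t=0$ and $\delta\mL_t=\mat{0}$ at first order, from which $\dot{\delta\mP}=\mat{0}$ and $\dot{\delta\mS}=\mat{0}$ also vanish at first order. These are the gauge directions along the continuous manifold of fixed points, so transverse stability is measured in the $(\delta\vq,\delta\vr)$ subspace. Diagonality of $\mat{\Lambda}$ and $\t{\mat{\Lambda}}$ then decouples the transverse linear system into $d$ independent planar blocks
\begin{equation*}
J_\ell=\begin{pmatrix}\tau(\Lambda_\ell-\tau\overline{\eta^2}) & -\tau\t\Lambda_\ell\\ \tau\Lambda_\ell & (\ttau-\tau)\t\Lambda_\ell-\tau^2\overline{\eta^2}\end{pmatrix}.
\end{equation*}
A direct computation gives $\operatorname{tr} J_\ell=\tau[\Lambda_\ell+(\alpha-1)\t\Lambda_\ell]-2\tau^2\overline{\eta^2}$, and requiring $\operatorname{tr} J_\ell\le 0$ uniformly in $\ell$ recovers the left inequality of \eqref{eq:con}. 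Moreover, the characteristic discriminant evaluates to $(\operatorname{tr} J_\ell)^2-4\det J_\ell=\tau^2\big([\Lambda_\ell-(\alpha-1)\t\Lambda_\ell]^2-4\Lambda_\ell\t\Lambda_\ell\big)$, which is negative in the regime of interest, yielding complex-conjugate eigenvalues whose common real part equals $\tfrac{1}{2}\operatorname{tr} J_\ell$; the trace condition then suffices for transverse asymptotic stability.

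For the failed state I would linearize at $\mP=\mat{0}$, $\vq=\vr=\mat{0}$, $\mS=\mI$. With $\mP=\mat{0}$ the couplings $\mP\t{\mat{\Lambda}}\vr$ and $\mP^\T\mat{\Lambda}\vq$ drop out at first order, leaving a diagonal block for $\delta\vq$ with eigenvalues $\tau(\Lambda_\ell-\tau\overline{\eta^2})$. Requiring each such eigenvalue to be strictly positive, which is precisely $\tau\overline{\eta^2}<\min_\ell\Lambda_\ell$, produces $d$ unstable directions and thus instability of the failed state.

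The main obstacle I anticipate is in the perfect-state step: establishing $\det J_\ell>0$ in the parameter regions where the characteristic discriminant is non-negative, so that $J_\ell$ could in principle admit a pair of real eigenvalues of opposite signs (a transverse saddle). In that regime one cannot simply read off stability from the sign of the trace, and one has to study the quadratic $\det J_\ell$ in the variable $y=\tau\overline{\eta^2}$ directly and check that condition \eqref{eq:con} still forces $\det J_\ell>0$. I expect this to require a careful sign analysis, possibly supplemented by mild implicit conditions on the ratio $\Lambda_\ell/\t\Lambda_\ell$ to ensure both roots of the quadratic in $y$ lie on the same side of the trace-critical value.
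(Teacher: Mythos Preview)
Your plan is essentially the paper's proof: linearize \eqref{eq:PCA-ODE} at $\vq=\vr=\mat{0}$, observe that the $(\delta\mP,\delta\mS)$ directions are neutral, decouple the transverse $(\delta\vq,\delta\vr)$ system into $d$ planar blocks $J_\ell$, and read off the Routh--Hurwitz conditions. Your $J_\ell$, trace, and discriminant formulas are correct, and your treatment of the failed state is exactly the paper's.

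The one substantive difference is how the determinant condition at $\mP=\mI$ is handled. You try to sidestep it by arguing the eigenvalues are complex (discriminant negative) so that the trace alone decides stability, and you correctly flag that this fails in part of the parameter range. The paper avoids the discriminant altogether: it rewrites $\det J_\ell\ge 0$ in the equivalent form
\[
\alpha\bigl(\tau\overline{\eta^2}-\Lambda_\ell\bigr)\ \le\ \frac{\tau\overline{\eta^2}}{\t\Lambda_\ell}\,\bigl(\tau\overline{\eta^2}-\Lambda_\ell+\t\Lambda_\ell\bigr),
\]
and then observes that the \emph{right} inequality in \eqref{eq:con} forces the left-hand side to be negative. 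In other words, the same hypothesis you use to destabilize the origin is recycled to secure the determinant sign at the perfect state; the two halves of \eqref{eq:con} are meant to be used jointly, not as two disjoint checks. That is the missing link in your outline, and it dissolves your ``main obstacle'' without any case analysis on the discriminant.

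One caveat worth keeping in mind as you write this up: the paper's one-line justification (``the left-hand side is negative'') tacitly assumes the right-hand side of the displayed inequality is nonnegative, i.e.\ $\t\Lambda_\ell\ge \Lambda_\ell-\tau\overline{\eta^2}$. This is automatic in the matched case $\Lambda_\ell=\t\Lambda_\ell$ featured throughout the paper's examples, but in the fully general setting you should state it explicitly or verify it from \eqref{eq:con}.
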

The proof can be found in the Supplementary Materials.
%
If the right inequality in \eqref{eq:con} is violated, any feature $\ell$ with the signal-to-noise ratio  $[\mat\Lambda]_{\ell,\ell}<\tau \overline{\eta^2}$ is not learned by the generator resulting {\em mode collapsing}. The right figure in Figure \ref{fig:macro-dyn-d2} demonstrates this situations, where only one of the two features is recovered. If the left inequality in \eqref{eq:con} is violated,
the training processes can be trapped in an {\em oscillation phase}.  
This phenomenon is shown in the middle figure in Figure  \ref{fig:macro-dyn-d2}.
This result indicates that proper background noise can help to avoid oscillation and stabilize the training process. In fact, the trick of injecting additional noise has been used in practice to train multi-layer GANs  \cite{sonderby2016amortised}. To our best knowledge, our paper is the first theoretical study on why noise can have such a positive effect via a dynamic perspective.

In experiments, the training is not ended at the perfect recovery point due to the presence of the noise but converges at another fixed point nearby. This is because the perfect state is marginally stable, as the Jacobian matrix  always has zero eigenvalues. It indicates that there are other locally stable fixed points near $\mP=\mI$.
In fact, all points in the hyper-rectangle region satisfying $\vq=\vr=\mat{0}$ and 
$
\abs{p^\ast_\ell} \leq \abs{[\mP]_{\ell,\ell}} \leq 1, \;\; \forall \; \ell=1,2,\ldots,d
$ 
are locally stable for some critical $p^\ast_\ell$. In the matched case when $\Lambda_\ell = \t\Lambda_\ell$, we have 
$ 
 p^\ast_\ell= 
\big[
(\Lambda_\ell - \tau \overline{\eta^2})(\t\Lambda_\ell + \tau \overline{\eta^2} - \a \t\Lambda_\ell)
/ ( \Lambda_\ell \t \Lambda_\ell) 
\big]^{1/2},
$ 
$\a=\frac{\ttau}{\tau}$ and
$\overline{\eta^2}=\frac{1}{2}(\eta^2_{\text{T}} + \eta^2_{\text{G}})$.
Starting from a point near the origin, numerical solution of the ODE shows  the training processes are  ended up at the corner of this hyper-rectangle, {\em i.e.}, $\mP^\ast=\opdiag (\{p^\ast_\ell, \; \ell=1,2,\dots,d\})$. In the small-learning rate limit $\tau \to 0$ and the learning rate ratio $\alpha \to 0$, we get the perfect recovery $\mP^\ast = \mI$. The limit $\tau \to 0$, $\alpha \to 0$ was studied in the small-learning-rate analysis with the two-time scaling \cite{Heusel2017}, and the result is consistent, but our analysis includes the situations with finite $\tau$ and $\alpha$.

In addition, we provide a phase diagram analysis in a single-feature case $d=1$ in the  Supplementary Materials. All possible fixed points in this case are enumerated and their local stability is analyzed. This helps us understand
the successful recovery condition $\eqref{eq:con}$, which is the intersection of the informative phases  that each feature can be recovered individually.



\section{Conclusion} \label{sec:con}
\vspace{-0.5em}
We present a simple high-dimensional model for GAN with an exactly analyzable training process. Using the tool of scaling limits of stochastic processes, we show that the macroscopic state associated with the training process converges to a deterministic process characterized as the unique solution of an ODE, whereas the microscopic state remains stochastic described by an SDE, whose time-varying probability measure is described by a limiting PDE. 

Indeed, it is a common picture in statistical physics that the macroscopic states of large systems tend to converge to deterministic values due to self-averaging. These notions, especially the mean-field dynamics, have been applied to analyzing neural networks both in shallow \cite{Saad1995,Biehl1995} and deep models  \cite{nguyen2019mean}. However, this mean-field regime was not considered in previous analyses of GAN. For example, a series of recent works \emph{e.g.},  \cite{Mescheder2018,Mescheder2017,Nagarajan2017a,Roth2017,Heusel2017,Mazumdar2019} considers a different scaling regime where the learning rate goes to zero but the system dimension $n$ stays fixed. In that regime, the microscopic dynamics are deterministic even with the presence of the microscopic noise. In contrast, we study the regime where the learning rate is fixed but the dimension $n\to\infty$. This setting allows us to quantify the effect of training noise in the learning dynamics.

In this paper, we only consider a linear generator with a latent variable $\tvc$ drawn from a fixed distribution $\mathcal{P}_{\tvc}$, but our analysis can be extended to a more complex non-linear model with a learnable latent-variable distribution. Specifically, in order to compute derivatives w.r.t. $\mathcal{P}_{\tvc}$, the latent variable $\tvc\sim\mathcal{P}_{\tvc}$ should be reparameterized by a deterministic function $\tvc = f(\vz;\mat{\theta})$, where $\mat{\theta}$ is a learnable parameter and $\vz$ is a random variable drawn from a simple and fixed distribution. For example, a Gaussian mixture with $L$ equal-probability modes can be parameterized by $\tvc = \sum_{\ell=1}^L (\mat{\mu}_\ell + \mat{\Sigma}_\ell \mat{\epsilon}_l)\beta_l$, where $\mat{\mu_\ell}$ and $\mat{\Sigma}_\ell$ are two learnable parameters representing the mean and covariance of the $\ell$th mode respectively, and $\mat{\epsilon} \sim\mathcal{N}(0,\mI)$; $\beta_\ell$ is a random indicator variable where only one $\beta_\ell$ for $\ell=1, 2,\ldots, L$ is 1 and the others are $0$. In practice, $f(\vz;\theta)$ is implemented by a multilayer neural network. Our analysis can be naturally extended  to analyzing this model as long as the dimensions of $\tvc$ and $\mat{\theta}$ keep finite when the data dimension $n$ goes to infinity. More challenging situations, where the dimension of  $\mat{\theta}$ is proportional to $n$, will be explored in future works. 

Although our analysis is carried out in the asymptotic setting, numerical experiments show that our theoretical predictions can accurately capture the actual performance of the training algorithm at moderate dimensions. Our analysis also reveals several different phases of the training process that highly depend on the choice of the learning rates and noise strength. 
The analysis reveals a condition on the learning rates and the strength of noise  to have  successful training. Violating this condition results either oscillation or mode collapsing.
Despite its simplicity, the proposed model of GAN provides a new perspective and some insights for the study of more realistic models and more involved training algorithms.

%

\paragraph*{Acknowledgments}
This work was supported by the US Army Research Office under contract W911NF-16-1-0265 and by the US National Science Foundation under grants CCF-1319140, CCF-1718698, and CCF-1910410.
\newpage
{
\small
\bibliographystyle{IEEEtran}

}
\newpage
\renewcommand{\theequation}{S-\arabic{equation}}
\renewcommand \thesection {S-\Roman{section}}

%
%
\setcounter{equation}{0}
\setcounter{section}{0}
\part*{Supplementary Materials}
These Supplementary Materials provide additional information, detailed derivations and
proof of the results shown in the main text. Specifically, in Section~\ref{sec:phase} we provide 
a local stability analysis and draw the phase diagram in the case $d=1$ and $d=2$.
In Section
\ref{sec:dev-micro}, we present a heuristic derivation of the stochastic
differential equation (SDE) for the microscopic states. Next, in Section
\ref{sec:dev-macro-weak}, we show a derivation of the ODE for the
macroscopic states from the weak formulation of the PDE. We then establish
the full proof of the Theorem 1 in Section \ref{sec:proof-thm1}.
Finally, we present the local stability analysis of the ODE's fixed
points in Section \ref{sec:stability}.

\emph{Notation}: Throughout the paper, we use $\mI_d$ to denote the $d \times d$ identity matrix. Depending on the context, $\norm{\cdot}$ denotes either the $\ell_2$ norm of a vector or the spectral norm of a matrix. For any $x \in \R$, the floor operation $\lfloor x \rfloor$ gives the largest integer that is smaller than or equal to $x$. We denote $[\vv]_i$ the $i$th element of the vector $\vv$ and denote $[\mM]_{i,j}$ the element at $i$th row and $j$th column of the matrix $\mM$. Finally, $C(T)$ denotes   a constant that depends on the terminal time $T$, and $C$ denotes a general constant that does not depends on $T$ and $n$. Both $C$ and $C(T)$ can vary line to line.

\section{Phase diagram for the case $d=1 $ and $d=2$}\label{sec:phase}

In what follows, we provide a thorough study of all the fixed points of the ODE \eqref{eq:PCA-ODE} when the number of feature $d=1$ and $d=2$.   In particular, three major phases are identified under different settings of the learning rates $\tau$ and $\ttau$ with the fixed model parameters $\eta_{\text{T}}$, $\eta_{\text{G}}$, $\mat\Lambda$, and $\t{ \mat\Lambda}$ .

\paragraph{Phase diagram for $d=1$.}
By analyzing the local stabilities of these fixed points as illustrated in Figure \ref{fig:phase}(a), we obtain the phase diagram as shown in Figure \ref{fig:phase}(b).  For simplicity, we only present the result when $\eta_{\text{T}}=\eta_{\text{G}}=1$, and $ \mat\Lambda= \t{ \mat\Lambda}$, which is denoted by $\Lambda$ used in the remaining part of this section. Detailed derivations are presented in \ref{sec:stability}.

Even in this simplest case, we find there are  in total 5 types of fixed points, the locations of which are visualized
in the 3-dimensional space $( P, q, r)$ shown in Figure \ref{fig:phase}(a).
Each type of the fixed points has an intuitive meaning in terms of the two-player game between
$\calG$ and $\calD$.
We list the detailed information in Table \ref{tab:fixed-points},
in which we  define  a function 
$ 
\beta(\tau) = \begin{cases}
[1+ (\tfrac{\Lambda}{2}-\tfrac{\Lambda}{\tau})^{-1}]^{-1}, &\text{if } \tau\leq \frac{2\Lambda}{\Lambda+2}\\
+\infty, &\text{otherwise}
\end{cases}
$.

\emph{Noninformative phase:} We say that the ODE \eqref{eq:PCA-ODE} is in a noninformative phase if either a type-1 or type-2 fixed point in Table \ref{tab:fixed-points} is stable. In this case,   $P=0$, which indicates that the generator's parameter vector $\mV$  has no correlation with the true feature vector $\mU$. 
 In Figure \ref{fig:phase}(b), the region labeled as noninfo-1 is the stable region for the type-1 fixed point, and noninfo-2 is 
the stable region for the type-2 fixed point. The two regions have no overlap. However, we note that 
in noninfo-1, the type-3 fixed points can also be stable, in which case the stationary point of the ODE is determined by the initial condition.

\emph{Informative phase:} We say that the ODE \eqref{eq:PCA-ODE} is in an informative phase if neither  type-1 nor type-2 fixed point is stable, and if at least one fixed point of type-3 and type-5 is stable. In this case, it is guaranteed that $P$ is nonzero, indicating that the generator can achieve non-vanishing correlation with the real feature vector. In addition, the stable regions for the type-3 and type-5 fixed points are disjoint. They are shown in Figure \ref{fig:phase}(b) as info-1 and info-2, respectively. The difference between the two region is that, in info-1,   $q$ is exactly $0$ indicating that the discriminator is completely fooled, whereas  in info-2, $q$ is nonzero. 

\emph{Oscillating phase:} We say that the ODE \eqref{eq:PCA-ODE} is in an oscillating phase if none of  the fixed points  in Table \ref{tab:fixed-points} is  stable.
In this phase, limiting cycles emerge and the system will oscillate on these cycles indefinitely.
Moreover, we found two types of limiting cycles. 

To further illustrate the phase transitions, we draw ODE trajectories and phase portraits  
in Figure \ref{fig:d1} corresponding to different choices of the step sizes (from left to right, $\ttau = 0.03, 0.2, 0.4, 0.47)$.  

The two figures in the first column of Figure \ref{fig:d1} show a case in the Info-1 phase. The bottom red dot in Figure \ref{fig:phase}.(b) represents this configuration of the step sizes, where $\ttau/\tau$ is small. The top figure of Figure \ref{fig:d1}.(a) shows the dynamics of $P_t$, $q_t$ and $r_t$, and the bottom figure shows the phase portrait on $P-q$ plane. Top figure of Figure \ref{fig:d1}.(a) shows an interesting phenomenon that dynamics are  separated into two stages. At the first stage, $q_t$ (red dots, cosine similarity between  the true feature vector and discriminator's estimation) increases drastically from 0 to some value near 1, while $P_t$ (blue dots, cosine similarity between the true feature vector and generator's estimation) almost doesn't change.  Intuitively, at this stage, the discriminator learns the true model while the generator is unchanged. In the second stage, the generator start to fool the discriminator, where $|P_t|$ increases and $q_t$ decreases. In fact, these two-stage dynamics can be understood from the ODE \eqref{eq:PCA-ODE}: When $\tau/\tau$ is small, the process can be decomposed into two processes in different time scales. In particular, the discriminator is associated with the faster dynamics as $\tau \gg \ttau$, and the generator governs the slower dynamics. Figure 1 in the main text shows that this picture is still hold for multi-feature cases in the hierarchical dynamics.

The figures in the middle two columns of Figure \ref{fig:d1} show the two types of limiting cycles that can emerge in the oscillating phase. The middle two red dots in Figure \ref{fig:phase}.(b) represents these configurations of the step sizes. The last column of Figure \ref{fig:d1} shows another stable phase in Info-2. In this phase, $\tau/\tau$ is relatively large. The two time-scale dynamics are mixed, and another type of stable fixed points emerges. 
\vspace{-0.5em}
\paragraph{Phase diagram for $d=2$.} Figure \ref{fig:phased2} shows the phase diagram when $d=2$.  In particular, the two red lines between Info-1 and Noninfo-1 in Figure \ref{fig:phased2} are determined by the left inequality in \eqref{eq:con}. In Info-1, both feature vectors are recovered by the generator. The dynamics of this phase are shown in Figure 1.(a) in the main text. In the Half-info phase, only the feature vector with the larger signal-to-noise ratio is recovered. The dynamics of this phase are shown in Figure 1.(c) in the main text. The blue line between Info-1 and oscillating phases shows the boundary between oscillation state and stable state.  


\begin{table*}[tbp]
\caption{\label{tab:fixed-points}List of the fixed points of the ODE
\eqref{eq:PCA-ODE} when $d=1$ and ${\Lambda}=\t{{\Lambda}}$.}
\centering%
\addtolength{\leftskip} {-2cm}
\addtolength{\rightskip}{-2cm}
\begin{tabular}{>{\centering}m{0.4cm}>{\centering}m{2cm}>{\centering}m{2.7cm}>{\centering}m{3.5cm}>{\centering}m{3.2cm}}
\hline 
Type & Location & Existence& Stable Region & Intuitive Interpretation\tabularnewline
\hline 
\hline 
1 & $ P=q=0$, $r=0$ & always &  
$  \tau > \Lambda^{2}$,  $\tfrac{\ttau}{\tau}<\tfrac{\tau+\Lambda}{\Lambda}$ 
& Both $\calG$ and $\calD$ fail, and they are uncorrelated\tabularnewline
\hline 
2 & $ P=q=0$ $r=\pm r^\ast\neq0$ 
& $\tfrac{\ttau}{\tau}\geq\tfrac{\tau+\Lambda}{\Lambda}$ or $\tfrac{\ttau}{\tau}\leq 1-\tfrac{\tau}{2}$
& 
$\max\{ 2,\tfrac{\tau+\Lambda}{\Lambda}\} \leq \tfrac{\ttau}{\tau}\leq \beta(\tau)$ 
& Both $\calG$ and $\calD$ fail, and they are correlated\tabularnewline
\hline 
3 & $q=r=0$ $ \abs{P} \in (0,1]$ & always 
& 
$\abs{P}=1$ is stable if 
$\tfrac{\tilde{\tau}}{\tau}\leq\min\{\tfrac{2\tau}{\Lambda},\max\{\tfrac{\tau^{2}\Lambda^{-1}}{\abs{\tau-\Lambda}},4\}\}  $
& $\calG$ wins and $\calD$ loses 
\tabularnewline
\hline 
4 & $ P=r=0$ $q = \pm q^\ast\neq 0$ 
& always
& always unstable & $\calG$ loses and $\calD$ wins\tabularnewline
\hline 
5 & None of $P$, $q$ or $r$ is zero 
&  not always, at most 8 fixed points
&  can be computed numerically
& Both $\calG$ and $\calD$ are informative
\tabularnewline
\hline 
\end{tabular}
\end{table*}

\begin{figure*}[ht!]
\center
\subfloat[]{ \includegraphics[scale=0.5]{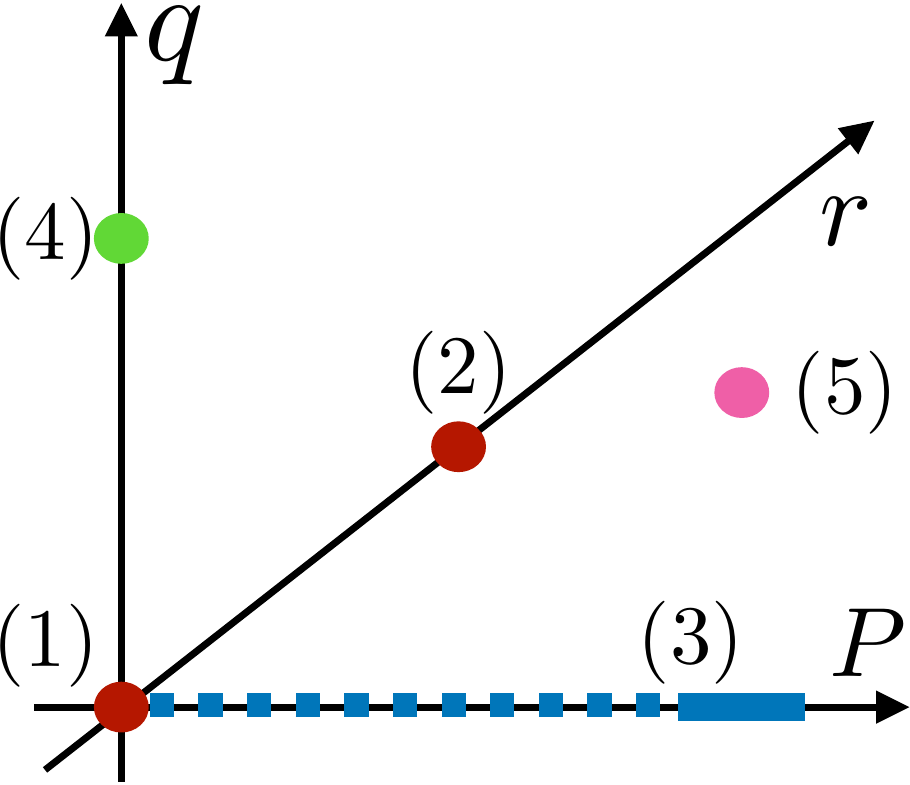} }
\hspace{3em}
\subfloat[] {  \includegraphics[scale=.85]{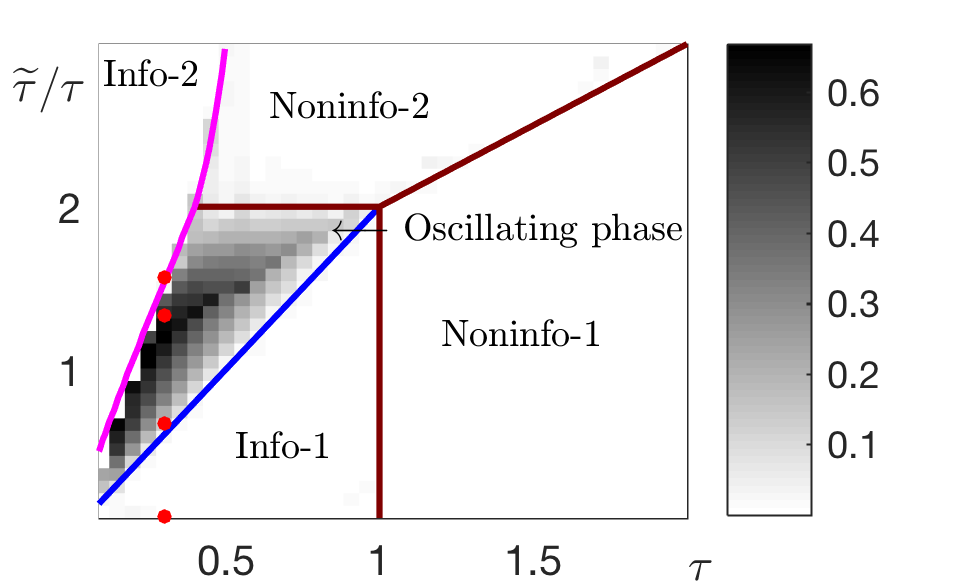}}
\caption{\label{fig:phase}
(a): The locations of the five types of fixed points of the ODE \eqref{eq:PCA-ODE}. Their properties are listed in Table 1. 
(b): The phase diagram for the stationary state of the ODE \eqref{eq:PCA-ODE}. The colored lines illustrate the theoretical prediction of the boundaries between the different phases. Simulations results for a single numerical experiment are also shown to illustrate the oscillating phase: Each grey square represents the value of
$\frac{1}{200}\int_{800}^{1000} [(P_t - \langle P_t \rangle)^2 + (q_t - \langle q_t \rangle)^2 + (r_t - \langle r_t \rangle)^2]\dif t$ where $ \langle P_t \rangle=\frac{1}{200}\int_{800}^{1000} P_t \dif t$, and  
$\langle q_t \rangle$ and $ \langle r_t \rangle$ are defined similarly. Note that the above quantity measures the variation (over time) of the training process as it approaches steady states. 
We see that the variation is indeed nonzero in the oscillating phase (see Figure \ref{fig:d1}), whereas the variation is close to zero in all other phases. 
}
\vspace{1em}
\end{figure*}

\begin{figure*}[ht!]
\center
{ \includegraphics[scale=0.78]{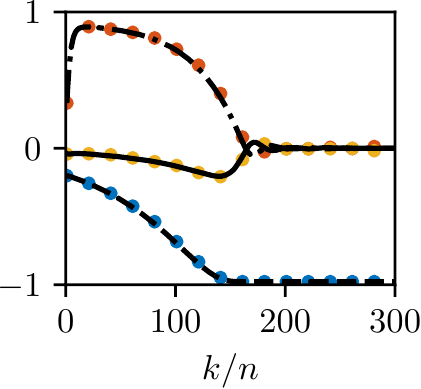}   }
{ \includegraphics[scale=0.78]{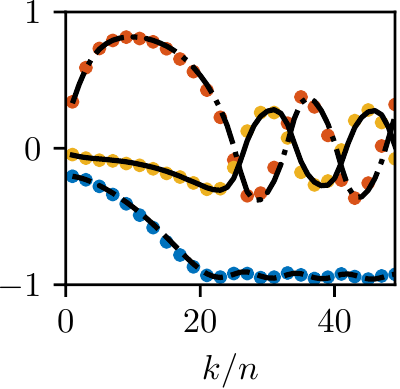}  }
{ \includegraphics[scale=0.78]{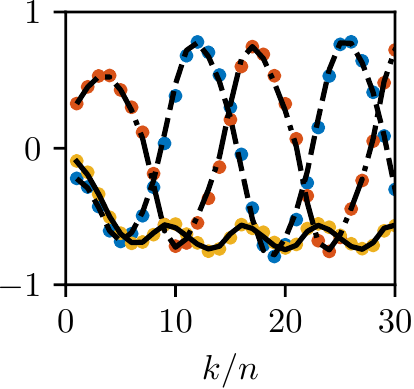}  }
{ \includegraphics[scale=0.78]{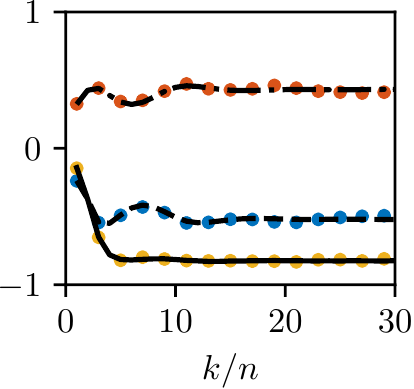}}
\\
\subfloat[] { \includegraphics[scale=0.93]{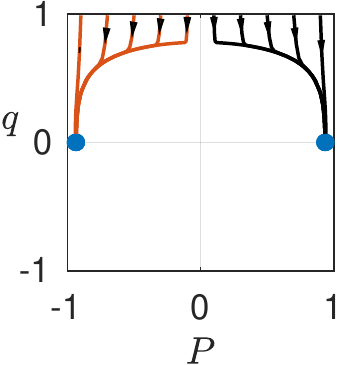} \hspace{.5em}}
\subfloat[] { \includegraphics[scale=0.93]{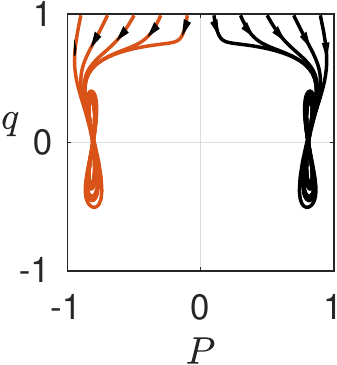} \hspace{.5em}}
\subfloat[] { \includegraphics[scale=0.93]{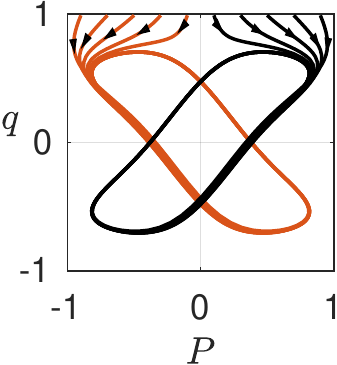} \hspace{.5em}}
\subfloat[] { \includegraphics[scale=0.93]{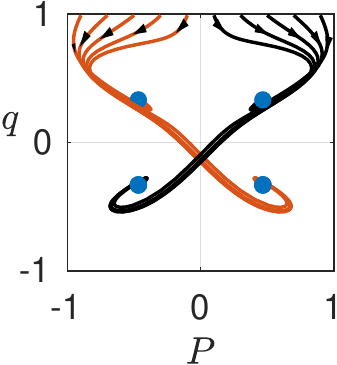} } 
\caption{\label{fig:d1}
Macroscopic dynamics of Example \ref{ex:PCA} with $d=1$.  In the first row, the red, blue and yellow dots represent $ P_t$, $ q_t$, and $ r_t$ respectively of the experimental results of a single trial. 
The black curves under the dots are theoretical predictions given by the ODE \eqref{eq:PCA-ODE}.
We set a fix the discriminator's learning rate $ \tau=0.3$ and vary the generator's learning rate $  \ttau=0.03,\;0.2,\;0.4,\;0.47$ from left to right column. These parameter settings are marked by the four red dots in the phase diagram in Figure~\ref{fig:phase}.
The second row is the phase portraits of the trajectories shown in the first row onto the $P$--$q$ plane.
Figure (a) shows a case in the phase of info-1, where a subset of type (3) fixed points  are stable. Figure (b) and (c) are in the oscillating phase, and (d) is in info-2, where the fixed points of type-5 are stable.
The blue dots in the figures show the stable fixed points.
}
\end{figure*}

\begin{figure*}[ht!]
\center{\includegraphics[scale=1]{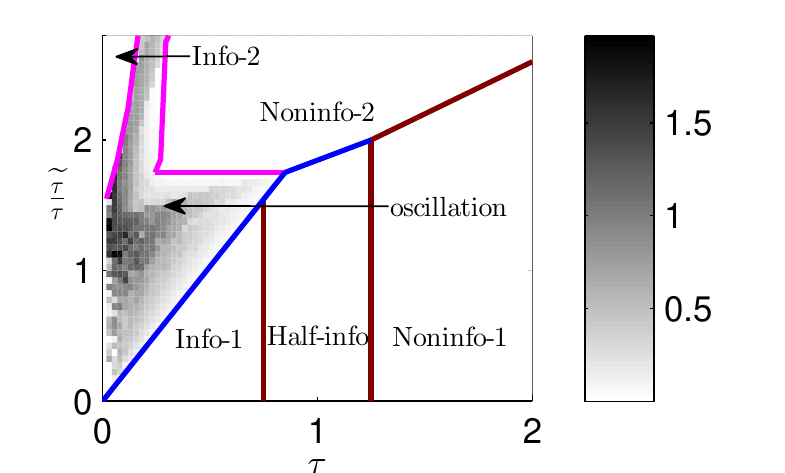}}

\caption{\label{fig:phased2}The phase diagram for the stationary states of the ODE \eqref{eq:PCA-ODE} when $d=2$. This phase diagram is generated by numerically computing the fixed points and eigenvalues of the Jacobian of the ODE \eqref{eq:PCA-ODE}.} 
\end{figure*}

\section{Heuristic derivations of the dynamics of the microscopic states\label{sec:dev-micro}}

In this section, we derive the stochastic differential equations \eqref{eq:SDE}
in the main text for the microscopic states in a non-rigorous way. 
Specifically, we directly discard higher-order terms without any justification, in order to highlight the main ideas. In Section \ref{sec:proof-thm1}, we rigorously justify these steps by providing bounds on those terms.

Our starting point is the iterative algorithm (\ref{eq:sgd}) in the
main text. Substituting the objective function $\mathcal{L}$ defined
in (\ref{eq:def-obj}) into (\ref{eq:sgd}), we have

\begin{align}
\vw_{k+1} & =\vw_{k}+\tfrac{\tau}{n}\big[\vy_{k}f(\vy_{k}^{\T}\vw_{k})-\tvy_{2k}\t f(\tvy_{2k}^{\T}\vw_{k})-\lambda\vw_{k}H^{\prime}(\vw_{k}^{\T}\vw_{k})\big]\label{eq:update-w}\\
\mV_{k+1} & =\mV_{k}+\tfrac{\ttau}{n}\big[\vw_{k}\tvc_{2k+1}^{\T}\tf(\tvy_{2k+1}^{\T}\vw_{k})-\lambda\mV_{k}\opdiag(H^{\prime}(\mV_{k}^{\T}\mV_{k}))\big],\label{eq:update-V}
\end{align}
where $\vy_{k}$ and $\tvy_{k}$ are true and fake samples generated
according to (\ref{eq:real-G}) and (\ref{eq:gen-model}) respectively.
The two functions $f$, $\tf$ stand for $f(x)= \frac{\dif}{\dif x}F( \h D(x))$
and $\tf(x)= \frac{\dif}{\dif x} \t F( \h D(x))$. The function $H^\prime$ is derivative
of $H$. If the input of $H^{\prime}(\cdot)$ is a matrix, $H^{\prime}$
applies to the input matrix element-wisely. The operation $\opdiag(\mA)$
is a diagonal matrix of $\mA$, where the off-diagonal term are set
to zero.

We note that the elements of $\vw_{k}$ and $\mV_{k}$ are $\mathcal{O}(\frac{1}{\sqrt{n}})$
number as the norm of $\vw_{k}$ and the norms of column vectors of
$\mV_{k}$ are all $\mathcal{O}(1)$ numbers. To investigate the dynamics
of the microscopic state, it is convenient to rescale $\vw_{k}$ and
$\mV_{k}$ by a factor of $\sqrt{n}$. We define $\hvu_{i}$ and $\hvv_{k,i}$
as the column view of the $i$'th row of the matrices $\sqrt{n}\mU$
and $\sqrt{n}\mV_{k}$ respectively, and $\hw_{k,i}\bydef\sqrt{n}[\vw_{k+1}]_{i}$.
The update rule of $((\text{\ensuremath{\hvu}}_{i},\hvv_{k,i},\hw_{k,i})_{i=1,\ldots,n})_{k=0,1,2,\ldots}$
is
\begin{align}
\hw_{k+1,i}-\hw_{k,i}= & \tfrac{\tau}{n}\bigg[\big(\hvu_{i}^{\T}\vc_{k}+\sqrt{n\eta_{\text{T}}}a_{k,i}\big)f_{k}-\left(\hvv_{k,i}^{\T}\tvc_{2k}+\sqrt{n\eta_{\text{G}}}\t a_{2k,i}\right)\tf_{2k}-\lambda H^{\prime}(z_k)\hw_{k,i}\bigg],\label{eq:update-hw}\\
\hvv_{k+1,i}-\hvv_{k,i}= & \tfrac{\ttau}{n}\bigg[\hw_{k,i}\tvc_{2k+1}\tf_{2k+1}-\lambda\opdiag(H^{\prime}(\mS_k))\hvv_{k,i}\bigg],\label{eq:update-hvv}
\end{align}
where $a_{k,i}$, $\t a_{k,i}$ are the $i$th elements of $\va_k$ and $\tva_k$ respectively, and $f_{k}$ and $\tf_{k}$ are shorthands for 
\begin{align*}
f_{k} & =f(\vy_{k}^{\T}\vw_{k}/\sqrt{n})=f\Big(\vq_{k}^{\T}\vc_{k}+\sqrt{\tfrac{\eta_{\text{T}}}{n}}{\textstyle \sum_{j=1}^{n}}a_{k,j}\hw_{k,j}\Big)\\
\t f_{k} & =\tf(\tvy_{k}^{\T}\vw_{\left\lfloor k/2\right\rfloor }/\sqrt{n})=\tf\Big(\vr_{\left\lfloor k/2\right\rfloor }^{\T}\tvc_{k}+\sqrt{\tfrac{\eta_{\text{G}}}{n}}{\textstyle \sum_{j=1}^{n}}\t a_{k,j}\hw_{\left\lfloor k/2\right\rfloor ,j}\Big),
\end{align*}
respectively, and 
the empirical macroscopic quantities $\vq_k$, $\vr_k$, $\vz_k$ and $\mS_k$ are defined as follows 
\begin{equation}
\begin{aligned} \label{eq:def-qrzS}
\vq_{k} & \bydef\mU^{\T}\vw_{k}=\tfrac{1}{n}{\textstyle \sum_{i=1}^{n}}\hvu_{i}\hw_{i}, & \qquad & \vr_{k}\bydef\mV_{k}^{\T}\vw_{k}=\tfrac{1}{n}{\textstyle \sum_{i=1}^{n}}\hvv_{k,i}\hw_{i},\\
z_{k} & \bydef\vw_{k}^{\T}\vw_{k}=\tfrac{1}{n}{\textstyle \sum_{i=1}^{n}}\hw_{k,i}^{2}, &  & \mS_{k}\bydef\mV_{k}^{\T}\mV_{k}=\tfrac{1}{n}{\textstyle \sum_{i=1}^{n}}\hvv_{k,i}\hvv_{k,i}^{\T},\\
\mP_k&\bydef \mU^\T \mV_k = \tfrac{1}{n}{\textstyle \sum_{i=1}^{n}}{\hvu_i \hvv_{k,i}^\T}.
\end{aligned}
\end{equation}
The matrix $\mP_k$ is not used in this section, but we put it here with the other macroscopic quantities for future reference.

Now we derive \eqref{eq:SDE} from \eqref{eq:update-hw} and \eqref{eq:update-hvv}.

First, it is trivial to get the first equation of the SDE $\dif\hvu_{t}=0$
in (\ref{eq:SDE}) in the main text, since $\hvu_{i}$ does not change
over time.

Next, we derive the second equation in (\ref{eq:SDE}). Averaging
over $\tvc_{2k+1}$ and $\tva_{2k+1}$ on the both sides of (\ref{eq:update-hvv}),
we get

\[
\begin{aligned}
&\left\langle \hvv_{k+1,i}-\hvv_{k,i}\right\rangle _{\tvc_{2k+1},\tva_{2k+1}}
\\
&=\tfrac{\ttau}{n}\bigg[\left\langle \tf\Big(\vr_{k}^{\T}\tvc+\sqrt{\tfrac{\eta_{\text{G}}}{n}}{\textstyle \sum_{j=1}^{n}}[\tva]_{j}\hw_{k,j}\Big]\Big)\tvc\right\rangle _{\tvc,\tva}\hw_{k,i}-\lambda\opdiag(H^{\prime}(\mS_k))\hvv_{k,i}\bigg].
\end{aligned}
\]
The bracket $\langle \cdot \rangle_{\tvc,\tva}$ here denotes the average over $\tvc
\sim\mathcal{P}_{\tvc}$, 
and standard Gaussian vector $\tva$, where $\tvc$ and $\tva$ are the random variables generating the fake sample in the generator as described in \eqref{eq:gen-model}. 
Noting that $\tva$ is a Gaussian vector, the term $\tfrac{1}{\sqrt{n}}\sum_{j=1}^{n}[\tva]_{j}\hw_{k,j}$
in the above equation is also a Gaussian random variable, whose mean
is zero and variance is $z_{k}$, which is defined in \eqref{eq:def-qrzS}. Therefore, we have 
\begin{equation}
\left\langle \hvv_{k+1,i}-\hvv_{k,i}\right\rangle _{\tvc_{2k+1},\tva_{2k+1}}=\tfrac{\ttau}{n}\Big[\t{\vg}_{k}\hw_{k,i}+\mL_{k}\hvv_{k,i}\Big],\label{eq:m-hvv}
\end{equation}
where 
\begin{align}
\t{\vg}_{k} & =\left\langle \tf\Big(\vr_{k}^{\T}\tvc+\sqrt{z_{k}\eta_{\text{G}}}e\Big]\Big)\tvc\right\rangle _{\tvc,e}\label{eq:def-tvgk}\\
\mL_{k} & =-\lambda\opdiag(H^{\prime}(\mS_{k})),\label{eq:def-Lk}
\end{align}
where $\left\langle \cdot\right\rangle _{\t{\vc},e}$ denotes the
average over $\tvc\sim\mathcal{P}_{\tvc}$ and $e\sim\mathcal{N}(0,1)$.
In addition, from (\ref{eq:update-hvv}), we also know that the second
moment 
\begin{equation}
\left\langle \left(\hvv_{k+1,i}-\hvv_{k,i}\right)^{2}\right\rangle _{\tvc_{2k+1},\tva_{2k+1}}=\mathcal{O}(n^{-\frac{3}{2}}).\label{eq:m-hvv2}
\end{equation}
The moments estimations (\ref{eq:m-hvv}) and (\ref{eq:m-hvv2}) imply
the second equation in (\ref{eq:SDE}) in the main text. Since the
second moments growth smaller than $\mathcal{O}(n^{-1})$, the differential
equation for $\hvv_{t}$ has no diffusion term.

Finally, we derive the last equation in (\ref{eq:SDE}) in the main
text from the update rule of $\hw_{k}$ (\ref{eq:update-hw}). We
observe that both the terms inside the function $f$ and outside of
$f$ in (\ref{eq:update-hw}) depend on $a_{k,i}$. Using Taylor's
expansion, we linearize the contribution of $a_{k,i}$ to the function
$f$: 
\begin{align}
f_{k} & =f\Big(\vq_{k}^{\T}\vc_{k}+\sqrt{\tfrac{\eta_{\text{T}}}{n}}{\textstyle \sum_{j\neq i}}a_{k,j}\hw_{k,j}+\sqrt{\tfrac{\eta_{\text{T}}}{n}}a_{k,i}\hw_{k,i}\Big)\nonumber \\
 & =f(\vq_{k}^{\T}\vc_{k}+\sqrt{\tfrac{\eta_{\text{T}}}{n}}{\textstyle \sum_{j\neq i}}a_{k,j}\hw_{k,j})+f^{\prime}(\vq_{k}^{\T}\vc_{k}+\sqrt{\tfrac{\eta_{\text{T}}}{n}}{\textstyle \sum_{j\neq i}}a_{k,j}\hw_{k,j})\sqrt{\tfrac{\eta_{\text{T}}}{n}}a_{k,i}\hw_{k,i}+\mathcal{O}(\tfrac{1}{n}).\label{eq:fk-exp}
\end{align}
Similarly, we have 
\begin{align}
\tf_{2k} & =\tf(\vr_{k}^{\T}\tvc_{2k}+\sqrt{\tfrac{\eta_{\text{G}}}{n}}{\textstyle \sum_{j\neq i}}\t a_{k,j}\hw_{k,j}+\sqrt{\tfrac{\eta_{\text{G}}}{n}}\t a_{2k,i}\hw_{k,i})\nonumber \\
 & =\tf(\vr_{k}^{\T}\tvc_{2k}+\sqrt{\tfrac{\eta_{\text{G}}}{n}}{\textstyle \sum_{j\neq i}}\t a_{k,j}\hw_{k,j})+\tf^{\prime}(\vr_{k}^{\T}\tvc_{2k}+\sqrt{\tfrac{\eta_{\text{G}}}{n}}{\textstyle \sum_{j\neq i}}\t a_{k,j}\hw_{k,j})\sqrt{\tfrac{\eta_{\text{G}}}{n}}\t a_{2k,i}\h w_{k,i}+\mathcal{O}(\tfrac{1}{n})\label{eq:tfk-exp}
\end{align}
Substituting (\ref{eq:fk-exp}) and (\ref{eq:tfk-exp}) into (\ref{eq:update-hw}),
we have

\begin{equation}
\begin{aligned}
&\frac{\hw_{k+1,i}-\hw_{k,i}}{\tau/n}
\\
&=  \hvu_{i}^{\T}\vc_{k}f(\vq_{k}^{\T}\vc_{k}+\sqrt{\tfrac{\eta_{\text{T}}}{n}}{\textstyle \sum_{j\neq i}}a_{k,j}\hw_{k,j})-\hvv_{k,i}^{\T}\tvc_{2k}\tf(\vr_{k}^{\T}\tvc_{2k}+\sqrt{\tfrac{\eta_{\text{G}}}{n}}{\textstyle \sum_{j\neq i}}\t a_{k,j}\hw_{k,j})\\
 & +\hw_{k,i}\Big[a_{k,i}^{2}f^{\prime}(\vq_{k}^{\T}\vc_{k}+\sqrt{\tfrac{\eta_{\text{T}}}{n}}{\textstyle \sum_{j\neq i}}a_{k,j}\hw_{k,j})-\t a_{k,i}^{2}\tf^{\prime}(\vr_{k}^{\T}\tvc_{2k}+\sqrt{\tfrac{\eta_{\text{G}}}{n}}{\textstyle \sum_{j\neq i}}\t a_{2k,j}\hw_{k,j})-\lambda H^{\prime}(z_{k})\Big]\\
 & +\sqrt{n}\Big[a_{k,i}f(\vq_{k}^{\T}\vc_{k}+\sqrt{\tfrac{\eta_{\text{T}}}{n}}{\textstyle \sum_{j\neq i}}a_{k,j}\hw_{k,j})+\t a_{2k,i}\tf^{\prime}(\vr_{k}^{\T}\tvc_{2k}+\sqrt{\tfrac{\eta_{\text{G}}}{n}}{\textstyle \sum_{j\neq i}}\t a_{2k,j}\hw_{k,j})\Big]+\delta_{k,i},
\end{aligned}
\label{eq:hw-exp}
\end{equation}
where $\delta_{k,i}$ collects all higher-order terms whose contributions
will vanish as $n\to\infty$.
From this equation, we can already infer the SDE (\ref{eq:SDE}). Specifically, on the right hand side of $\eqref{eq:hw-exp}$,
the terms in the first two lines correspond to the drift term in the
SDE. Furthermore, the first term in the third line in $\eqref{eq:hw-exp}$ contributes
to the SDE as a Brownian
motion. More precisely, we can derive the third equation of the SDE
(\ref{eq:SDE}) in the main text by the moments estimations. Specifically,
the first-order moment is
\begin{align}
\left\langle \hw_{k+1,i}-\hw_{k,i}\right\rangle _{\vc_{k,}\va_{k},\tvc_{2k},\tva_{2k}}=\tfrac{\tau}{n}\Big[ & \hvu_{i}^{\T}\vg_{k}-\hvv_{k,i}^{\T}\t{\vg}_{k}+\hw_{k,i}h_{k}\Big]+\mathcal{O}(n^{-\frac{3}{2}})\label{eq:hw-mean}
\end{align}
where $\t{\vg}_{k}$ is defined in (\ref{eq:def-tvgk}), and 
\begin{align}
\vg_{k} & =\left\langle \vc f(\vq_{k}^{\T}\vc+\sqrt{z_{k}\eta_{\text{T}}}e)\right\rangle _{\vc,e}\label{eq:def-vgk}\\
h_{k} & =\eta_{\text{T}}\left\langle f^{\prime}(\vq_{k}^{\T}\vc+\sqrt{z_{k}\eta_{\text{T}}}e)\right\rangle _{\vc,e}-\t{\eta}_{\text{G}}\left\langle \tf^{\prime}(\vr_{k}^{\T}\t{\vc}+\sqrt{z_{k}\eta_{\text{G}}}e)\right\rangle _{\tvc,e}-\lambda H^{\prime}(z_{k}).\label{eq:def-hk}
\end{align}
The second moment is
\begin{equation}
\left\langle \left(\hw_{k+1,i}-\hw_{k,i}\right)^{2}\right\rangle _{\vc_{k,}\va_{k},\tvc_{2k},\tva_{2k}}=\tfrac{\tau^{2}}{n}b_{k}+\mathcal{O}(n^{-\frac{3}{2}}),\label{eq:hw-var}
\end{equation}
where 
\begin{equation}
b_{k}=\eta_{\text{T}}\left\langle f^{2}(\vq_{k}^{\T}\vc+\sqrt{z_{k}\eta_{\text{T}}}e)\right\rangle _{\vc,e}+\eta_{\text{G}}\left\langle \tf^{2}(\vr_{k}^{\T}\t{\vc}+\sqrt{z_{k}\eta_{\text{G}}}e)\right\rangle _{\tvc,e}.\label{eq:def-bk}
\end{equation}
From the (\ref{eq:hw-mean}) and (\ref{eq:hw-var}), we derive the
SDE for $\hw_{t}$ in (\ref{eq:SDE}) in the main text.

\section{Derive the ODE in Theorem 1 from the weak formulation of the PDE\label{sec:dev-macro-weak}}

In this section, we show how to derive the ODE \eqref{eq:ODE} from the weak formulation
of the PDE \eqref{eq:weak}. Choosing the test function $\varphi$ being each element
of $\hvu\hvv^{\T},\;\hvu\hw,\;\hvv\hw,\;\hvv\hvv^{\T},\;\hw^{2}$,
and substituting those $\varphi$ into the weak formulation of the
PDE (\ref{eq:weak}), we will get the
ODE  \eqref{eq:ODE}  as presented in Theorem~1. In what follows, we provide
additional details of this derivation.

We first derive the first ODE $\frac{\dif}{\dif t}\mP_{t}=\ldots$
in (\ref{eq:ODE}).
Let $\varphi=[\h{\vu}]_{\ell}[\h{\vv}]_{\ell'}$, $\text{\ensuremath{\ell}},\ell'=1,2,\ldots,d$,
we have $\nabla_{\hat{\vv}}\varphi=[\h{\vu}]_{\ell}\vs_{\ell'}$,
where $\vs_{\ell^{\prime}}$ is the $\ell^\prime$th canonical basis ({\em i.e.,} all elements in $\vs_{\ell^\prime}$ are zeros, except that $\ell^\prime$th element is 1). From the PDE
(\ref{eq:weak}) in the main text, we have $\forall\ell,\ell'=1,2,\ldots,d$:
\begin{align*}
\left\langle \mu_{t},\varphi(\h{\vu},\h{\vv},\hw)\right\rangle  & =\left\langle \mu_{t},[\h{\vu}]_{\ell}[\h{\vv}]_{\ell'}\right\rangle  =[\mP_{t}]_{\ell,\ell'},
\end{align*}
\begin{align*}
\left\langle \mu_{t},(\h w\t{\vg}_{t}^{\T}+\h{\vv}^{\T}\mL_{t})\nabla_{\hat{\vv}}\varphi\right\rangle  & =\left\langle \mu_{t},([\h{\vu}]_{\ell}\h w)[\t{\vg}_{t}]_{\ell'}+([\h{\vu}]_{\ell}\h{\vv}^{\T})[\mL_{t}]_{:,\ell'}\right\rangle \\
 & =[\vq_{t}]_{l}[\t{\vg}_{t}]_{\ell'}+[\mP_{t}]_{\ell,:}[\mL_{t}]_{:,\ell'},
\end{align*}
where $[\mP_{t}]_{\ell,:}$ and $[\mL_{t}]_{:,\ell'}$ are $\ell$th row of $\mP_t$ and $\ell^\prime$th column of $\mL$, respectively.
In addition, we know that $\frac{\partial}{\partial\h w}\varphi=\frac{\partial^{2}}{\partial\h w^{2}}\varphi=0$.
Combining above results, we can recover the first ODE 
in (\ref{eq:ODE}).

Next, we derive the second ODE $\frac{d\vq_{t}}{dt}=\ldots$ in (\ref{eq:ODE}). Let $\varphi=[\h{\vu}]_{\ell}\h w$,
$\ell=1,2,\ldots,d$. We have $\nabla_{\hat{\vv}}\varphi=0$, $\frac{\partial}{\partial\h w}\varphi=[\h{\vu}]_{\ell}$
and $\frac{\partial^{2}}{\partial\h w^{2}}\varphi=0$. Then $\forall\ell=1,2,\ldots,d$,
\begin{align*}
\left\langle \mu_{t},\varphi(\h{\vu},\h{\vv},\hw)\right\rangle  & =\left\langle \mu_{t},[\h{\vu}]_{\ell}\h w\right\rangle  =[\vq_{t}]_{\ell}
\end{align*}
and
\begin{align*}
\left\langle \mu_{t},(\h{\vu}^{\T}\vg_{t}-\h{\vv}^{\T}\t{\vg}_{t}+h_{t}\h w)\frac{\partial}{\partial\h w}\varphi\right\rangle  & =\left\langle \mu_{t},(\h{\vu}^{\T}\vg_{t}-\h{\vv}^{\T}\t{\vg}_{t}+h_{t}\h w)[\h{\vu}]_{\ell}\right\rangle \\
 & =[\vg_{t}]_{\ell}-[\mP_{t}]_{\ell}\t{\vg}_{t}+[\vq_{t}]_{\ell}h_{t}.
\end{align*}
With above results, we can obtain the second ODE in (\ref{eq:ODE}).

Next, let's derive the ODE for $\frac{d\mS_{t}}{dt}$. We set $\varphi=[\h{\vv}]_{\ell}[\h{\vv}]_{\ell'}$. If $\ell\neq\ell'$,
we have $\nabla_{\hat{\vv}}\varphi=[\h{\vv}]_{\ell}\vs_{\ell'}+[\h{\vv}]_{\ell'}\vs_{\ell}$,
where $\vs_{\ell^{\prime}}$ is the $\ell^\prime$th canonical basis.
Then 
\[
\left\langle \mu_{t},\varphi(\h{\vu},\h{\vv},\hw)\right\rangle =[\mS_{t}]_{\ell,\ell'}
\]
and
\begin{align*}
\left\langle \mu_{t},(\h w\t{\vg}_{t}^{\T}+\h{\vv}^{\T}\mL_{t})\nabla_{\hat{\vv}}\varphi\right\rangle  & =\left\langle \mu_{t},([\h{\vv}]_{\ell}\h w)[\t{\vg}_{t}]_{\ell'}+([\h{\vv}]_{\ell}\h{\vv}^{\T})[\mL_{t}]_{:,\ell'}\right\rangle \\
 & \quad+\left\langle \mu_{t},([\h{\vv}]_{\ell'}\h w)[\t{\vg}_{t}]_{\ell}+([\h{\vv}]_{\ell'}\h{\vv}^{\T})[\mL_{t}]_{:,\ell}\right\rangle \\
 & =[\vr_{t}]_{\ell}[\t{\vg}_{t}]_{\ell'}+[\t{\vg}_{t}]_{\ell}[\vr_{t}]_{\ell'}+[\mS_{t}]_{\ell,:}[\mL_{t}]_{:,\ell'}+[\mL_{t}]_{\ell,:}[\mS_{t}]_{:,\ell'}
\end{align*}
If $\ell=\ell'$, we have $\nabla_{\hat{\vv}}\varphi=2[\h{\vv}]_{\ell}\vs_{\ell}$,
then 
\[
\left\langle \mu_{t},\varphi(\h{\vu},\h{\vv},\hw)\right\rangle =[\mS_{t}]_{\ell,\ell}
\]
and
\[
\left\langle \mu_{t},(\h w\t{\vg}_{t}^{\T}+\h{\vv}^{\T}\mL_{t})\nabla_{\hat{\vv}}\varphi\right\rangle =2([\vr_{t}]_{\ell}[\t{\vg}_{t}]_{\ell}+[\mS_{t}]_{\ell,:}[\mL_{t}]_{:,\ell})
\]
Plugging back the above two equations and combining the fact that
$\frac{\partial}{\partial\h w}\varphi=\frac{\partial^{2}}{\partial\h w^{2}}\varphi=0$,
we recover the ODE of $\frac{d\mS_{t}}{dt}$.

The rest two ODEs can be obtained in the similar way by letting $\varphi$
to be each distinct component of $\h{\vv}\h w$ and $\h w^{2}$.

\section{Proof of Theorem 1\label{sec:proof-thm1}}

In this section, we prove Theorem 1 shown in the main text. In the previous
section, we have already provided a derivation of the ODE in Theorem 1 from
the weak formulation of the PDE for the microscopic states. In this
section, we follow a different path to prove the theorem without referencing
the PDE, because it is easier to establish the rigorous bound of the convergence
rate. Thus, the proof itself also provides another derivation of the
ODE, where the most relevant part is Lemma~\ref{lem:c2}.

\subsection{Sketch of the proof}

The proof follows the standard procedure of the convergence of stochastic
processes \cite{billingsley2013convergence,kushner2003stochastic}. We here build the whole proof on Lemma 2 in the
supplementary materials of \cite{Wang2018}. For reader's convenient, we present
that lemma below.
\begin{lem}[Lemma 2 in the supplementary materials  of \cite{Wang2018}] \label{lem:base}
 Consider a sequence
of stochastic process $\{\vx_{k}^{(n)},k=0,1,2,\ldots,\left\lfloor nT\right\rfloor \}_{n=1,2,\ldots}$,
with some constant $T>0$. If $\vx_{k}^{(n)}$ can be decomposed into
three parts
\begin{equation}
\vx_{k+1}^{(n)}-\vx_{k}^{(n)}=\tfrac{1}{n}\phi(\vx_{k}^{(n)})+\mat{\rho}_{k}^{(n)}+\mat{\delta}_{k}^{(n)}\label{eq:sp-x}
\end{equation}
such that

(C.1) The process $\sum_{k^{\prime}=0}^{k}\mat{\rho}_{k^{\prime}}^{(n)}$
is a martingale, and $\EE\norm{\mat{\rho}_{k}^{(n)}}^{2}\leq C(T)/n^{1+\epsilon_{1}}$
for some positive $\e_{1}$;

(C.2) $\EE\norm{\mat{\delta}_{k}^{(n)}}\leq C(T)/n^{1+\e_{2}}$ for
some positive $\e_{2};$

(C.3) $\phi(\vx)$ is a Lipschitz function, i.e., $\norm{\phi(\vx)-\phi(\t{\vx})}\leq C\norm{\vx-\t{\vx}}$;

(C.4) $\EE\norm{\vx_{k}^{(n)}}^{2}\leq C$ for all $k\leq\left\lfloor nT\right\rfloor $;

(C.5) $\EE\norm{\vx_{0}^{(n)}-\vx_{0}^{\ast}}\leq C/n^{\e_{3}}$ for
some positive $\e_{3}$ and a deterministic vector $\vx_{0}^{\ast}$,

then we have 
\[
\norm{\vx_{k}^{(n)}-\vx(\tfrac{k}{n})}\leq C(T)n^{-\min\{\frac{1}{2}\e_{1},\e_{2},\e_{3}\}},
\]
where $\vx(t)$ is the solution of the ODE
\[
\tfrac{\dif}{\dif t}\vx(t)=\phi(\vx(t)),\quad\text{ with }\vx(0)=\vx_{0}^{\ast}.
\]
\end{lem}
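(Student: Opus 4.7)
The plan is to prove Lemma \ref{lem:base} via a standard discrete Gronwall argument comparing the summed form of the recursion \eqref{eq:sp-x} to the integrated form of the limiting ODE. First, telescoping \eqref{eq:sp-x} yields
\begin{equation*}
\vx_k^{(n)} = \vx_0^{(n)} + \tfrac{1}{n}\sum_{j=0}^{k-1}\phi(\vx_j^{(n)}) + M_k + D_k,
\end{equation*}
where $M_k\bydef\sum_{j=0}^{k-1}\mat{\rho}_j^{(n)}$ is a martingale and $D_k\bydef\sum_{j=0}^{k-1}\mat{\delta}_j^{(n)}$ is the higher-order residual. Under the Lipschitz hypothesis (C.3), the ODE $\tfrac{\dif}{\dif t}\vx(t)=\phi(\vx(t))$ with $\vx(0)=\vx_0^\ast$ admits a unique solution on $[0,T]$, which is itself Lipschitz; writing it in integrated form gives $\vx(k/n) = \vx_0^\ast + \int_0^{k/n}\phi(\vx(s))\dif s$.

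Second, I would introduce $\ve_k\bydef \vx_k^{(n)}-\vx(k/n)$ and split it into four contributions: the initial-condition error $\vx_0^{(n)}-\vx_0^\ast$, the Lipschitz-controllable discrepancy $\tfrac{1}{n}\sum_{j<k}[\phi(\vx_j^{(n)})-\phi(\vx(j/n))]$, the Riemann-sum error $\tfrac{1}{n}\sum_{j<k}\phi(\vx(j/n))-\int_0^{k/n}\phi(\vx(s))\dif s$, and the stochastic/residual term $M_k+D_k$. The initial contribution is $\mathcal{O}(n^{-\e_3})$ by (C.5); since $\phi\circ\vx$ is Lipschitz, the Riemann term is $\mathcal{O}(n^{-1})$ uniformly for $k\le nT$; and (C.2) gives $\EE\|D_k\|\le nT\cdot C(T) n^{-1-\e_2}= C(T) n^{-\e_2}$.

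Third, the crucial step is the martingale bound. Exploiting the orthogonality of martingale increments from (C.1),
\begin{equation*}
\EE\|M_k\|^2 \;=\; \sum_{j=0}^{k-1}\EE\|\mat{\rho}_j^{(n)}\|^2 \;\le\; nT\cdot\frac{C(T)}{n^{1+\e_1}} \;=\; \frac{C(T)}{n^{\e_1}},
\end{equation*}
whence $\EE\|M_k\|\le C(T) n^{-\e_1/2}$ by Jensen's inequality. Combining these estimates, taking expectations, and applying $\|\phi(\vx_j^{(n)})-\phi(\vx(j/n))\|\le L\|\ve_j\|$, I arrive at
\begin{equation*}
\EE\|\ve_k\| \;\le\; A_n(T) + \tfrac{L}{n}\sum_{j=0}^{k-1}\EE\|\ve_j\|, \qquad A_n(T) \;=\; C(T)\, n^{-\min\{\e_1/2,\,\e_2,\,\e_3\}},
\end{equation*}
and discrete Gronwall then yields $\EE\|\ve_k\|\le A_n(T)\prod_{j=0}^{k-1}(1+L/n)\le A_n(T)\,e^{LT}$, which is the advertised bound.

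The principal obstacle is the martingale term: only by invoking orthogonality of the increments $\mat{\rho}_j^{(n)}$ does the bound improve from the naive summation $\sum_j \sqrt{\EE\|\mat{\rho}_j^{(n)}\|^2}= \mathcal{O}(n^{(1-\e_1)/2})$ (useless whenever $\e_1<1$) to $\EE\|M_k\|=\mathcal{O}(n^{-\e_1/2})$, and this is precisely what contributes the $\tfrac{1}{2}\e_1$ factor in the final rate. Condition (C.4) plays a supporting role in guaranteeing that the second moments in the $L^2$ computation above are finite and in justifying the interchanges of expectation and summation; it is not used directly in the Gronwall induction.
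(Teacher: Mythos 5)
Your proposal is correct, and it is the standard argument: telescope the recursion, use orthogonality of martingale increments to get $\EE\|M_k\|^2=\sum_j\EE\|\mat{\rho}_j^{(n)}\|^2\le C(T)n^{-\e_1}$ (hence the $\tfrac{1}{2}\e_1$ in the rate), control the residual sum by (C.2), the initial error by (C.5), the Riemann discretization by the Lipschitz property of $\phi$ along the bounded ODE trajectory, and close with discrete Gronwall. Note that the paper itself gives no proof of this lemma --- it is quoted verbatim from the supplementary materials of \cite{Wang2018} and used as a black box --- so there is no internal proof to compare against, but your argument is exactly the martingale-decomposition-plus-Gronwall scheme that the paper's proof sketch of Theorem \ref{thm:ODE} presupposes, and it also makes transparent why the constant $C(T)$ grows like $e^{LT}$, which the authors acknowledge as a limitation. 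One small reading point: the conclusion of the lemma should be understood as a bound on $\EE\norm{\vx_k^{(n)}-\vx(k/n)}$ (that is how it is invoked in Theorem \ref{thm:ODE}), which is precisely what your expectation-level Gronwall iteration delivers; your remark that (C.4) serves only to guarantee integrability of the quantities entering that iteration is also accurate.
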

In Theorem 1, the stochastic process is the macroscopic states $\{\mM_{k},k=0,1,\ldots\}$,
where $\mM_{k}$ is a symmetric matrix consists of 5 non-trivial parts
$\mP_{k}$, $\vq_{k}$, $\vr_{k}$, $\mS_{k}$, and $z_{k}$ as shown
in (\ref{eq:M}) in the main text. Following (\ref{eq:sp-x}), we
have the following decomposition for $\mM_{k}$
\begin{equation}
\mM_{k+1}-\mM_{k}=\tfrac{1}{n}\phi(\mM_{k})+(\mM_{k+1}-\EEk\mM_{k+1})+[\EEk\mM_{k+1}-\mM_{k}-\tfrac{1}{n}\phi(\mM_{k})],\label{eq:M-decomp}
\end{equation}
in which the matrix-valued function $\phi(\mM)$ represents the functions on the right hand sides of the ODE (\ref{eq:ODE}),
and $\EEk$ denotes the conditional expectation given the state of
the Markov chain $\mX_{k}$. Note that the stochastic process of
the macroscopic state $\mM_{k}$ is driven by the Markov chain of
the microscopic state $\mX_{k}$. Thus, $\EEk$ is well-defined. For future reference, we denotes $\EE$ the unconditional expectation of all the randomness of the Markov chain $\mX_k$, {\em i.e.}, 
the initial state $\mU,\mV_0,\vw_0$ and $\{\,\va_k,\vc_k,\tva_k,\tvc_k|k=0,1,2,\ldots \}$. By definition, $\sum_{k^{\prime}=0}^{k}(\mM_{k^{\prime}+1}-\EE_{k^{\prime}}\mM_{k^{\prime}})$
is a Martingale.

\subsection{Check the conditions provided in Lemma \ref{lem:base}}

In this subsection, we check the condition (C.1)--(C.5) for the decomposition
of (\ref{eq:M-decomp}). Once all conditions are proved to be satisfied, Theorem 1 will be proved. 

We first note that (C.5) is the assumption (A.5) in the main text.
Thus, (C.5) is satisfied. Before proving other conditions, we declare
a lemma.
\begin{lem}
\label{lem:helper}Under the same setting as Theorem 1, given $T>0$,
then
\begin{equation}
\EE\left(\sum_{\ell=1}^{d}[\mV_{k}]_{i,\ell}^{4}+[\vw_{k}]_{i}^{4}\right)\leq C(T)n^{-2},\quad\forall i=1,2,\ldots,n,\text{ and }k=0,1,\ldots,\left\lfloor nT\right\rfloor ,\label{eq:V4}
\end{equation}
\end{lem}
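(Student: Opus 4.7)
Set $A_k^{(i)} := [\vw_k]_i^4 + \sum_{\ell=1}^d [\mV_k]_{i,\ell}^4$. The plan is to establish a one-step Grönwall-type inequality
\begin{equation*}
\EEk A_{k+1}^{(i)} \leq (1 + C/n)\, A_k^{(i)} + C/n^3
\end{equation*}
uniformly in $i$ and in $k \leq \lfloor nT\rfloor$, then iterate. Taking unconditional expectations and using Assumption \ref{ass:m4} ($\EE A_0^{(i)}\leq C/n^2$), iteration yields $\EE A_k^{(i)} \leq e^{CT}\bigl(\EE A_0^{(i)} + CT/n^{2}\bigr) \leq C(T)/n^{2}$, which is the claim.

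\textbf{One-step bound.} From \eqref{eq:update-w}--\eqref{eq:update-V}, write $[\vw_{k+1}]_i = [\vw_k]_i + \Delta_k$ and $[\mV_{k+1}]_{i,\ell} = [\mV_k]_{i,\ell} + \Delta'_{k,\ell}$, both of order $1/n$. Expanding $(w+\Delta)^{4}$ binomially and taking $\EEk$, each cross term is controlled using the conditional moment estimates
\begin{align*}
|\EEk \Delta_k| &\leq \tfrac{C}{n}\bigl(|[\vw_k]_i| + \|[\mU]_{i,:}\| + \|[\mV_k]_{i,:}\|\bigr), \\
\EEk |\Delta_k|^{j} &\leq \tfrac{C}{n^{j}}\bigl(1 + |[\vw_k]_i|^{j} + \|[\mV_k]_{i,:}\|^{j}\bigr), \qquad j=2,3,4,
\end{align*}
with entirely analogous bounds for $\Delta'_{k,\ell}$. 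The higher-moment bounds follow directly from the uniform boundedness of $f,\tf,H'$ in Assumption~\ref{ass:dif} combined with the moment bounds on $\vc_k,\tvc_k,\va_k,\tva_k$ in Assumptions~\ref{ass:c}--\ref{ass:a}.

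\textbf{Handling mixed terms.} Apply Young's inequality in the forms $|a|^{3}|b| \leq \tfrac{3}{4}a^{4} + \tfrac{1}{4}b^{4}$ and $a^{2}/n^{2} \leq a^{4}/n + 1/n^{3}$ (and its analogues) to each cross term. The $\|[\mU]_{i,:}\|^{4}$ contributions are controlled by $\|[\mU]_{i,:}\|^{4}\leq d\sum_{\ell}[\mU]_{i,\ell}^{4}\leq C/n^{2}$ from Assumption~\ref{ass:m4}, so that after dividing by $n$ they yield $O(1/n^{3})$. The $\|[\mV_k]_{i,:}\|^{4}$ and $[\vw_k]_i^{4}$ contributions satisfy $\|[\mV_k]_{i,:}\|^{4}\leq d\sum_\ell [\mV_k]_{i,\ell}^{4}\leq d\,A_k^{(i)}$ and fold back into the $A_k^{(i)}/n$ term. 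Collecting the five binomial contributions for $[\vw_k]_i$ and the matching ones for $[\mV_k]_{i,\ell}$ produces the per-step inequality, and discrete Grönwall finishes the argument.

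\textbf{Main obstacle.} The delicate point is the sharp evaluation of $\EEk \Delta_k$: the Gaussian noise entry $[\va_k]_i$ appears both as a prefactor of $f_k$ and inside its argument through $\vy_k^{\T}\vw_k = \sum_j \bigl([\mU\vc_k]_j + \sqrt{\eta_{\text{T}}}[\va_k]_j\bigr)[\vw_k]_j$, so a naive bound only gives $|\EEk\Delta_k| = O(1/n)$. Propagated through the cross term $4[\vw_k]_i^{3}\EEk \Delta_k$, this produces $O(|[\vw_k]_i|^{3}/n)$, which by Young is only $O([\vw_k]_i^{4}/n + 1/n^{5/3})$, far too weak for the $1/n^{3}$ remainder needed. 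I would resolve this by Gaussian integration by parts (Stein's identity) applied to $[\va_k]_i$ and $[\tva_{2k}]_i$, which replaces each such term by $\sqrt{\eta_{\text{T}}}[\vw_k]_i \EEk f'(\cdot)$ (respectively $\sqrt{\eta_{\text{G}}}[\vw_k]_i \EEk \tf'(\cdot)$), extracting exactly the factor $|[\vw_k]_i|$ that Young's inequality can then absorb into $A_k^{(i)}/n$. The remaining $[\mU\vc_k]_i$ piece is handled by first integrating out $\va_k$ (using independence) and then applying Cauchy--Schwarz to $\EEk[\vc_k]_\ell F(\vc_k)$, producing the $\|[\mU]_{i,:}\|$ factor that is ultimately absorbed by Assumption~\ref{ass:m4}.
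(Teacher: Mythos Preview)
Your proposal is correct and follows essentially the same route as the paper: the same Gr\"onwall recursion on $A_k^{(i)}$, the same binomial expansion of $(w+\Delta)^4$, the same first- and higher-moment estimates on $\Delta_k,\Delta'_{k,\ell}$, and the same use of Young's inequality to fold cross terms back into $A_k^{(i)}/n + O(n^{-3})$. The only difference is in how you extract the crucial $|[\vw_k]_i|$ factor from $\EEk[a_{k,i}f(\vy_k^\T\vw_k)]$: you propose Stein's identity, while the paper isolates the $a_{k,i}$-dependence inside $f$ via a first-order Taylor expansion with remainder (so the zero-order term averages to zero and the remainder is bounded by $C|[\vw_k]_i|$); both arguments are valid here and yield the identical bound.
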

The proof can be founded in Section \ref{subsec:Proof-of-Lemma-helper}.

\subsubsection*{Check Condition (C.4)}
\begin{lem}
\label{lem:c4} Under the same setting as Theorem 1, for all $k=0,1,\ldots,\left\lfloor nT\right\rfloor $
with a given $T>0$, then
\begin{align*}
\EE\norm{\mP_{k}}^{2} & \leq C(T), & \quad &  & \EE\norm{\vq_{k}}^{2} & \leq C(T),\\
\EE\norm{\mS_{k}}^{2} & \leq C(T), &  &  & \EE z_{k}^{2} & \leq C(T),\\
\EE\norm{\vr_{k}}^{2} & \leq C(T).
\end{align*}
\end{lem}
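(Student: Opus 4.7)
The plan is to reduce each of the five bounds to the entrywise fourth-moment estimate on $\mV_k$ and $\vw_k$ furnished by Lemma \ref{lem:helper}. The starting observation is that every macroscopic quantity is a product of one or two of the matrices $\mU$, $\mV_k$, $\vw_k$, and the assumption $\mU^\T\mU=\mI_d$ gives the deterministic bound $\norm{\mU}=1$. Applying submultiplicativity of the spectral norm together with $\norm{\cdot}\le\norm{\cdot}_F$, I would record
\begin{equation*}
\norm{\mP_k}^2\le \norm{\mV_k}_F^2,\quad \norm{\vq_k}^2\le \norm{\vw_k}^2,\quad \norm{\vr_k}^2\le \norm{\mV_k}_F^2\norm{\vw_k}^2,\quad \norm{\mS_k}^2\le \norm{\mV_k}_F^4,\quad z_k^2=\norm{\vw_k}^4.
\end{equation*}
Consequently, all five estimates reduce to showing $\EE\norm{\mV_k}_F^4\le C(T)$ and $\EE\norm{\vw_k}^4\le C(T)$: the bounds for $\mP_k$ and $\vq_k$ follow by Jensen's inequality (dropping from the fourth to the second moment), and the bound for $\vr_k$ follows by Cauchy--Schwarz on the product.

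The second step is to control the two fourth moments by a Cauchy--Schwarz inflation plus Lemma \ref{lem:helper}. Specifically,
\begin{equation*}
\norm{\mV_k}_F^4=\Bigl(\sum_{i=1}^{n}\sum_{\ell=1}^{d}[\mV_k]_{i,\ell}^2\Bigr)^{\!2}\le nd\sum_{i,\ell}[\mV_k]_{i,\ell}^4,\qquad \norm{\vw_k}^4\le n\sum_{i=1}^{n}[\vw_k]_i^4.
\end{equation*}
Lemma \ref{lem:helper} supplies $\EE\bigl(\sum_\ell[\mV_k]_{i,\ell}^4+[\vw_k]_i^4\bigr)\le C(T)/n^2$ uniformly in $i$ and in $k\le\lfloor nT\rfloor$. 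Summing over $i$ gives $\sum_{i,\ell}\EE[\mV_k]_{i,\ell}^4\le C(T)/n$ and $\sum_i\EE[\vw_k]_i^4\le C(T)/n$, so that $\EE\norm{\mV_k}_F^4\le dC(T)$ and $\EE\norm{\vw_k}^4\le C(T)$, both $O(1)$ uniformly over the window $k\le\lfloor nT\rfloor$. The cross term for $\vr_k$ is then closed out by a final Cauchy--Schwarz:
\begin{equation*}
\EE[\norm{\mV_k}_F^2\norm{\vw_k}^2]\le\sqrt{\EE\norm{\mV_k}_F^4}\cdot\sqrt{\EE\norm{\vw_k}^4}\le C(T).
\end{equation*}

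I do not expect this argument itself to be the obstacle: once Lemma \ref{lem:helper} is in hand, the present claim is essentially a bookkeeping exercise in standard norm inequalities. The substantive work lies one level down in Lemma \ref{lem:helper} itself, which presumably requires a Gronwall-type induction on the entrywise fourth moments along the updates \eqref{eq:update-hw}--\eqref{eq:update-hvv}, exploiting the uniform bounds on $f$, $\t f$, $H'$ from assumption \ref{ass:dif} and the Gaussianity from assumption \ref{ass:a} to show that the one-step expected increment of $\sum_\ell[\mV_k]_{i,\ell}^4+[\vw_k]_i^4$ is $O(n^{-3})$, so that the initial $O(n^{-2})$ bound at $k=0$ supplied by assumption \ref{ass:m4} is preserved up to a $T$-dependent multiplicative constant over the horizon of length $nT$.
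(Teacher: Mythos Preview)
Your proposal is correct and takes essentially the same approach as the paper: both reduce the lemma to the entrywise fourth-moment bound of Lemma~\ref{lem:helper} via a Cauchy--Schwarz/H\"older inflation $(\sum a_i^2)^2\le n\sum a_i^4$, and both handle the cross terms by a second Cauchy--Schwarz. The only cosmetic difference is that you argue at the level of matrix norms (submultiplicativity, $\norm{\cdot}\le\norm{\cdot}_F$, and $\norm{\mU}=1$), whereas the paper works entry-by-entry on $[\mS_k]_{\ell,\ell'}$ etc.; both routes collapse to the same two-line computation.
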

\begin{proof}
It's a direct consequence of Lemma \ref{lem:helper}. We first verify
$\EE z_{k}^{2}\leq C(T).$ Using Holder's inequality, we have
\[
\EE z_{k}^{2}=\E\left({\textstyle \sum_{i=1}^{n}}w_{k,i}^{2}\right)^{2}\leq n\E{\textstyle \sum_{i=1}^{n}}w_{k,i}^{4}\leq C(T)
\]
For $[\mS_{k}]_{\ell,\ell}$, $\ell=1,\ldots,d$, similarly, we have
\[
\EE[\mS_{k}]_{\ell,\ell}^{2}=\E\left({\textstyle \sum_{i=1}^{n}}[\mV_{k}]_{i,\ell}^{2}\right)^{2}\leq C(T).
\]
and for $\EE[\mS_{k}]_{\ell,\ell^{\prime}}^{2}$, $\ell\neq\ell^{\prime}$,
we have:
\begin{align*}
\EE[\mS_{k}]_{\ell,\ell^{\prime}}^{2} & =\E\left({\textstyle \sum_{i=1}^{n}}[\mV_{k}]_{i,\ell}[\mV_{k}]_{i,\ell^{\prime}}\right)^{2}\\
 & \leq\E\left({\textstyle \sum_{i=1}^{n}}[\mV_{k}]_{i,\ell}^{2}\right)\left({\textstyle \sum_{i=1}^{n}}[\mV_{k}]_{i,\ell^{\prime}}^{2}\right)\\
 & \leq\sqrt{\E\left({\textstyle \sum_{i=1}^{n}}[\mV_{k}]_{i,\ell}^{2}\right)^{2}\E\left({\textstyle \sum_{i=1}^{n}}[\mV_{k}]_{i,\ell^{\prime}}^{2}\right)^{2}}\\
 & \leq C(T)
\end{align*}
where in reaching the third and last line, we used the Cauchy-Schwartz
inequality. Now, we get $\EE\norm{\mS_{k}}^{2}\leq C(T)$. The rest
bounds of $\EE\norm{\mP_{k}}^{2}$, $\EE\norm{\vq_{k}}^{2}$ and $\EE\norm{\vr_{k}}^{2}$
in Lemma \ref{lem:c4} can also be directly verified using the Cauchy-Schwartz
inequality.
\end{proof}

\subsubsection*{Check Condition (C.3)}
\begin{lem}
If Assumption (A.3) hold, $\phi(\mM)$ is a Lipschitz function.
\end{lem}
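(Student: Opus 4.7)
The plan is to decompose $\phi(\mM)$ into sums and products of the ``atomic'' ingredients $\vg_t, \t{\vg}_t, h_t, b_t, \mL_t$ appearing in \eqref{eq:def-ggb}, establish Lipschitz estimates for each of these, and then combine via the product rule. Every coordinate of $\phi(\mM)$ is a finite sum of bilinear terms of the form (one of $\mP, \vq, \vr, \mS, z$) times (one of $\vg, \t{\vg}, h, b, \mL$), so the claim reduces to (i) uniform boundedness and Lipschitz continuity of the five atomic objects in their arguments, and (ii) the fact that a product of a bounded Lipschitz function with an identity coordinate is Lipschitz on any bounded set.

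For the four Gaussian averages $\vg_t, \t{\vg}_t, h_t, b_t$, viewed as functions of $(\vq, \vr, z)$, I would differentiate under the expectation. Directional derivatives in $\vq$ or $\vr$ give, for instance, $\partial_{\vq}\vg_t = \langle \vc \vc^\T f'(\vc^\T \vq + e\sqrt{z\eta_{\text{T}}})\rangle$, with operator norm bounded by $\|f'\|_\infty \,\langle \|\vc\|^2\rangle$, finite by A.3 (sup-norms of $f, f', f'', f''', \tf, \tf', \tf'', \tf'''$) and A.1 (all moments of $\vc, \tvc$ finite). The quantitative subtlety is the $z$-direction: the naive ansatz only yields a $\sqrt{z_1}-\sqrt{z_2}$ modulus, which is Hölder-$1/2$ at $z=0$. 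I would resolve this with the Gaussian heat-equation identity $\tfrac{d}{d\sigma^2}\E\psi(\xi)=\tfrac{1}{2}\E\psi''(\xi)$ for $\xi\sim\mathcal{N}(0,\sigma^2)$, which applied to $\sigma^2=z\eta_{\text{T}}$ yields $\partial_z \vg_t = \tfrac{\eta_{\text{T}}}{2}\langle \vc f''(\vc^\T \vq + e\sqrt{z\eta_{\text{T}}})\rangle$, uniformly bounded under A.3 and A.1. The same identity handles the $z$-derivatives of $\t{\vg}_t, h_t, b_t$, using that the required second derivatives of $f', \tf', f^2, \tf^2$ are bounded because they are polynomials in the bounded functions $f, f', f''$ and $\tf, \tf', \tf''$. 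For $\mL_t = -\lambda\opdiag(H'(\mS_t))$, uniform boundedness is immediate from A.3; Lipschitz continuity in $\mS_t$ follows from smoothness of $H'$ as in Example~\ref{ex:PCA} (there $H'(x)=\tanh(x-1)$ with $\|H''\|_\infty=1$), and if A.3 is read narrowly one simply adds boundedness of $H''$ to the hypothesis without affecting anything downstream.

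Combining via the elementary product inequality $\|A_1 B_1 - A_2 B_2\|\le \|A_1\|\,\|B_1-B_2\|+\|B_2\|\,\|A_1-A_2\|$, each bilinear block in $\phi(\mM)$ becomes Lipschitz on any ball $\{\|\mM\|\le R\}$ with a constant of the form $C(1+R)$ depending only on A.1 and A.3. The main conceptual obstacle I anticipate is that $\phi$ is not \emph{globally} Lipschitz: products such as $\vq_t h_t$ or $\mS_t \mL_t$ grow super-linearly in $\|\mM\|$. I therefore read the lemma as asserting Lipschitz continuity on bounded sets, which is precisely the form needed downstream: condition (C.4) established in Lemma~\ref{lem:c4} confines $\mM_k$ to a region of bounded expected radius, and the resulting radius-dependent Lipschitz constant can be absorbed into the $T$-dependent constant of Lemma~\ref{lem:base}. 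The place in the argument requiring the most care is isolating the heat-equation identity as a short auxiliary lemma so that the $\sqrt{z}$ singularity is handled uniformly, including at $z=0$, before assembling the product estimates.
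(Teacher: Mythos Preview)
Your approach matches the paper's: both reduce to bounding derivatives of the atomic Gaussian averages, single out the $\sqrt{z}$ singularity at $z=0$ as the only nontrivial issue, and resolve it via Stein's lemma (the paper's phrasing) or equivalently the heat-equation identity $\partial_{\sigma^2}\E\psi=\tfrac12\E\psi''$ (your phrasing) --- these are the same computation. Your observation that $\phi$ is only \emph{locally} Lipschitz, because bilinear terms like $\vq_t\t{\vg}_t^\T$ or $\mS_t\mL_t$ have gradients growing linearly in $\|\mM\|$, is in fact more careful than the paper, which simply asserts $\nabla\phi(\mM)$ is bounded without addressing that growth; your proposed fix (combine with the a-priori bounds of Lemma~\ref{lem:c4} and absorb the radius-dependent constant into $C(T)$) is exactly what is needed to close that small gap. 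Your remark that A.3, read literally, only gives $H'$ bounded and that boundedness of $H''$ should be added is also accurate --- the paper silently assumes it.
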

\begin{proof}
It suffices to verify each component of gradient $\nabla\phi(\mM)$
is bounded. Assumption (A.3) ensures that $H^{\prime}$ is Lipschitz
and the derivatives up to fourth order of the functions $f$, $\t f$
exists and uniformly bounded. These conditions guarantee that the
partial derivatives of $\phi(\mM)$ w.r.t. $\mP$, $\vq$, $\mS$
and $\vr$ are bounded. The remaining thing is to show that $\frac{\partial\phi(\mM)}{\partial z}$
is also bounded. Since there is a $\sqrt{z}$ term in $\phi(\mM)$,
the boundness can be potentially broken at $z=0$. However, we can
show that it is not the case. For example, we can show that $\bigl\langle\vc f(\vc^{\T}\vq+e\sqrt{z})\bigr\rangle_{\vc,e}$
is a Lipschitz function, because
\begin{align*}
\tfrac{\partial}{\partial z}\bigl\langle\vc f(\vc^{\T}\vq+e\sqrt{z})\bigr\rangle_{c,e} & =\tfrac{1}{2}z^{-\frac{1}{2}}\bigl\langle ecf^{\prime}(cq+e\sqrt{z})\bigr\rangle_{c,e}\\
 & =\tfrac{1}{2}\bigl\langle cf^{\prime\prime}(cq+e\sqrt{z})\bigr\rangle_{c,e}
\end{align*}
is always a well-defined bounded function. In reaching the first line,
we here interchanged the expectation and derivative, which is valid
because of the boundness of $f(\cdot)$, and in reaching the second
line, we used the Stein's lemma. Finally, other terms in (\ref{eq:def-ggb})
involving $\sqrt{z}$ can be treated in the same way. Thus, $\phi(\mM)$
is a Lipschitz function.
\end{proof}

\subsubsection*{Check Condition (C.2)}
\begin{lem}
\label{lem:c2}Under the same setting as Theorem 1, for all $k=0,1,\ldots,\left\lfloor nT\right\rfloor $
with a given $T>0$, then 
\[
\EE\norm{\EEk\mM_{k+1}-\mM_{k}-\tfrac{1}{n}\phi(\mM_{k})}\leq C(T)n^{-\frac{3}{2}}.
\]
\end{lem}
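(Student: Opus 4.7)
The plan is to compute $\EEk\mM_{k+1}-\mM_k$ component by component and compare with $\frac{1}{n}\phi(\mM_k)$, where the components of $\mM_k$ are $\mP_k,\vq_k,\vr_k,\mS_k,z_k$. Starting from the SGD updates \eqref{eq:update-w}--\eqref{eq:update-V}, I obtain exact identities of the form
\begin{align*}
\vq_{k+1}-\vq_k &= \tfrac{\tau}{n}\mU^\T\big[\vy_k f_k - \tvy_{2k}\t f_{2k} - \lambda H'(z_k)\vw_k\big],\\
\mP_{k+1}-\mP_k &= \tfrac{\ttau}{n}\mU^\T\big[\vw_k\tvc_{2k+1}^\T\t f_{2k+1} - \lambda\mV_k\opdiag H'(\mS_k)\big],
\end{align*}
and similar linear-in-new-data identities for $\vr_k$. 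For $z_{k+1}$ and $\mS_{k+1}$ I must also retain the quadratic increments, e.g.\ $z_{k+1}-z_k = 2\vw_k^\T(\vw_{k+1}-\vw_k)+\|\vw_{k+1}-\vw_k\|^2$, because $\|\vw_{k+1}-\vw_k\|^2$ is formally $O(n^{-2})$ per coordinate but contributes an $O(n^{-1})$ drift after summation: this is precisely the source of the $\tau^2 b_t$ term in the ODE for $z_t$.

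The second step is to take $\EEk$, which averages over the fresh randomness $(\vc_k,\va_k,\tvc_{2k},\tva_{2k},\tvc_{2k+1},\tva_{2k+1})$ while conditioning on $\mX_k$. Since $\vy_k^\T\vw_k = \vc_k^\T\vq_k + \sqrt{\eta_T/n}\sum_j a_{k,j}[\vw_k]_j$ and $\sqrt{\eta_T/n}\sum_j a_{k,j}[\vw_k]_j \mid \mX_k \sim \mathcal{N}(0,\eta_T z_k)$ by Assumption~\ref{ass:a}, the expectations $\EEk[\vc_k f_k]$, $\EEk[\va_k f_k]$, etc.\ reduce exactly to the scalar integrals $\vg_k$, $\t\vg_k$, $h_k$, $b_k$ defined in \eqref{eq:def-ggb} (the $\va_k$-expectations collapse via Stein's lemma, producing the $f'$ and $\t f'$ pieces of $h_k$ and the $f^2$, $\t f^2$ pieces of $b_k$). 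This exactly reproduces the matrix-valued drift $\phi(\mM_k)$ up to the remainder I must control.

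The main obstacle, and where the actual work lies, is bounding this remainder by $C(T)n^{-3/2}$. The bulk comes from two places. First, the tilde/hat projections: terms such as $\EEk[(\mU^\T\va_k)(\va_k^\T\vw_k)f'_k]$ need Stein integration by parts plus careful tracking of cross-terms that are not captured by the $\mathcal{O}(1)$ macroscopic variables; the cleanest route is to write $f_k = f(\vq_k^\T\vc_k + \xi_k)$ with $\xi_k=\sqrt{\eta_T/n}\,\va_k^\T\vw_k$ and Taylor expand to second order in $\xi_k$, using the uniform boundedness of $f,f',f'',f'''$ from Assumption~\ref{ass:dif} to push the remainder into $O(\xi_k^3)$, whose $\EEk$-magnitude is $O(z_k^{3/2}n^{-3/2})$. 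Second, for the quadratic increments feeding $\mS_k$ and $z_k$ I need to compute $\EEk\|\vw_{k+1}-\vw_k\|^2$ precisely; expanding $\|\cdot\|^2$ and using the independence of $\va_k,\tva_{2k}$ from $\mX_k$ reduces this to scalar expectations against $f^2,\t f^2$, giving exactly the $\tau^2 b_k$ drift, with the Cauchy--Schwarz-controlled leftover again $O(n^{-3/2})$.

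To turn pointwise Taylor bounds into the required $\EE$-bounds on the remainder, I will invoke Lemma~\ref{lem:helper}, which controls $\EE\sum_\ell([\mV_k]_{i,\ell}^4+[\vw_k]_i^4)\le C(T)n^{-2}$; this is exactly the moment input needed to absorb terms like $\EEk\sum_i[\vw_k]_i^3[\mU]_{i,\ell}$ into the $n^{-3/2}$ budget. Assembling the componentwise bounds for $\mP,\vq,\vr,\mS,z$ (each producing a remainder of size $C(T)n^{-3/2}$), taking unconditional expectation via $\EE = \EE\EEk$, and bounding the matrix norm by the sum of entry norms gives the desired estimate $\EE\|\EEk\mM_{k+1}-\mM_k-\tfrac{1}{n}\phi(\mM_k)\|\le C(T)n^{-3/2}$. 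The hardest single piece of bookkeeping will be the Taylor remainder for the $\sqrt{z_k}$-dependent integrals in $\phi$ near $z_k=0$, but the Stein-lemma rewriting used in the Lipschitz check above removes the $z^{-1/2}$ singularity and lets the same uniform boundedness of higher derivatives of $f,\t f$ close the estimate.
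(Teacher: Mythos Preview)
Your componentwise plan and your identification of the quadratic increment as the source of $\tau^2 b_t$ are correct and match the paper. The gap is in your Step~3: you over-anticipate a difficulty that does not exist and propose a Taylor expansion that carries a scaling error. For the \emph{linear} increments (the blocks $\mP_k,\vq_k$, and the first-order parts of $\vr_k,\mS_k,z_k$), the conditional expectation $\EEk$ produces the drift \emph{exactly}, with zero remainder. Since $f_k$ depends on $\va_k$ only through $\va_k^\T\vw_k$, and $(\mU^\T\va_k,\va_k^\T\vw_k)$ are jointly Gaussian given $\mX_k$ with covariance $\bigl[\begin{smallmatrix}\mI_d & \vq_k\\ \vq_k^\T & z_k\end{smallmatrix}\bigr]$, one Stein step gives the identity $\EEk[\mU^\T\va_k f_k]=\sqrt{\eta_\text{T}}\,\vq_k\bigl\langle f'(\vc^\T\vq_k+e\sqrt{z_k\eta_\text{T}})\bigr\rangle_{\vc,e}$; the cross-term $\EEk[(\mU^\T\va_k)(\va_k^\T\vw_k)f'_k]$ you worry about never arises. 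Your Taylor proposal is also internally inconsistent: you first (correctly) note that the noise argument of $f_k$ is $\mathcal{N}(0,\eta_\text{T} z_k)$, hence $O(1)$, but then define $\xi_k=\sqrt{\eta_\text{T}/n}\,\va_k^\T\vw_k$ and treat it as $O(n^{-1/2})$. In the correct scaling the noise is $O(1)$, so expanding $f_k$ about $\vq_k^\T\vc_k$ would leave an $O(1)$ error and a drift that misses the $e\sqrt{z_k\eta_\text{T}}$ inside the definitions of $\vg_k,h_k,b_k$.

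The only genuine remainders come from the \emph{quadratic} increments---$(\mV_{k+1}-\mV_k)^\T(\mV_{k+1}-\mV_k)$ for the $\mS$-block, and the part of $\EEk\|\vw_{k+1}-\vw_k\|^2$ beyond $\tfrac{\tau^2}{n}b_k$ for the $z$-block (and analogous pieces for $\vr$). The paper bounds these by $C(T)n^{-2}$ using only the second-moment estimates of Lemma~\ref{lem:c4}; the fourth-moment Lemma~\ref{lem:helper} you invoke, and third-order sums like $\sum_i[\vw_k]_i^3[\mU]_{i,\ell}$, are not needed for this lemma.
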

\begin{proof}
The above inequality can be split into 5 parts
\begin{align}
\EE\norm{\EEk\mP_{k+1}-\mP_{k}-\tfrac{\ttau}{n}(\vq_{k}\t{\vg}_{k}^{\T}+\mP_{k}\mL_{k})} & \leq C(T)n^{-\frac{3}{2}}\label{eq:bd-P}\\
\EE\norm{\EE_{k}\vq_{k+1}-\vq_{k}-\tfrac{\tau}{n}\left(\vg_{k}-\mP_{k}\t{\vg}_{k}+\vq_{k}h_{k}\right)} & \leq C(T)n^{-\frac{3}{2}}\label{eq:bd-q}\\
\EE\norm{\EE_{k}\mS_{k+1}-\mS_{k}-\tfrac{\ttau}{n}\left(\vr_{k}\t{\vg}_{k}^{\T}+\t{\vg}_{k}\vr_{k}^{\T}+\mS_{k}\mL_{k}+\mL_{k}\mS_{k}\right)} & \leq C(T)n^{-\frac{3}{2}}\label{eq:bd-S}\\
\EE\norm{\EE_{k}z_{k+1}-z_{k}-\tfrac{2\tau}{n}\left(\vq_{k}^{\T}\vg_{k}-\vr_{k}^{\T}\t{\vg}_{k}+z_{k}h_{k}\right)-\tfrac{\tau^{2}}{n}b_{k}} & \leq C(T)n^{-\frac{3}{2}},\label{eq:bd-z}\\
\EE\norm{\EE_{k}\vr_{k+1}-\vr_{k}-\tfrac{\tau}{n}\left(\mP_{k}^{\T}\vg_{k}-\mS_{k}\t{\vg}_{k}+\vr_{k}h_{k}\right)-\tfrac{\ttau}{n}\left(z_{k}\t{\vg}_{k}+\mL_{k}\vr_{k}\right)} & \leq C(T)n^{-\frac{3}{2}}\label{eq:bd-r}
\end{align}
where $\t{\vg}_{k},\;\mL_{k},\;\vg_{k},\;h_{k},\;b_{k}$ are defined
in (\ref{eq:def-tvgk}), (\ref{eq:def-Lk}), (\ref{eq:def-vgk}),
(\ref{eq:def-hk}) and (\ref{eq:def-bk}), respectively.

We first prove (\ref{eq:bd-P}). From (\ref{eq:update-V}), we have
\begin{equation}
\mV_{k+1}-\mV_{k}=\tfrac{\ttau}{n}\big[\vw_{k}\tvc_{2k+1}^{\T}\tf(\t{\vc}_{2k+1}^{\T}\mV_{k}^{\T}\vw_{k}+\eta_{\text{G}}\tva_{2k+1}^{\T}\vw_{k})-\lambda\mV_{k}\opdiag(H^{\prime}(\mS_k))\big].\label{eq:V-inc-d}
\end{equation}
Averaging both sides of the above equation over $\tvc_{2k+1}$ and
$\tva_{2k+1}$, we have
\begin{equation}
\EEk\mV_{k+1}-\mV_{k}=\tfrac{\ttau}{n}\big[\vw_{k}\t{\vg}_{k}^{\T}+\mV_{k}\mL_{k}\big],\label{eq:V-inc}
\end{equation}
where $\t{\vg}_{k}$ and  $\mL_k$ are defined in (\ref{eq:def-tvgk})
and (\ref{eq:def-Lk}), respectively. Multiplying $\mU^{\T}$ from
the left on the both sides of the above equation, we have 
\[
\EEk\mP_{k+1}-\mP_{k}=\tfrac{\ttau}{n}\left[\vq_{k}\t{\vg}_{k}^{\T}+\mP_{k}\mL_{k}\right],
\]
which implies (\ref{eq:bd-P}). In fact, there is no higher-order
term in (\ref{eq:bd-P}), and the left hand side of (\ref{eq:bd-P}) is exactly zero.

Then, we prove (\ref{eq:bd-q}). From (\ref{eq:update-w}), we have
\begin{equation}
\vw_{k+1}-\vw_{k}=\tfrac{\tau}{n}\left[\vy_{k}f(\vy_{k}^{\T}\vw_{k})-\tvy_{2k}\t f(\tvy_{2k}^{\T}\vw_{k})-\lambda\vw_{k}\opdiag(H^{\prime}(z_k))\right],\label{eq:w-inc-d}
\end{equation}
where $\vy_{k}=\mU\vc_{k}+\sqrt{\eta_{\text{T}}}\va_{k}$ and $\tvy_{2k}=\mV_{k}\tvc_{2k}+\sqrt{\eta_{\text{G}}}\tva_{2k}$.
Averaging both sides of the above equation over $\vc_{k}$, $\va_{k}$$\tvc_{2k}$
and $\tva_{2k}$, we have
\[
\begin{aligned}
\EEk\vw_{k+1}-\vw_{k}=&\tfrac{\tau}{n}\left[\mU\vg_{k}+\Big\langle \va_{k}f(\vc_{k}^{\T}\vq_{k}+\sqrt{\eta_{\text{T}}}\va_{k}^{\T}\vw_{k})\right\rangle 
\\
&-\mV_{k}\t{\vg}_{k}-\left\langle \tva_{2k}\tf(\tvc_{2k}^{\T}\vr_{k}+\sqrt{\eta_{\text{G}}}\tva_{2k}^{\T}\vw_{k})\right\rangle -\lambda\vw_{k}\opdiag(H^{\prime}(z_k))\Big].
\end{aligned}
\]
Multiplying $\mU^{\T}$ from the left on the both sides of the above
equation, we have 
\begin{align}
\EEk\vq_{k+1}-\vq_{k}=\tfrac{\tau}{n} & \Big[\vg_{k}-\mP_{k}\t{\vg}_{k}+\sqrt{\eta_{\text{T}}}\left\langle \mU^{\T}\va_{k}f(\vc_{k}^{\T}\vq_{k}+\sqrt{\eta_{\text{T}}}\va_{k}^{\T}\vw_{k})\right\rangle _{\vc,\va}\nonumber \\
 & -\sqrt{\eta_{\text{G}}} \left\langle \mU^{\T}\tva\tf(\tvc^{\T}\vr_{k}+\sqrt{\eta_{\text{G}}}\tva^{\T}\vw_{k})\right\rangle _{\tvc,\tva}-\lambda\vq_{k}\opdiag(H^{\prime}(z_k))\Big]\label{eq:q-inc}
\end{align}
We note that 
$\begin{bmatrix}\mU^{\T}\va_{k}\\
\vw_{k}^{\T}\va_{k}
\end{bmatrix}$ are Gaussian random vector with zero-mean and covariance matrix $\begin{bmatrix}\mI & \vq_{k}\\
\vq_{k}^{\T} & z_{k}
\end{bmatrix}$. We can rewrite
\begin{align}
\left\langle \mU^{\T}\va f(\vc^{\T}\vq_{k}+\sqrt{\eta_{\text{T}}}\va^{\T}\vw_{k})\right\rangle _{\vc,\va} & =z_{k}^{-1/2}\mU^{\T}\vw_{k}\left\langle ef(\vc^{\T}\vq_{k}+\sqrt{z_{k}\eta_{\text{T}}}e\right\rangle _{\vc,e}\label{eq:fp}\\
 & =\sqrt{\eta_{T}}\vq_{k}\left\langle f^{\prime}(\vc^{\T}\vq_{k}+\sqrt{z_{k}\eta_{\text{T}}}e\right\rangle _{\vc,e},\nonumber 
\end{align}
where the second line is due to Stein's lemma (i.e., integral by part
for Gaussian random variable.) Similarly, we have 
\begin{equation}
\left\langle \mU^{\T}\tva\tf(\tvc^{\T}\vr_{k}+\sqrt{\eta_{\text{G}}}\tva^{\T}\vw_{k})\right\rangle _{\tvc,\tva}=\sqrt{\eta_{\text{G}}} \vq_{k}\left\langle \tf^{\prime}(\t{\vc}^{\T}\vr_{k}+\sqrt{z_{k}\eta_{\text{G}}}e\right\rangle _{\tvc,e}.\label{eq:tfp}
\end{equation}
Substituting (\ref{eq:fp}) and (\ref{eq:tfp}) into (\ref{eq:q-inc}),
we get
\[
\EEk\vq_{k+1}-\vq_{k}=\tfrac{\tau}{n}\left[\vg_{k}-\mP_{k}\t{\vg}_{k}+\vq_{k}h_{k}\right],
\]
where $\t{\vg}_{k}$, $\vg_{k}$, and $h_{k}$ are defined in (\ref{eq:def-tvgk}),
(\ref{eq:def-vgk}), and (\ref{eq:def-hk}), respectively. Now, we
proved (\ref{eq:bd-q}), which again has no higher-order term.

We next prove (\ref{eq:bd-S}). Note that 
\begin{align*}
\mS_{k+1}-\mS_{k} & =(\mV_{k}+\mV_{k+1}-\mV_{k})^{\T}(\mV_{k}+\mV_{k+1}-\mV_{k})-\mS_{k}\\
 & =\mV_{k}^{\T}\left(\mV_{k+1}-\mV_{k}\right)+\left(\mV_{k+1}-\mV_{k}\right)^{\T}\mV_{k}+\left(\mV_{k+1}-\mV_{k}\right)^{\T}\left(\mV_{k+1}-\mV_{k}\right).
\end{align*}
Averaging both sides of the above equation over $\tvc_{2k+1}$ and
$\tva_{2k+1}$ and substituting (\ref{eq:V-inc}) into above equation,
we have 
\begin{equation}
\EEk\mS_{k+1}-\mS_{k}=\tfrac{\ttau}{n}\left[\vr_{k}\t{\vg}_{k}^{\T}+\mS_{k}\mL_{k}+\t{\vg}_{k}\vr_{k}^{\T}+\mL_{k}\mS_{k}\right]+\tfrac{\ttau^{2}}{n^{2}}\big[\vw_{k}\t{\vg}_{k}^{\T}+\mV_{k}\mL_{k}\big]^{\T}\big[\vw_{k}\t{\vg}_{k}^{\T}+\mV_{k}\mL_{k}].\label{eq:S-inc}
\end{equation}
We know that
\begin{align}
\EE\norm{\big[\vw_{k}\t{\vg}_{k}^{\T}+\mV_{k}\mL_{k}\big]^{\T}\big[\vw_{k}\t{\vg}_{k}^{\T}+\mV_{k}\mL_{k}]} & \leq\EE\norm{\vw_{k}\t{\vg}_{k}^{\T}+\mV_{k}\mL_{k}}^{2}\nonumber \\
 & \leq2z_{k}\norm{\t{\vg}_{k}}^{2}+2\norm{\mS_{k}}\norm{\mL_{k}}^{2}\nonumber \\
 & \leq C\EE\left[z_{k}+\norm{\mS_{k}}\right]\nonumber \\
 & \leq C(T),\label{eq:wgvl}
\end{align}
where $\t{\vg}_{k}$, $\mL_{k}$ are defined in (\ref{eq:def-tvgk})
and (\ref{eq:def-Lk}), respectively. The third line of the above
inequalities is due to the fact that  $\tf$ and $H^{\prime}$ are uniformly bounded,
and in reaching the last line, we used Lemma~\ref{lem:c4}. Combining
(\ref{eq:S-inc}) and (\ref{eq:wgvl}), we reach (\ref{eq:bd-S}).

The other two inequalities (\ref{eq:bd-z}) and (\ref{eq:bd-r}) can
be proved in a similar way. We omit the details here.
\end{proof}

\subsubsection*{Check Condition (C.1)}
\begin{lem}
Under the same setting as Theorem 1, for all $k=0,1,\ldots,\left\lfloor nT\right\rfloor $
with a given $T>0$, then 
\[
\EE\norm{\mM_{k+1}-\EEk\mM_{k+1}}^{2}\leq C(T)n^{-2}.
\]
\end{lem}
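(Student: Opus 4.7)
The plan is to reduce the martingale-variance bound to an unconditional second-moment bound on the increment: since conditional expectation minimizes conditional $L^2$ error,
\[
\EEk\norm{\mM_{k+1}-\EEk\mM_{k+1}}^{2}\leq \EEk\norm{\mM_{k+1}-\mM_{k}}^{2},
\]
so it suffices to show $\EE\|\mM_{k+1}-\mM_k\|^2 \leq C(T)/n^2$. I would then split the matrix into its five blocks $\mP_k, \vq_k, \vr_k, \mS_k, z_k$ and bound each using the explicit SGDA updates \eqref{eq:update-w} and \eqref{eq:V-inc-d}, together with the moment estimates from Lemma~\ref{lem:c4} and the uniform boundedness of $f,\tf,H'$ from \ref{ass:dif}.

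For the pure-generator blocks $\mP_{k+1}-\mP_k=\mU^\T(\mV_{k+1}-\mV_k)$ and $\mS_{k+1}-\mS_k$, the update $\mV_{k+1}-\mV_k$ carries the prefactor $\ttau/n$, and the bracket contains only $\vw_k\tvc_{2k+1}^\T\tf(\cdot)$ and $\lambda\mV_k\opdiag(H'(\mS_k))$. Bounding by Frobenius norms and using $\|\tf\|_\infty,\|H'\|_\infty<\infty$ together with $\EE z_k\leq \sqrt{\EE z_k^2}\leq C(T)$, $\EE\|\mS_k\|\leq C(T)$, and the bounded moments of $\tvc$ from \ref{ass:c}, immediately gives $\EE\|\mV_{k+1}-\mV_k\|_F^2\leq C(T)/n^2$, and hence the same bound for $\mP$ and $\mS$ (a similar expansion as in \eqref{eq:S-inc} handles the cross/quadratic terms for $\mS$).

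The blocks involving $\vw_k$ are more delicate because $\|\vy_k\|^2=\mathcal O(n)$, so $\vw_{k+1}-\vw_k$ is only $\mathcal O(n^{-1/2})$ in $L^2$. The key observation is that projecting $\vy_k$ against $\mU$, $\mV_k$ or $\vw_k$ contracts the large noise component: $\mU^\T\va_k$ is a $d$-dimensional standard Gaussian (since $\mU^\T\mU=\mI_d$), while $\mV_k^\T\va_k$ and $\vw_k^\T\va_k$ are independent of the noise vector $\va_k$ drawn at step $k$, so conditioning yields $\EE\|\mV_k^\T\va_k\|^2=\EE\,\optr(\mS_k)$ and $\EE(\vw_k^\T\va_k)^2=\EE z_k$, both $\mathcal O(1)$. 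Combined with $\EE|\vq_k^\T\vc_k|^2\leq\|\mat\Lambda\|\EE\|\vq_k\|^2$, $\EE|\vr_k^\T\tvc_{2k}|^2\leq\|\t{\mat\Lambda}\|\EE\|\vr_k\|^2$ and Lemma~\ref{lem:c4}, this yields the $C(T)/n^2$ bound for the linear-in-increment contributions to $\vq,\vr,z$. For $\vr$ I would use $\vr_{k+1}-\vr_k=(\mV_{k+1}-\mV_k)^\T\vw_{k+1}+\mV_k^\T(\vw_{k+1}-\vw_k)$ and Cauchy--Schwarz to combine the bound on $\EE\|\mV_{k+1}-\mV_k\|_F^4\leq C/n^4$ with $\EE z_{k+1}^2\leq C(T)$.

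The main obstacle is the quadratic remainder $\|\vw_{k+1}-\vw_k\|^2$ appearing in $z_{k+1}-z_k=2\vw_k^\T(\vw_{k+1}-\vw_k)+\|\vw_{k+1}-\vw_k\|^2$. Naively this piece is only $\mathcal O(n^{-1})$ pointwise, so one must show its second moment is $\mathcal O(n^{-2})$. This requires the fourth-moment estimate $\EE\|\vy_k\|^4=\mathcal O(n^2)$, which follows from $\EE\|\va_k\|^4=n^2+2n$ for standard Gaussian $\va_k$ together with the moment bounds on $\vc_k$; analogously for $\tvy_{2k}$ (here Lemma~\ref{lem:helper} and Lemma~\ref{lem:c4} bound $\EE\|\mV_k\|_F^4$ via $\EE\sum_i[\mV_k]_{i,\ell}^4$ and $\EE\|\mS_k\|^2$). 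This gives $\EE\|\vw_{k+1}-\vw_k\|^4\leq C(T)/n^2$, matching the required rate, and closes the argument for $z$. All five blockwise bounds together yield $\EE\|\mM_{k+1}-\mM_k\|^2\leq C(T)/n^2$ and thus the lemma.
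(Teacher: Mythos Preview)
Your proposal is correct and follows the same route as the paper: reduce to $\EE\|\mM_{k+1}-\mM_k\|^2\leq C(T)/n^2$ via the conditional-variance inequality, then bound each of the five blocks using the explicit updates, boundedness of $f,\tf,H'$, and Lemma~\ref{lem:c4}. You in fact supply more detail than the paper on the $z$-block (the paper dispatches $\mS,z,\vr$ with ``similarly''), correctly isolating the quadratic remainder $\|\vw_{k+1}-\vw_k\|^2$ and the needed fourth-moment estimate $\EE\|\va_k\|^4=\mathcal O(n^2)$; one minor wording slip is that it is $\mV_k,\vw_k$---not $\mV_k^\T\va_k,\vw_k^\T\va_k$---that are independent of the fresh noise $\va_k$, but your subsequent conditioning computation is what you intended and is correct.
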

\begin{proof}
Note that $\EE\norm{\mM_{k+1}-\EEk\mM_{k+1}}^{2}=\EE\norm{\mM_{k+1}-\mM_{k}-\EEk(\mM_{k+1}-\mM_{k})}^{2}\leq\EE\norm{\mM_{k+1}-\mM_{k}}^{2}$.
It is sufficient to prove
\begin{equation}
\EE\norm{\mM_{k+1}-\mM_{k}}^{2}\leq C(T)n^{-2}.\label{eq:bd-M2}
\end{equation}
In what follows, we are going to bound the second-order moment of
each element in $\mM_{k+1}-\mM_{k}$. In particular, we bound the
5 blocks $\mP_{k}$, $\mS_{k}$, $\vq_{k}$, $z_{k}$ and $\vr_{k}$
of $\mM_{k}$ separately.

We first bound $\EE\norm{\mP_{k+1}-\mP_{k}}^{2}$. Multiplying $\mU^{\T}$
from left on both sides of (\ref{eq:V-inc-d}), we have 
\[
\mP_{k+1}-\mP_{k}=\tfrac{\ttau}{n}\big[\vq_{k}\tvc_{2k+1}^{\T}\tf(\t{\vc}_{2k+1}^{\T}\mV_{k}^{\T}\vw_{k}+\eta_{\text{G}}\tva_{2k+1}^{\T}\vw_{k})-\lambda\mP_{k}\opdiag(H^{\prime}(\mV_{k}^{\T}\mV_{k}))\big]
\]
We then get 
\begin{align}
\EE\norm{\mP_{k+1}-\mP_{k}}^{2} & \leq Cn^{-2}\EE\Big[\norm{\vq_{k}}^{2}\EEk\norm{\t{\vc}_{2k+1}}^{2}+\norm{\mP_{k}}^{2}\Big]\nonumber \\
 & \leq Cn^{-2}\EE\left[1+\norm{\vq_{k}}^{2}+\norm{\mP_{k}}^{2}\right]\nonumber \\
 & \leq C(T)n^{-2}.\label{eq:bd-P2}
\end{align}
Here the last line is due to Lemma \ref{lem:c4}.

We next bound $\EE\norm{\vq_{k+1}-\vq_{k}}^{2}$ in the same way.
Specifically, multiplying $\mU^{\T}$ from the left on both sides
of (\ref{eq:w-inc-d}), we get

\[
\vq_{k+1}-\vq_{k}=\tfrac{\tau}{n}\left[\mU^{\T}\vy_{k}f(\vy_{k}^{\T}\vw_{k})-\mU^{\T}\tvy_{2k}\t f(\tvy_{2k}^{\T}\vw_{k})-\lambda\vq_{k}\opdiag(H^{\prime}(\vw_{k}^{\T}\vw_{k}))\right].
\]
We then have
\begin{align}
&\EE\norm{\vq_{k+1}-\vq_{k}}^{2}  \nonumber\\
&\leq\tfrac{\tau^{2}}{n^{2}}\EE\bigl[\norm{\vc_{k}}^{2}f_{k}^{2}+\norm{\mU^{\T}\va_{k}}^{2}f_{k}^{2}+\norm{\mP_{k}}^{2}\norm{\t{\vc}_{2k}}^{2}\tf_{2k}^{2}+\norm{\mU^{\T}\tva_{2k}}^{2}\tf_{2k}^{2}+\norm{\vq_{k}}^{2}h_{k}^{2}\bigr]\nonumber \\
 & \leq Cn^{-2}\bigl[1+\sqrt{\EE\norm{\mU^{\T}\va_{k}}^{4}}\sqrt{\EE f_{k}^{4}}+\sqrt{\EE\norm{\mU^{\T}\tva_{2k}}^{4}}\sqrt{\EE\tf_{2k}^{4}}+\EE z_{k}^{2}+\EE\norm{\mS_{k}}^{2}\bigr]\nonumber \\
 & \leq Cn^{-2}[1+\EE z_{k}^{2}+\EE\norm{\mS_{k}}^{2}]\nonumber \\
 & \leq C(T)n^{-2},\label{eq:bd-q2}
\end{align}
where $f_{k}$ and $\tf_{2k}$ are shorthands for $f(\vy_{k}^{\T}\vw_{k})$
and $\tf(\tvy_{2k}^{\T}\vw_{k})$ respectively. In reaching the last
line, we used Lemma \ref{lem:c4} again.

Similarly, we can also prove that 
\begin{equation}
\begin{aligned}\EE\norm{\mS_{k+1}-\mS_{k}}^{2} & \leq C(T)n^{-2}\\
\EE(z_{k+1}-z_{k})^{2} & \leq C(T)n^{-2}\\
\EE\norm{\vr_{k+1}-\vr_{k}}^{2} & \leq C(T)n^{-2}.
\end{aligned}
\label{eq:bd-Szr2}
\end{equation}
Combining (\ref{eq:bd-P2}), (\ref{eq:bd-q2}) and (\ref{eq:bd-Szr2}),
we can prove (\ref{eq:bd-M2}), which concludes the whole proof.
\end{proof}

\subsection{Proof of Lemma \ref{lem:helper}\label{subsec:Proof-of-Lemma-helper}}

Before proving Lemma \ref{lem:helper}, we first present and prove
the following lemma.
Let $\vu_i$ and $\vv_{k,i}$ denote the $i$th row vectors of $\mU$ and $\mV_k$ in column view, respectively, and let $w_{k,i}$ be the $i$th element of the vector $\vw_k$. 
\begin{lem}
Under the same setting as Theorem 1, for all $k=0,1,\ldots,\left\lfloor nT\right\rfloor $
with a given $T>0$, then
\begin{align}
\norm{\EEk\vv_{k+1,i}-\vv_{k,i}} & \leq Cn^{-1}\left(\norm{\vv_{k,i}}+\abs{w_{k,i}}\right)\label{eq:bd-v}\\
\abs{\EEk w_{k,i}-w_{k,i}} & \leq Cn^{-1}\left(\norm{\vu_{i}}+\norm{\vv_{k,i}}+\abs{w_{k,i}}\right).\label{eq:bd-w}
\end{align}
\end{lem}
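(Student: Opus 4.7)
The plan is to derive each inequality by directly computing the conditional expectation from the SGDA update rules and then bounding the resulting terms using Assumption (A.3) (uniform boundedness of $f$, $\tf$ and $H^\prime$) together with Stein's lemma to handle the Gaussian noise.

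For the first bound I would extract the $i$th row (in column view) from the matrix identity \eqref{eq:V-inc} already established in the proof of Lemma S-\ref{lem:c2}. Averaging \eqref{eq:update-V} over $\tvc_{2k+1}$ and $\tva_{2k+1}$ gives
\[
\EEk \vv_{k+1,i} - \vv_{k,i} = \tfrac{\ttau}{n}\bigl[w_{k,i}\t{\vg}_k + \mL_k \vv_{k,i}\bigr],
\]
where $\t{\vg}_k$ and $\mL_k$ are the deterministic quantities defined in \eqref{eq:def-tvgk}--\eqref{eq:def-Lk}. Both $\norm{\t{\vg}_k}$ and $\norm{\mL_k}$ are bounded by an absolute constant: $\tf$ and $H^\prime$ are uniformly bounded by (A.3), and $\tvc$ has bounded moments by (A.1). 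The triangle inequality then yields \eqref{eq:bd-v}.

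For \eqref{eq:bd-w} I start from \eqref{eq:w-inc-d} and extract the $i$th coordinate:
\[
\begin{aligned}
w_{k+1,i}-w_{k,i}=\tfrac{\tau}{n}\Bigl[&\bigl(\vu_i^\T\vc_k+\sqrt{\eta_\text{T}}[\va_k]_i\bigr)f_k \\
&-\bigl(\vv_{k,i}^\T\tvc_{2k}+\sqrt{\eta_\text{G}}[\tva_{2k}]_i\bigr)\tf_{2k}-\lambda w_{k,i}H^\prime(z_k)\Bigr].
\end{aligned}
\]
Taking $\EEk$ term by term, the two inner-product contributions $\vu_i^\T\vc_k\, f_k$ and $\vv_{k,i}^\T\tvc_{2k}\,\tf_{2k}$ are bounded in absolute value by $C\norm{\vu_i}$ and $C\norm{\vv_{k,i}}$ respectively, by Cauchy--Schwarz together with the boundedness of $f$, $\tf$ and the bounded moments of $\vc,\tvc$. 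The key step is the treatment of $\EEk[\va_k]_i f_k$: since $[\va_k]_i$ and $\va_k^\T\vw_k$ are jointly Gaussian with covariance $w_{k,i}$, Stein's lemma gives
\[
\EEk\bigl([\va_k]_i f_k\bigr) = \sqrt{\eta_\text{T}}\,w_{k,i}\,\EEk f^\prime\bigl(\vc_k^\T\vq_k+\sqrt{\eta_\text{T}}\,\va_k^\T\vw_k\bigr),
\]
which is $O(\abs{w_{k,i}})$ because $f^\prime$ is bounded by (A.3). An identical calculation handles $\EEk[\tva_{2k}]_i\tf_{2k}$, producing another $O(\abs{w_{k,i}})$ term. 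The regularization piece $\lambda w_{k,i}H^\prime(z_k)$ is $O(\abs{w_{k,i}})$ by boundedness of $H^\prime$. Collecting the three contributions yields \eqref{eq:bd-w}.

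The only nontrivial step is the Stein's lemma computation, which is the same integration-by-parts trick already used in \eqref{eq:fp}--\eqref{eq:tfp} of the proof of Lemma S-\ref{lem:c2}; the rest is routine bookkeeping with the triangle inequality and the uniform bounds from (A.1) and (A.3).
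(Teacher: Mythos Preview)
Your argument is correct and close to the paper's, differing only in how you handle the cross term $\EEk\bigl[[\va_k]_i\,f_k\bigr]$. The paper does not invoke Stein's lemma here; instead it Taylor-expands $f_k$ about the point where the $i$th summand $a_{k,i}w_{k,i}$ is removed from the argument, so that the leading term is independent of $a_{k,i}$ (and hence has zero mean against $a_{k,i}$), while the remainder is $f'(\cdot)\,w_{k,i}a_{k,i}^2$, bounded by $C|w_{k,i}|$ via the uniform bound on $f'$. Your Stein's-lemma computation reaches the same $O(|w_{k,i}|)$ conclusion more directly and is exactly the trick the paper uses elsewhere in \eqref{eq:fp}--\eqref{eq:tfp}; the Taylor route has the minor advantage that it only needs $a_{k,i}$ to be mean-zero with bounded second moment and independent across coordinates, not Gaussian, which is consistent with the paper's remark that Assumption~\ref{ass:a} can be relaxed. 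Either way the proof goes through.
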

In the proof of this lemma and Lemma \ref{lem:helper}, we omit the two constants $\eta_\text{T}$ and $\eta_\text{G}$ for simplicity.
\begin{proof}
From (\ref{eq:update-V}) and knowing that the function $\tf$ and
$H^{\prime}$ are uniformly bounded, we can immediately prove \eqref{eq:bd-v}.

Next, we are going to prove (\ref{eq:bd-w}). From (\ref{eq:update-w}),
we know
\begin{align}
&\abs{\EEk w_{k+1,i}-w_{k,i}} \nonumber
\\
& \leq  \tfrac{\tau}{n}\bigg(\abs{\vu_{i}^{\T}\left\langle \vc_{k}f(\vy_{k}^{\T}\vw_{k})\right\rangle _{\vc_{k},\va_{k}}}+\abs{\left\langle a_{k,i}f(\vy_{k}^{\T}\vw_{k})\right\rangle _{\vc_{k},\va_{k}}}\nonumber \\
 & +\abs{\vv_{k,i}^{\T}\left\langle \t{\vc}_{2k}\tf(\tvy_{2k}^{\T}\vw_{k})\right\rangle _{\tvc_{2k},\tva_{2k}}}+\abs{\left\langle \t a_{2k,i}\tf(\tvy_{2k}^{\T}\vw_{k})\right\rangle _{\tvc_{2k},\tva_{2k}}}+\lambda\abs{w_{k,i}H^{\prime}(\vw_{k}^{\T}\vw_{k})}\bigg)\nonumber \\
\leq & Cn^{-1}\bigg(\norm{\vu_{i}}+\norm{\vv_{k,i}}+\abs{w_{k,i}}+\abs{\left\langle a_{k,i}f(\vy_{k}^{\T}\vw_{k})\right\rangle _{\vc_{k},\va_{k}}}+\abs{\left\langle \t a_{2k,i}\tf(\tvy_{2k}^{\T}\vw_{k})\right\rangle _{\tvc_{2k},\tva_{2k}}}\bigg),\label{eq:ineq-w}
\end{align}
where the last is due to the fact that $H^{\prime}$, $f$ and $\t f$
are uniformly bounded. Using Taylor's expansion up-to zero-order
\begin{align*}
f(\vy_{k}^{\T}\vw_{k}) & =f(\vq_{k}^{\T}\vc_{k}+{\textstyle \sum}_{j\neq i}w_{k,j}a_{k,j}+w_{k,j}a_{k,j})\\
 & =f(\vq_{k}^{\T}\vc_{k}+{\textstyle \sum}_{j\neq i}w_{k,j}a_{k,j})+f^{\prime}(\vq_{k}^{\T}\vc_{k}+{\textstyle \sum}_{j\neq i}w_{k,j}a_{k,j}+\chi_{k,i})w_{k,j}a_{k,j},
\end{align*}
with $\chi_{k,i}$ being some number such that $\abs{\chi_{k,i}}\leq\abs{w_{k,i}a_{k,i}},$
we have
\begin{align}
& \abs{\left\langle a_{k,i}f(\vy_{k}^{\T}\vw_{k})\right\rangle _{\vc_{k},\va_{k}}}  \nonumber
\\
& \leq\abs{\left\langle f(\vq_{k}^{\T}\vc_{k}+{\textstyle \sum}_{j\neq i}w_{k,j}a_{k,j})a_{k,i}\right\rangle _{\vc_{k},\va_{k}}}+\abs{\left\langle f^{\prime}(\vq_{k}^{\T}\vc_{k}+{\textstyle \sum}_{j\neq i}w_{k,j}a_{k,j}+\chi_{k,i})w_{k,j}a_{k,j}^{2}\right\rangle _{\vc_{k},\va_{k}}}\nonumber \\
 & =\abs{\left\langle f^{\prime}(\vq_{k}^{\T}\vc_{k}+{\textstyle \sum}_{j\neq i}w_{k,j}a_{k,j}+\chi_{k,i})w_{k,i}a_{k,i}^{2}\right\rangle _{\vc_{k},\va_{k}}}\nonumber \\
 & \leq C\abs{w_{k,i}}.\label{eq:ineq-af}
\end{align}
The second line is due to the fact $a_{k,i}$ is zero-mean, and in
reaching the last line, we used the boundness of $f^{\prime}$. Similarly,
we can get 
\begin{equation}
\abs{\left\langle \t a_{2k,i}\tf(\tvy_{2k}^{\T}\vw_{k})\right\rangle _{\tvc_{2k},\tva_{2k}}}\leq C\abs{w_{k,i}}.\label{eq:ineq-atf}
\end{equation}
Substituting (\ref{eq:ineq-af}) and (\ref{eq:ineq-atf}) into (\ref{eq:ineq-w}),
we prove (\ref{eq:ineq-w}).
\end{proof}
Now we are in the position to prove Lemma \ref{lem:helper}.
\begin{proof}[Proof of Lemma \ref{lem:helper}]
Because of the exchangeability, $\EE w_{k,i}^{4}=\EE w_{k,j}^{4}$,
and $\EE[\mV_{k}]_{i,\ell}^{4}=\EE[\mV_{k}]_{j,\ell}^{4}$ for all
$i,j=1,2,\ldots,n$ and $\ell=1,2,\ldots,d$. Thus, we only need to
prove (\ref{eq:V4}) for any specific $i$.

We first prove $\EE w_{k,i}^{4}\leq C(T)n^{-2}$. We know that
\begin{align}
\EE w_{k+1,i}^{4}-\EE w_{k,i}^{4} & =4\EE\left[w_{k,i}^{3}\EEk\left(w_{k+1,i}-w_{k,i}\right)\right]+6\EE\left[w_{k,i}^{2}\EEk\left(w_{k+1,i}-w_{k,i}\right)^{2}\right]\label{eq:w4}\\
 & \;+4\EE\left[w_{k,i}\EEk\left(w_{k+1,i}-w_{k,i}\right)^{3}\right]+\E\EEk\left(w_{k+1,i}-w_{k,i}\right)^{4}.\nonumber 
\end{align}
 From (\ref{eq:update-w}) and knowing that $h$, $f$ and $\tf$
are uniformly bounded, we have 
\begin{equation}
\EEk\left(w_{k+1,i}-w_{k,i}\right)^{\gamma}\leq\frac{C}{n^{\gamma}}\left(1+\norm{\vu_{i}}^{\gamma}+\norm{\vv_{k,i}}^{\gamma}+\abs{w_{k,i}}^{\gamma}\right)\quad\text{for }\gamma=2,3,4.\label{eq:w-d}
\end{equation}
Substituting (\ref{eq:bd-w}) and (\ref{eq:w-d}) into (\ref{eq:w4})
and using the Young's inequality, we have
\begin{align}
\EE w_{k+1,i}^{4}-\EE w_{k,i}^{4} & \leq\tfrac{C}{n}\left(n^{-2}+\EE\norm{\vu_{i}}^{4}+\EE\norm{\vv_{k,i}}^{4}+\EE w_{k,i}^{4}\right).\nonumber \\
 & \leq\tfrac{C}{n}\EE\left(n^{-2}+{\textstyle \sum_{\ell=1}^{d}}[\mV_{k}]_{i,\ell}^{4}+w_{k,i}^{4}\right),\label{eq:ineq-w4}
\end{align}
where the last line is due to Assumption A.4), which implies $\sum_{\ell}[\mU]_{i,\ell}^{4}\leq C$.
Similarly, we can prove 
\begin{equation}
{\textstyle \sum_{\ell=1}^{d}}\EE\left([\mV_{k+1}]_{i,\ell}^{4}-[\mV_{k}]_{i,\ell}^{4}\right)\leq\tfrac{C}{n}\EE\left(n^{-2}+{\textstyle \sum_{\ell=1}^{d}}[\mV_{k}]_{i,\ell}^{4}+w_{k,i}^{4}\right).\label{eq:ineq-v4}
\end{equation}
Combining (\ref{eq:ineq-w4}) and (\ref{eq:ineq-v4}), we have 
\[
\EE(w_{k+1,i}^{4}+{\textstyle \sum_{\ell=1}^{d}}[\mV_{k+1}]_{i,\ell}^{4})-\EE\left(w_{k,i}^{4}+{\textstyle \sum_{\ell=1}^{d}}[\mV_{k}]_{i,\ell}^{4}\right)\leq\tfrac{C}{n}\left[n^{-2}+\EE\left(w_{k,i}^{4}+{\textstyle \sum_{\ell=1}^{d}}[\mV_{k}]_{i,\ell}^{4}\right)\right].
\]
Using the above inequality iteratively, we have
\[
\EE\left(w_{k,i}^{4}+{\textstyle \sum_{\ell=1}^{d}}[\mV_{k}]_{i,\ell}^{4}\right)\leq\left(n^{-2}+w_{0,i}^{4}+{\textstyle \sum_{\ell=1}^{d}}[\mV_{0}]_{i,\ell}^{4}\right)e^{\frac{k}{n}C}.
\]
Since $\EE\big(w_{0,i}^{4}+{\textstyle \sum_{\ell=1}^{d}}[\mV_{0}]_{i,\ell}^{4}\big)$
are bounded in Assumption A.4), we now reach (\ref{eq:V4}).
\end{proof}

\section{Local stability analysis of the fixed points of the ODE}
In this section, we provide additional details on the local stability analysis of the ODE  for Example 1\label{sec:stability}.
We first its simplified ODE \eqref{eq:PCA-ODE} in the main text. Then, we provide the derivation of the local stability analysis when $d=1$, where the main results are summarized in Section \ref{sec:phase}.  Finally, we establish the proof of Claim 1 in the main text.
\subsection{Derive the reduced ODE for Example \ref{ex:PCA} when $\lambda\to\infty$ }
In Example  \ref{ex:PCA}, $f(x)=\t f(x) = x$. Plugging back to (\ref{eq:def-ggb}), we obtain that
\begin{equation}
\label{eq:def-ggb-WGAN}
\begin{aligned}
\vg_t&=\mat{\Lambda}\vq_t
\\
\t\vg_t&= \mat{\t\Lambda}\vr_t
\\
b_t&=\eta_{\text{T}}(\vq_t^\T\mat{\Lambda}\vq_t+\eta_{\text{T}}z_t)+\eta_{\text{G}}(\vr_t^\T\mat{\t\Lambda}\vr_t+\eta_{\text{G}}z_t).
\end{aligned}
\end{equation}
Correspondingly, ODE in (\ref{eq:ODE}) becomes:
\begin{equation} \label{eq:ODE-WGAN}
\begin{aligned}
\tfrac{\dif}{\dif t} \mP_t &=
\t\tau \big( \vq_t \t\vr_t^\T\mat{\t\Lambda} + \mP_t \mL_t  \big)
\\
\tfrac{\dif}{\dif t} \vq_t &= \tau \big( \mat{\Lambda}\vq_t - \mP_t \mat{\t\Lambda}\vr_t+ \vq_t h_t \big)
\\
\tfrac{\dif}{\dif t} \vr_t &=\tau \big( \mP_t^T \mat{\Lambda}\vq_t - \mS_t \mat{\t\Lambda}\vr_t + \vr_t h_t \big)
 + \t\tau \big( \mat{\t\Lambda}\vr_t + \mL_t\vr_t  \big)
\\
\tfrac{\dif}{\dif t} \mS_t &= \t\tau \big( \vr_t \vr_t^\T \mat{\t\Lambda}^\T + \mat{\t\Lambda}\vr_t \vr_t^\T
 + \mS_t \mL_t + \mL_t\mS_t\big)
\\
\tfrac{\dif}{\dif t} z_t &= 2\tau ( \vq_t^\T \mat{\Lambda}\vq_t - \vr_t^\T \mat{\t\Lambda}\vr_t  + z_t h_t ) \\
&\quad+ \tau^2 [\eta_{\text{T}}(\vq_t^\T\mat{\Lambda}\vq_t+z_t\eta_{\text{T}})+\eta_{\text{G}}(\vr_t^\T\mat{\t\Lambda}\vr_t+z_t\eta_{\text{G}})]
\end{aligned}
\end{equation}
The first four equations are exactly (\ref{eq:PCA-ODE}). From last two equations of (\ref{eq:ODE-WGAN}), by setting $\tfrac{\dif}{\dif t} \opdiag\{\mS_t\} = \mathbf{0}$, $\tfrac{\dif}{\dif t} z_t = 0$, $\opdiag(\mS_t)= \mI$ and $z_t=1$, we can get (\ref{eq:h-mL}).

\subsection{A complete study of all fixed points when $d=1$}
\label{sec:fiexed-d-1}

We next provide the local stability analysis of the fixed points of
the ODE \eqref{eq:PCA-ODE}. 
When $d=1$ and $\lambda\to\infty$, the macroscopic state is described by only 3 scalars, $P_t$, $q_t$ and $r_t$. The result
is summarized in Table 1. 
For the sake of simplicity,  we only consider the case $\Lambda=\t\Lambda$, and set $\eta_\text{T}=\eta_\text{G}=1$, but all analysis can be extended to general cases.

The fixed points are given by the condition $ \frac{d}{dt} P_{t}= \frac{d}{dt}q_{t}= \frac{d}{dt}r_{t}=0$.
From \eqref{eq:PCA-ODE}, we get

\begin{equation}
\begin{cases}
\ttau \Lambda r \left(q-r P \right)=0 \\
\tau \big[ \Lambda - \tau- \Lambda  \left(1+ \tfrac{ \tau}{2} \right)q^{2} \big]q- \tau \Lambda  \big[ P+ \left( \tfrac{ \tau}{2}-1 \right)rq \big]r=0 \\
\tau \Lambda  P q+ \bigl[ \Lambda ( \ttau- \tau)- \tau^{2} \bigr]r+ \Lambda  \big( \tau- \ttau- \tfrac{ \tau^{2}\}}{2} \big)r^{3}- \tau \Lambda  \big(1+ \tfrac{ \tau}{2} \big)rq^{2}=0,
\end{cases} \label{eq:fixed-eq}
\end{equation}
where $P, q, r$ are the stationary macroscopic state.
The local stability of a fixed point is identified by whether the
Jacobian matrix
\[
J( P,q,r) \bydef \begin{bmatrix} \frac{ \partial}{ \partial P}g_{1} & \frac{ \partial}{ \partial q}g_{1} & \frac{ \partial}{ \partial r}g_{1} \\
\frac{ \partial}{ \partial P}g_{3} & \frac{ \partial}{ \partial q}g_{3} & \frac{ \partial}{ \partial r}g_{3} \\
\frac{ \partial}{ \partial P}g_{5} & \frac{ \partial}{ \partial q}g_{5} & \frac{ \partial}{ \partial r}g_{5}
\end{bmatrix}
\]
has eigenvalue with non-negative real part or not, where $g_{1}= \ttau \Lambda r \left(q-r P \right)$,
$g_{2}= \tau \left[ \Lambda - \tau- \Lambda  \left(1+ \tfrac{ \tau}{2} \right)q^{2} \right]q- \tau \Lambda  \left[ P+ \left( \tfrac{ \tau}{2}-1 \right)rq \right]r$
and $g_{5}= \tau\Lambda  P q+ \bigl[ \Lambda ( \ttau- \tau)- \frac{(\eta_{\text{T}}+\eta_{\text{G}})\tau^{2}}{2} \bigr]r+ \Lambda  \left( \tau- \ttau- \tfrac{ \tau^{2}\eta_{\text{G}}}{2} \right)r^{3}- \tau \Lambda  \left(1+ \tfrac{ \tau \eta_{\text{T}}}{2} \right)rq^{2}$.

\subsubsection*{Type (1) fixed point at $ \protect P=q=r=0$}

It is easy to verify that $q=r=0$ and any $ P \in[-1,1]$ is a solution
of (\ref{eq:fixed-eq}), but we first consider $ P=0$.

The Jacobian at $ P=q=r=0$ is
\[
J(0,0,0)= \begin{bmatrix}0 & 0 & 0 \\
0 & \tau( \Lambda - \tau) & 0 \\
0 & 0 & \Lambda  \left( \ttau- \tau \right)- \tau^{2}
\end{bmatrix}.
\]
Thus, type (1) fixed point is stable if and only if

\[
\tau \geq \Lambda  \quad \text{ and } \quad \tfrac{ \ttau}{ \tau} \leq \tfrac{ \tau+ \Lambda }{\Lambda }.
\]

\subsubsection*{Type (2) fixed points at $ \protect P=q=0$, $r= \pm r^{ \ast} \protect \neq0$}

We first analyze when such fixed point exists and then study its local
stability.

If $ P=q=0$, the first two equations in (\ref{eq:fixed-eq}) trivially
hold. The third equation becomes

\[
\tau[ \Lambda (r^{2}-1)- \tfrac{ \tau}{2}( \Lambda r^{2}+2)]- \t{ \tau} \Lambda (r^{2}-1)=0.
\]
The solution is
\begin{equation}
r^{2}= \frac{ \tau- \t{ \tau}+ \tau^{2}/ \Lambda }{ \tau- \t{ \tau}- \tau^{2}/2}. \label{eq:r_equation}
\end{equation}
Since only the positive solution corresponds a fixed one. Thus, type
(2) fixed point exists if
\begin{align}
 & \tfrac{ \ttau}{ \tau} \leq1- \tfrac{ \tau}{2} \label{eq:r-exist-1} \\
\text{ or }\quad & \tfrac{ \ttau}{ \tau} \geq \tfrac{ \tau+ \Lambda }{ \Lambda }. \label{eq:r-exist-2}
\end{align}

Next, we investigate the local stability of this fixed point. The
Jacobian at $ \tilde{q}=q=0$ for a given $r$ is

\begin{equation}
J(0,0,r)= \begin{bmatrix}- \t{ \tau} \Lambda r^{2} & \t{ \tau} \Lambda r & 0 \\
- \tau \Lambda r & \tau( \Lambda - \tau)- \Lambda  \tau( \frac{ \tau}{2}-1)r^{2} & 0 \\
0 & 0 & 3r^{2} \Lambda ( \tau- \frac{ \tau^{2}}{2}- \t{ \tau})- \tau^{2}+ \Lambda ( \t{ \tau}- \tau)
\end{bmatrix} \label{eq:Jtildeq2}
\end{equation}

Plugging (\ref{eq:r_equation}) into $[J(0,0,r)]_{3,3}$ of (\ref{eq:Jtildeq2}),
then $[J(0,0,r)]_{3,3} \leq0$ implies
\[
\tfrac{ \ttau}{ \tau} \geq \tfrac{ \tau}{\Lambda }+1.
\]
 It indicates that the stationary points at the region (\ref{eq:r-exist-1})
are always unstable. Thus, we only need to consider the second region
specified by (\ref{eq:r-exist-2}).

For the upper-left $2 \times2$ sub-matrix of (\ref{eq:Jtildeq2}),
the eigenvalues are non-positive if and only if
\begin{align}
- \t{ \tau} \Lambda r^{2}+ \tau( \Lambda - \tau)- \Lambda  \tau( \tfrac{ \tau}{2}-1)r^{2} & \leq0 \label{eq:a+d1} \\
\tau+ \Lambda ( \tfrac{ \tau}{2}-1)r^{2}+ \Lambda - \Lambda  & \geq0. \label{eq:ad-bc1}
\end{align}
Plugging (\ref{eq:r_equation}) into (\ref{eq:a+d1}), we can get
\begin{equation}
\tfrac{ \ttau}{ \tau} \geq2. \label{eq:boundary_1}
\end{equation}
Plugging (\ref{eq:r_equation}) into (\ref{eq:ad-bc1}) and combining
(\ref{eq:r-exist-2}), we can get

\[
[ \tau+ \Lambda ( \tfrac{ \tau}{2}-1)] \tilde{ \tau} \geq \tau \Lambda ( \tfrac{ \tau}{2}-1).
\]
Solving this inequality implies that
\begin{equation}
\frac{ \tilde{ \tau}}{ \tau} \leq \frac{( \tfrac{ \tau}{2}-1) \Lambda }{( \frac{ \tau}{2}-1) \Lambda + \tau}, \text{ when } \tau< \frac{2 \Lambda }{ \Lambda +2} \label{eq:boundary_2}
\end{equation}
and
\begin{equation}
\frac{ \tilde{ \tau}}{ \tau} \geq \frac{( \frac{ \tau}{2}-1) \Lambda }{( \frac{ \tau}{2}-1) \Lambda + \tau}, \, \text{ when } \tau> \frac{2 \Lambda }{ \Lambda +2}. \label{eq:boundary_3}
\end{equation}
Note that (\ref{eq:boundary_3}) is included by (\ref{eq:boundary_1}),
as $ \frac{( \frac{ \tau}{2}-1) \Lambda }{( \frac{ \tau}{2}-1) \Lambda + \tau} \leq2$
when $ \tau> \frac{2 \Lambda }{ \Lambda +2}$.

Then, combining (\ref{eq:r-exist-2}), (\ref{eq:boundary_1}), and
(\ref{eq:boundary_2}) we obtain the stability region for $ \tilde{q}=q=0$,
\[
\frac{ \tilde{ \tau}}{ \tau} \geq1+ \frac{ \tau}{ \Lambda }, \, \frac{ \tilde{ \tau}}{ \tau} \geq2, \, \text{and } \frac{ \tilde{ \tau}}{ \tau} \leq \beta( \tau),
\]
where $\beta( \tau)$ is defined as
\[
\beta(\tau) \bydef \begin{cases}
\frac{( \frac{ \tau}{2}-1) \Lambda }{( \frac{ \tau}{2}-1) \Lambda + \tau} & \text{ if } \tau \leq \frac{2 \Lambda }{ \Lambda +2} \\
+ \infty & \text{ otherwise.}
\end{cases}
\]

\subsubsection*{Type (3) fixed points at $q=r=0$ and $ \protect \abs{ \protect P} \in(0,1]$}

As mentioned, we can check that $q=r=0$ and any $ P \in[-1,1]$ is
a solution of (\ref{eq:fixed-eq}). We next investigate the stable
region for the fixed point $ P= \pm1$ and $q=r=0$, which represents the perfect recovery state. 
For general $P$, we can analyze its fixed point similarly.

The Jacobian at $q=r=0$ for any given $ P$ is

\[
J(1,0,0)= \begin{bmatrix}0 & 0 & 0 \\
0 & \tau( \Lambda - \tau) & - \tau \Lambda  \\
0 & \tau \Lambda  & \Lambda  \left( \ttau- \tau \right)- \tau^{2}
\end{bmatrix}.
\]

In this case, $J(1,0,0)$ always has an eigenvalue $0$ and to calculate
the rest two eigenvalues, we only need to analyze the bottom-right
$2 \times2$ sub-matrix of $J( \tilde{q})$. The characteristic polynomial
of this sub-matrix is $f( \lambda)= \lambda^{2}-(a+d) \lambda+ad-bc,$where
$a= \tau( \Lambda - \tau),$ $b=- \tau \Lambda ,$ $c= \tau \Lambda ,$
and $d= \Lambda  \left( \ttau- \tau \right)- \tau^{2}$. The roots of
$f( \lambda)=0$ both have non-positive real part if and only if $a+d \leq0, \,ad-bc \geq0$,
which implies
\begin{equation}
\tfrac{ \ttau}{ \tau} \leq \tfrac{2 \tau}{ \Lambda }
\quad\text{ and }\quad
  \tfrac{ \t{ \tau}}{ \tau}( \tau- \Lambda )  \leq \tfrac{\tau^2}{\Lambda}.
 \label{eq:tq-1}
\end{equation}
Noting that when $\tau < \Lambda$, the second inequality always hold, and when $\tau > \Lambda$, 
$\tfrac{\tau^2}{\Lambda(\tau-\Lambda)} \geq 4$, we can combine the two inequalities in \eqref{eq:tq-1}
into  compact form
\[
\tfrac{ \tilde{ \tau}}{ \tau} \leq \min \{ \tfrac{2 \tau}{ \Lambda }, \max \{ \tfrac{ \tau^{2} }{\Lambda \abs{ \tau- \Lambda }},4 \} \}.
\]

The stable regions of the fixed points for $q=r=0$ and $ \abs{ P}<1$
can be derived in a similar way, which turns out to be a subset of
the stable region for $ P= \pm1$.

\subsubsection*{Type (4) fixed point at $ \protect P=r=0$ and $q \protect \neq0$.}
 From (\ref{eq:fixed-eq}), we know when at fixed point, $\tilde{q}=r=0$, then $q^2 = \frac{\Lambda -\tau}{\Lambda (1+\tau/2)}$, so $\tau$ must satisfy $\tau\leq \Lambda $. The corresponding Jacobian is:
\[
J(0,0,q)= \begin{bmatrix}0 & 0 & \ttau \Lambda  q \\
0 & \tau( \Lambda  - \tau)-3\tau\Lambda q^2(1+\frac{\tau}{2}) & 0 \\
\tau \Lambda  q & 0 & (\ttau-\tau)\Lambda  - \tau^2 - \tau\Lambda q^2(1+\frac{\tau}{2})
\end{bmatrix}.
\]
After plugging in $q^2 = \frac{\Lambda -\tau}{\Lambda (1+\tau/2)}$, we can obtain that the characteristic function $\det(\lambda \mI - J(0,0,q))$ is equal to:
\[
\det(\lambda \mI - J(0,0,q)) = [\lambda+2\tau(\Lambda -\tau)][\lambda(\lambda+(2\tau-\ttau)\Lambda )-\tau\ttau \Lambda^2 q^2]
\]
Clearly, $\det(\lambda \mI - J(0,0,q))=0$ has a non-negative root, so $J(0,0,q)$ always has a non-negative eigenvalue. This means type (4) fixed points are always unstable.

\subsubsection*{Type (5) fixed points at $ P,q,r \protect \neq0$}

The fixed points equation (\ref{eq:fixed-eq}) can also have solutions
that none of $ P$, $q$ and $r$ is zero. In what follows, we derive
the analytical expression of this type of solutions. It turns out
that there can be maximum 8 solutions, which are symmetric by flipping
the signs. We are unable to derive the analytical expression for their
stable region, but it can be computed numerically.

If $ P,q,r \neq0$, (\ref{eq:fixed-eq}) yields
\begin{align}
r & = \tfrac{ q}{P} \label{eq:fixed-r} \\
\Lambda - \tau- \Lambda (1+ \tfrac{ \tau}{2})q^{2}- \Lambda [ \tfrac{ P}{q}+( \tfrac{ \tau}{2}-1)r]r & =0 \label{eq:fixed-q} \\
\tau \Lambda  \t Pq+r \big[ \Lambda ( \ttau- \tau)- \tau^{2} \big]+r^{3} \Lambda  \big( \tau- \ttau- \tfrac{ \tau^{2}}{2} \big)-rq^{2} \tau \Lambda  \big(1+ \tfrac{ \tau}{2} \big) & =0. \label{eq:fixed-qtilde}
\end{align}
Plugging (\ref{eq:fixed-r}) into (\ref{eq:fixed-q}), we can get
\begin{equation}
q^{-2}=- \tfrac{1}{ \tau}[ \Lambda ( \tfrac{ \tau}{2}-1) P^{-2}+ \Lambda (1+ \tfrac{ \tau}{2})]. \label{eq:fixed-q1}
\end{equation}
Then combining (\ref{eq:fixed-r}) (\ref{eq:fixed-q1}) and (\ref{eq:fixed-qtilde}),
we can obtain the following equations:
\begin{equation}
A P^{-4}+B P^{-2}+C=0 \label{eq:q_tilde_equation}
\end{equation}
where $A= \Lambda ( \t{ \tau}- \tau)( \frac{1}{2}- \frac{1}{ \tau})+ \t{ \tau}$,
$B= \Lambda [ \frac{ \t{ \tau}}{ \tau}(1+ \frac{ \tau}{2})-2]$, $C= \Lambda (1+ \frac{ \tau}{2})$.
We can find that (\ref{eq:q_tilde_equation}) is an equation of $ P^{-2}$
with at most two roots. Combining (\ref{eq:fixed-q1}), we know there
are at most 2 solutions for the pair $(q^{-2}, \, P^{-2})$ and hence
there are at most 8 solutions for $(q, \, P,r)$, where $r= P/q$.
\subsection{Proof of Claim \ref{claim:stb}}
\begin{proof}[Proof of Claim \ref{claim:stb}]
We first compute
the Jacobian $
{\partial \big \{ \tfrac{ \dif}{\dif t} \mP_t, \tfrac{\dif}{\dif t} \vq_t, \tfrac{\dif}{\dif t} \vr_t, \big\} }/{ \partial\{ \mP_t, \vq_t, \vr_t \}}
$   of the ODE \eqref{eq:PCA-ODE} 
when  $\vq_t=\vr_t=\mat{0}$.
In the Jacobian, the  $d\times d$ matrix $\mP_t$ is considered as a $d^2$ vector. In fact, all elements in the Jacobian matrix related to $\mP_t$ are 0.
Specifically, the Jacobian for any $\mP$ and $\vq_t=\vr_t=\mat{0}$ is
\begin{equation} \label{eq:J}
\mJ(\mP)=
\begin{bmatrix}
\mat{0} & \mat{0} &\mat{0} \\
\mat{0} & \tau(\mat\Lambda - \tau \overline{\eta^2} \mI_d) & - \tau \mP \t{\mat\Lambda}  \\
\mat{0} &\tau \mP^\T \mat\Lambda & \t{\mat\Lambda}(\ttau - \tau) - \tau^2 \overline{\eta^2}
\end{bmatrix},
\end{equation}
where $ \overline{\eta^2}=(\eta_\text{T}^2 + \eta_\text{G}^2)/2$. 

When $\mP$ is diagonal, under a suitable column-row permutation, the $\mJ(\mP)$ in \eqref{eq:J} becomes a block diagonal matrix, where each non-zero block is a $2\times2$ matrix
\begin{equation} \label{eq:JPP}
\begin{bmatrix}
\tau([\mat{\Lambda}]_{\ell,\ell} - \tau \overline{\eta^2}) & -\tau [\mP]_{\ell,\ell} [\t{\mat{\Lambda}}]_{\ell,\ell}\\
\tau [\mP]_{\ell,\ell} [{\mat{\Lambda}}]_{\ell,\ell}
& [\t{\mat{\Lambda}}]_{\ell,\ell} (\ttau-\tau)-\tau^2 \overline{\eta^2}
\end{bmatrix}
\end{equation}
for $\ell=1,2,\ldots,d$. 
Intuitively, the above matrix is the Jacobian matrix of 
$
{\partial \{ \frac{\dif}{\dif t}[\vq_t]_\ell,\; \frac{\dif}{\dif t}[\vr_t]_\ell \}}/{\partial \{ [\vq_t]_\ell,\; [\vr_t]_\ell\}}
$, and the Jacobian 
$
{\partial \{ \frac{\dif}{\dif t}[\vq_t]_\ell,\; \frac{\dif}{\dif t}[\vr_t]_\ell \}}/{\partial \{ [\vq_t]_{\ell^\prime},\; [\vr_t]_{\ell^\prime}\}}
$ is zero for $\ell \neq \ell^\prime$.

Now the problem reduces into investigate eigenvalues of $n$ $2$-by-$2$ matrices.
For any given $\ell=1,2,\ldots,n$, we have studied this problem in Section \ref{sec:fiexed-d-1}
(type (1) and type (3) fixed points).

Specifically, the perfect recovery point $\mP=\mI$, $\vq=\vr=\mat{0}$ is stable if and only if
$\lambda_{\max}(\mJ(\mP))\leq0$, where $\mJ(\mP)$ is defined in \eqref{eq:J}. 
Similar to the analysis of the type (3) fixed points in Section \ref{sec:fiexed-d-1}, 
the condition that both eigenvalues of the matrix in \eqref{eq:JPP} is non-positive implies
\begin{align}
\tfrac{1}{2}([\mat{\Lambda}]_{\ell,\ell} - [\t{\mat{\Lambda}}]_{\ell,\ell} + \alpha [\t{\mat{\Lambda}}]_{\ell,\ell} ) &\leq \tau\overline{\eta^2} \label{eq:bb}
\\ \label{eq:alpha-1}
\text{ and } \quad
\a(\tau\overline{\eta^2} - [\mat{\Lambda}]_{\ell,\ell}) &\leq 
\tfrac{\tau\overline{\eta^2}}{[\t{\mat{\Lambda}}]_{\ell,\ell}} (\tau\overline{\eta^2} - [\mat{\Lambda}]_{\ell,\ell} + [\t{\mat{\Lambda}}]_{\ell,\ell}),
\end{align}
for all $\ell = 1,2,\ldots, n$. 
The inequality \eqref{eq:bb} is the first inequality of \eqref{eq:con} in  Claim 1 in the main text.

Next, we investigate the condition when the trivial fixed point of the origin $\mP=\mat{0}$ and $\vq=\vr=\mat{0}$ is unstable. Put $\mP=\mat{0}$ into \eqref{eq:JPP}, we get a diagonal matrix
\begin{equation*} 
\begin{bmatrix}
\tau([\mat{\Lambda}]_{\ell,\ell} - \tau \overline{\eta^2}) & 0\\
0
& [\t{\mat{\Lambda}}]_{\ell,\ell} (\ttau-\tau)-\tau^2 \overline{\eta^2}
\end{bmatrix}.
\end{equation*}
When any eigenvalue of the above matrices for $\ell = 1,2,\ldots,n$  is positive, this trivial fixed point will be unstable. A sufficient condition is the first eigenvalues of all matrices are positive:
\begin{equation} \label{eq:las}
 \tau \overline{\eta^2} < [\mat{\Lambda}]_{\ell,\ell}  
 \text{ for all } \ell = 1,2,\ldots,n.
\end{equation}
The above inequality is the second inequality of \eqref{eq:con} in the main text. 
In addition, \eqref{eq:las} implies \eqref{eq:alpha-1} hold as the left hand side of 
\eqref{eq:alpha-1} is negative. Now, we prove that  \eqref{eq:con}  is a sufficient condition that
the perfect fixed point is stable and the trivial fixed point is unstable.

\end{proof}

We further note that $\eqref{eq:con} $ is not a necessary condition. There may be a region that \eqref{eq:las} does not hold, but  the origin is still unstable, and the perfect recovery point is stable. Such region is hard to characterize analytically, and numerically, we found the training algorithms always converge to other bad fixed points (e.g. mode collapsing state, or a state that $\mP$ and $\vq$ are still zero, but $\vr$ is non-zero. The situation of the latter is similar to the noninfo-2 phase in the $d=1$ case, which converges to the type (2) fixed point). Further study on those bad fixed points will be established in future works under a more general model.  


\end{document}